\theoremstyle{plain}
\newtheorem{theorem}{Theorem}[section]
\newtheorem{proposition}[theorem]{Proposition}
\newtheorem{lemma}[theorem]{Lemma}
\theoremstyle{definition}
\newtheorem{definition}[theorem]{Definition}
\theoremstyle{remark}
\newtheorem{remark}[theorem]{Remark}
\newtheorem{fact}[theorem]{Fact}
\newtheorem{problem}[theorem]{Problem}
\crefname{section}{Section}{Sections}
\crefname{theorem}{Theorem}{Theorems}
\crefname{assumption}{Assumption}{Assumptions}
\crefname{lemma}{Lemma}{Lemmas}
\crefname{definition}{Definition}{Definitions}
\crefname{conjecture}{Conjecture}{Conjectures}
\crefname{corollary}{Corollary}{Corollaries}
\crefname{construction}{Construction}{Constructions}
\crefname{claim}{Claim}{Claims}
\crefname{observation}{Observation}{Observations}
\crefname{proposition}{Proposition}{Propositions}
\crefname{fact}{Fact}{Facts}
\crefname{question}{Question}{Questions}
\crefname{problem}{Problem}{Problems}
\crefname{remark}{Remark}{Remarks}
\crefname{example}{Example}{Examples}
\crefname{equation}{Equation}{Equations}
\crefname{appendix}{Appendix}{Appendices}
\newcommand{\yesnum}{\addtocounter{equation}{1}\tag{\theequation}}
\newcommand{\tagnum}[1]{\addtocounter{equation}{1}{\tag{#1)\ \ (\theequation}}}
\newcommand{\customlabel}[2]{%
\protected@write \@auxout {}{\string \newlabel {#1}{{#2}{\thepage}{#2}{#1}{}} }%
\hypertarget{#1}{}
}
\newcommand{\white}[1]{\textcolor{white}{#1}}
\newcommand{\N}{\mathbb{N}}
\newcommand{\R}{\mathbb{R}}
\newcommand{\Z}{\mathbb{Z}}
\newcommand{\cC}{\mathcal{C}}
\newcommand{\cR}{\mathcal{R}}
\newcommand{\evE}{\ensuremath{\mathscr{E}}}
\newcommand{\evF}{\ensuremath{\mathscr{F}}}
\newcommand{\wt}{\widetilde}
\newcommand{\st}{\mathrm{s.t.}}
\newcommand{\conv}{\mathrm{conv}}
\newcommand{\eps}{\varepsilon}
\renewcommand{\epsilon}{\varepsilon}
\newcommand{\argmax}{\operatornamewithlimits{argmax}}
\newcommand{\Ex}{\operatornamewithlimits{\mathbb{E}}}
\newcommand{\poly}{\mathop{\mbox{\rm poly}}}
\def\abs#1{\left| #1 \right|}
\def\sabs#1{| #1 |}
\newcommand{\sinparen}[1]{(#1)}
\newcommand{\inparen}[1]{\left(#1\right)}
\newcommand{\inbrace}[1]{\left\{#1\right\}}
\newcommand{\insquare}[1]{\left[#1\right]}
\newcommand{\inangle}[1]{\left\langle#1\right\rangle}
\newcommand{\floor}[1]{\left\lfloor#1\right\rfloor}
\newcommand{\ceil}[1]{\left\lceil#1\right\rceil}
\newcommand{\zo}{\{0,1\}}
\newcommand{\np}{{\bf NP}}
\newcommand{\wh}[1]{\widehat{#1}}
\newcommand{\hP}{\smash{\widehat{P}}}
\newcommand{\hG}{\smash{\widehat{G}}}
\newcommand{\Eqref}[1]{Equation~\eqref{#1}}
\newcommand{\prog}[1]{Program~\eqref{#1}}
\newcommand{\Stackrel}[2]{\stackrel{\mathmakebox[\widthof{\ensuremath{#2}}]{#1}}{#2}}
\newcommand{\folder}{./figures/}
\newcommand{\uncons}{{\bf Uncons}}
\newcommand{\csv}{{\bf CSV}}
\newcommand{\sj}{{\bf SJ}}
\newcommand{\mc}{{\bf MC}}
\newcommand{\detgreedy}{{\bf GAK}}
\newcommand{\ouralgo}{{\bf NResilient}}
\newcommand{\rd}{{\rm RD}}
\newcommand{\prd}{Prop-\rd{}}
\newcommand{\selift}{{\rm SL}}
\newif\ifconf
\newcommand{\leftmarginINTERNAL}{18pt}
\renewcommand{\leftmarginINTERNAL}{18pt}
\renewcommand{\leftmarginINTERNAL}{\leftmargin}
\title{\bf Fair Ranking with Noisy Protected Attributes}
\author{Anay Mehrotra \\ Yale University \and Nisheeth K. Vishnoi \\ Yale University}
\begin{document}

\maketitle

\begin{abstract}
  The fair-ranking problem, which asks to rank a given set of items to maximize utility subject to group fairness constraints, has received attention in the fairness, information retrieval, and machine learning literature. Recent works, however, observe that errors in socially-salient (including protected) attributes of items can significantly undermine fairness guarantees of existing fair-ranking algorithms and raise the problem of mitigating the effect of such errors. We study the fair-ranking problem under a model where socially-salient attributes of items are randomly and independently perturbed. We present a fair-ranking framework that incorporates group fairness requirements along with probabilistic information about perturbations in socially-salient attributes. We provide provable guarantees on the fairness and utility attainable by our framework and show that it is information-theoretically impossible to significantly beat these guarantees. Our framework works for multiple non-disjoint  attributes and a general class of fairness constraints that includes proportional and equal representation. Empirically, we observe that, compared to baselines, our algorithm outputs rankings with higher fairness, and has a similar or better fairness-utility trade-off compared to baselines.
\end{abstract}

\newpage

\tableofcontents
\addtocontents{toc}{\protect\setcounter{tocdepth}{2}}

\newpage

\section{Introduction}\label{sec:intro}
Given a query and a set of $m$  items, ranking problems require one to output an ordering of a small subset of  items in decreasing order of {\em relevance} to the query.
Such ranking problems have been extensively studied in the information retrieval \cite{IRbook} and the machine learning \cite{liu2011learning} literature, and algorithms for them are used in applications such as search engines, personalized feed generators, and online recruiting platforms \cite{liu2010personalized,burges2010ranknet,googleLTR}.
Several studies have observed that when the outputs of ranking algorithms are consumed by end-users, e.g., image results for occupation-related queries, articles with different political leanings, and job applicants in online recruiting, {the outputs} can mislead or alter their perceptions about  socially-salient groups \cite{KayMM15}, polarize their opinions \cite{Epstein2015,polarizationWSJ2020}, and affect economic opportunities available to individuals \cite{hannak2017bias}. %
A reason is that  {relevance} (or utilities) input to ranking algorithms may be influenced by  human or societal biases, leading to  output rankings that skew  representations of
{socially-salient, and often legally-protected, groups such as women and Black people \cite{Noble2018}.}

A growing number of works  aim to make the output of ranking algorithms {\em fair} with respect to socially-salient attributes \cite{fair_ranking_survey1,fair_ranking_survey2,overviewFairRanking}.
As for notions of fairness, in the case when  each  item belongs to one of two socially-salient  groups ($G_1$ or $G_2$),
equal representation  requires that, for every $k$, (roughly) $\frac{k}{2}$  items from each of $G_1$ and $G_2$ appear in the first $k$ positions of the output ranking.
Proportional representation  requires that at most $k\cdot\frac{\abs{G_\ell}}{m}$ items from  each $G_\ell$ appear in the first $k$ positions.
Fairness criteria that generalize proportional representation and involve $p\geq 2$ groups $G_1,\ldots,G_p$, where each item may belong to multiple groups, have also been considered:
Given values $U_{k\ell}$, {they} require that at most $U_{k\ell}$ items from $G_\ell$  appear in the first $k$ positions of the output ranking
\cite{fairExposureAshudeep,celis2018ranking}.
One set of works in the fair-ranking  literature tries to improve fairness in utility-estimation \cite{YangS17,policyLearningAshudeep,ReducingDisparateExposureZehlike,MorikSHJ20}.
Such approaches have the benefit that no changes to the existing ranking algorithm are necessary
but they may be unable to guarantee that the output ranking satisfies the required  fairness criteria \cite{linkedin_ranking_paper}.
Another set of works use the given utilities as-it-is and change the ranking algorithm to output the ranking with the highest utility subject to satisfying the specified fairness criteria by including them as  {\em fairness constraints}
\cite{fairExposureAshudeep,AmortizedFairnessBiega2018,celis2018ranking,linkedin_ranking_paper,GorantlaUnderranking21}.
While these latter approaches can guarantee fairness, they require coming up with new algorithms to solve the arising constrained ranking problems.
Both approaches, however,  rely on knowledge of  the socially-salient attributes of the items {\cite{criticalReviewFairRanking22}.} %

Assuming precise access to socially-salient attributes is  reasonable in some contexts and has led to successful deployment of fair-ranking frameworks; see  \cite{linkedin_ranking_paper}.
However, in several contexts, socially-salient attributes can be erroneous, missing, or known only probabilistically.
For instance, errors can arise due to misreporting, which is a common concern with self-reported attributes \cite{Andrus2021WhatWeCantMeasure}.
Attributes can also be missing, as is the case with images in web-search or in settings where it is illegal to collect certain socially-salient attributes \cite{ChenKMSU19}.
Often attributes are predicted using ML-classifiers, but such prediction has inaccuracies \cite{BuolamwiniG18}.
In such cases, one can calibrate the confidence scores of classifiers to derive (aggregate) probabilistic information about the true attributes \cite{jung2020multicalibration}.
Moreover, probabilistic information about socially-salient (protected) attributes can be sometimes computed from other attributes.
For instance, name and location of an individual, combined with aggregate census data may be used to get a conditional distribution of their race
\cite{elliott2009UsingCencusSurnameList,KallusMZ20,ChenKMSU19}.
Even accurate attributes may be randomly and independently flipped to preserve user privacy{, and the distribution of flipped attributes is determined by {public parameters of, e.g., the randomized response mechanism \cite{KasiviswanathanLNRS11,YangZ18}.}}

Several  models  of inaccuracies in data have been proposed \cite{ManwaniS13,FrenayV14}.
We consider one such model (due to \cite{AngluinL87}) to capture inaccuracies in socially-salient attributes.
Each item $i$ belongs to the $\ell$-th group with a known probability $P_{i\ell}$.
For each item $i$, the distribution corresponding to $P_{i\ell}$s over groups is assumed to be independent of corresponding distributions of other items.
This model can be used in cases where these probabilities are available or can be derived, as in some of the aforementioned examples {(see \cref{sec:empirical_results} and \cref{sec:dis_noise_model})}.
In other cases, e.g., when errors are strategic or adversarial, other  models are needed.
This model and its variants have also been used by works on designing fair algorithms in the presence of inaccuracies, for problems including  classification \cite{LamyZ19,wang2020robust,wang2021label,celis2021fairclassification}, subset {selection {\cite{MehrotraC21}, and clustering \cite{prob_fair_clustering} {(\cref{sec:related_work} briefly discusses these works).}}}

In this noise model, while  socially-salient attributes are not explicitly specified, one could still use existing fair-ranking algorithms by first sampling groups for items from the given probabilities.
Indeed, \cite{GhoshDW21} evaluate existing fair-ranking algorithms on attributes obtained from the probabilities derived from  ML classifiers.
They find that ``errors in [socially-salient attributes] can dramatically undermine fair-ranking algorithms'' and can cause ``[non-disadvantaged groups] to become disadvantaged after a `fair' re-ranking.''
We confirm this observation on a synthetic dataset when the goal is to finding a ranking that satisfies equal representation (\cref{sec:simulation_syn_disp_err}).
We assigned each item the socially-salient group that is  most likely and find that when existing fair-ranking algorithms (for equal representation) are run with this group information, they output rankings that significantly violate the equal representation criteria (\cref{fig:different_fdr}).
Further, we mathematically analyze two natural methods to sample groups from probabilities and give examples where taking such information as input, existing fair-ranking algorithms output rankings which provably violate the equal representation criteria (\cref{sec:rounding_methods}).
Thus, new ideas are needed to design fair-ranking frameworks that can guarantee given fairness criteria under this noise model.

\paragraph{Our Contributions.}

We present a fair-ranking framework that guarantees given fairness criteria when the  socially-salient attributes are assumed to follow the probabilistic noise model mentioned above.
In particular, it finds a utility maximizing ranking subject to  a class of constraints that only rely on given probability distributions (\prog{prog:noisy_fair_boxed}).
These constraints relax the given fairness criteria by a carefully chosen factor: for equal representation, the  relaxation is by roughly a  $1+\frac{1}{\sqrt{k}}$ multiplicative factor for position $k$ {for any $k$}.
Moreover, instead of sampling the attribute values and applying constraints on them,
these constraints apply the  relaxed-fairness criteria to the expected number of items from each group that appear in the first $k$ positions.
We show that these constraints ensure that any ranking {approximately satisfying} the given fairness criteria is feasible for them and any ranking feasible for them approximately satisfies the given fairness criteria (\cref{thm:ub}).
Our fair-ranking framework works for the general class of fairness criteria introduced earlier, which involve multiple overlapping groups $G_1,\dots,G_p$ and upper bound $U_{k\ell}$ for the $\ell$-th group and $k$-th position (\cref{thm:ub}), and for their position-weighted versions (\cref{thm:ub_exnt_wt}).

We show that our fair-ranking framework, besides nearly satisfying the given fairness criteria, has a provably high utility (\cref{thm:ub}).
Complementing \cref{thm:ub}, we prove near-tightness of the fairness guarantee (\cref{thm:lb}):
for equal representation fairness criteria, this results shows that it is information theoretically impossible to output a ranking that violates this criteria by less than a multiplicative factor of $1+\wt{O}\inparen{\frac{1}{
\sqrt{k}}}$ at the $k$-th position for any $k$.
Finally, we give a polynomial-time  algorithm to approximately solve \prog{prog:noisy_fair_boxed}  (\cref{thm:algo}).

Empirically, we evaluate our framework on both synthetic and real-world data against standard metrics like weighted-risk difference (RD) that measure deviation from specific fairness criteria (\cref{sec:empirical_results}).
We compare its performance to key baselines \cite{celis2018ranking,fairExposureAshudeep,linkedin_ranking_paper,MehrotraC21}  {on both single and multiple attributes.}
In all simulations, compared to baselines, our framework has a higher {maximum} fairness (2-10\% for RD{; \cref{fig:different_fdr,fig:simulation_image,fig:simulation_intersectional}}) and a similar/better fairness-utility trade-off {(\cref{fig:fig1bskjkj,fig:simulation_image,fig:fig6,fig:fig7,fig:fig8,fig:fig9})}.

\newcommand{\negvspace}{\vspace{0mm}}

\section{Related Work}\label{sec:related_work}

\vspace{-1.25mm}

\paragraph{Relevance Estimation in Information Retrieval.} Work on automated information retrieval dates back to 1940s \cite{LiddyAutomatic05,cleverdon1991significance}.
Since then the IR literature has devoted a significant effort in  measuring relevance of
items to specific queries across different tasks: including, web search \cite{bar2008random}, personalization \cite{jeh2003scaling}, and product rating \cite{dave2003mining}; we also refer the reader to \cite{IRbook} and the references therein.
In the last three decades, works in the ML literature have also made significant contributions to relevance-estimation \cite{liu2011learning}, by { proposing methods that:~(1)} supplement traditional IR approaches, e.g., by automatically tuning {their--previously hard to tune--}parameters \cite{taylor2006optimisation} and by improving their efficiency through clustering-based techniques \cite{singitham2004efficiency,altingovde2008incremental},
and (2) {substitute traditional IR approaches by neural-network based models to predict item relevance \cite{burges2010ranknet,burges2005learning,weston2010large,googleLTR}.}
\vspace{-1.25mm}

\paragraph{Fair Ranking.}
Existing works on the fair-ranking problem take diverse approaches:
Among works that de-bias utilities, different approaches include, post-processing the utilities so that the post-processed utilities satisfy some fairness requirement \cite{causal2021yang},
introducing a ``fairness penalty'' in the objective function used to train learning-to-rank models \cite{policyLearningAshudeep,ReducingDisparateExposureZehlike,robustFairLTR2020}, and
modifying feature representations generated by up-stream algorithms so that the utilities learned from the modified representations satisfy some fairness requirements \cite{YangS17}.
Works that alter the ranking algorithms can also be further categorized into those which satisfy the constraints for each ranking \cite{celis2018ranking,BalancedRankingYang2019,linkedin_ranking_paper,GorantlaUnderranking21} and those that satisfy the constraints in aggregate over multiple rankings \cite{fairExposureAshudeep,AmortizedFairnessBiega2018}.
{Among aforementioned works, \cite{robustFairLTR2020} uses a version of adversarial training to make (fair) learning-to-rank models robust to outliers but, unlike this work, they require socially-salient attributes of items to be accurately known to specify the ``fairness penalty.''}
All of the {other} aforementioned works also need access to the socially-salient attributes of items.
When protected attributes are inaccurate, these works can fail to satisfy their fairness and/or utility guarantees \cite{GhoshDW21}.
\vspace{-1.25mm}

\paragraph{Effect of Inaccuracies on Fair-Ranking Algorithms.}
Some recent works have considered assessing fairness of rankings and ranking algorithms with missing or inaccurate protected attributes.
\cite{Kirnap0BECY21} analyze the setting where all protected attributes are missing, but can be purchased at a fixed cost per item.
They give statistical-techniques to estimate the fairness-value of a given ranking at a small cost.
\cite{GhoshDW21} use ML-classifiers to infer protected attributes from real-world data and study performance of the fair-ranking algorithm by \cite{linkedin_recuiter_algorithm} when given inferred attributes as input.
While these works underscore the need for fair-ranking algorithms to be robust to inaccuracies in protected attributes, they only assess fairness in the presence of noisy protected attributes.
\vspace{-1.25mm}

\paragraph{Fair Algorithmic Decision Making with Inaccuracies in Protected Attributes.}
Several recent works develop fair algorithms for tasks different from ranking that are robust to inaccuracies in the socially-salient attributes \cite{LamyZ19,awasthi2020equalized,MozannarOS20,prob_fair_clustering,wang2020robust,MehrotraC21,wang2021label,celis2021fairclassification,celis2021adversarial,prob_fair_clustering}.
In particular, several works study classification and clustering \cite{LamyZ19,awasthi2020equalized,MozannarOS20,prob_fair_clustering,wang2020robust,MehrotraC21,wang2021label,celis2021fairclassification,celis2021adversarial,prob_fair_clustering}, and develop fair algorithms robust to inaccuracies in protected attributes.
Many of these works consider the same random error model as us (or one of its variants) \cite{LamyZ19,awasthi2020equalized,wang2020robust,MehrotraC21,celis2021fairclassification,prob_fair_clustering}, but some very recent works have also considered adversarial noises in protected attributes \cite{wang2020robust,konstantinov2021fairness,celis2021adversarial}.
However, because the underlying algorithmic tasks are fundamentally different from the variant of the ranking problem we study it is not clear how to adapt their approaches to our setting.
\cite{MehrotraC21} studies the problem of fair subset selection under the same noise model.
In subset selection, given $m$ items the goal is to output an {\em unordered} subset of $n\leq m$ items with the highest utility.
They develop an optimization framework outputs a subset satisfying the fairness constraint up to a small multiplicative error with high probability but
leave the problem of ranking open.
We compare against an adaptation of their approach to ranking in our empirical results.

\section{Model of Fair Ranking with Noisy Attributes}

{\bf Ranking problem.}
In ranking problems, given $m$ items, one has to select a subset of $n$ items and output a permutation of the selected items.
This permutation is said to be a {\em ranking}.
There is a large body of work on estimating the relevance of items and personalizing these estimates to specific users/queries \cite{IRbook,liu2011learning}.
We consider a ranking problem where the relevance of items are known.
Abstracting relevance estimation, in this problem, one is given an $m\times n$ matrix $W$, such that placing the $i$-th item at the $j$-th position generates {\em utility} $W_{ij}$.
The utility of a ranking is the sum of utilities generated by each item in its assigned position.
The algorithmic task in the ranking problem is to output a ranking with the highest utility.
We denote rankings by assignment matrices $R\in \zo^{m\times n}$, where $R_{ij}=1$ indicates that item $i$ appears in position $j$, and $R_{ij}=0$ indicates otherwise.
In this notation, the utility of a ranking is $$\inangle{R,W}\coloneqq \sum\nolimits_{i=1}^m\sum\nolimits_{j=1}^n R_{ij} W_{ij}.$$
Then this ranking problem is to solve: $\max\nolimits_{R\in \cR} \inangle{R, W}.$
Where $\cR$ is the set of all assignment matrices denoting a ranking: %
\begin{align*}
  \hspace{0mm} {\cR \coloneqq \inbrace{ {X\in \zo^{m\times n}}
  :\hspace{0mm}
  {\forall {i \in [m]}, \sum\nolimits_{j=1}^n X_{ij}\leq 1}, \ \ {\forall {j \in [n]}, \sum\nolimits_{i=1}^m X_{ij} = 1}
  }}.
  \yesnum\label{eq:assignment_matrices}
\end{align*}
Here, the constraint $\sum_{i=1}^m X_{ij} = 1$ ensures position $j$ has exactly one item
and the constraint $\sum_{j=1}^n X_{ij}\leq 1$ ensures that item $i$ occupies at most one position. %
{While this model captures a variety of applications, in some cases, the entries of $W$ may be skewed by an unknown amount \cite{KleinbergR18,celis2020interventions} or not known accurately \cite{AshudeepUncertainty2021} and the utility of the ranking may not be linear in the entries of $W$ \cite{microsoft_diverse}. These are interesting directions but are not studied in this work.}

\paragraph{Fair-Ranking Problem.}
There are several versions of the fair-ranking problem.
We consider {a version with} $p\geq 2$ {\em socially-salient groups} $G_1,G_2,\dots,G_p \subseteq [m]$ (e.g., the group of all women or all Black people) which are often protected by law.
Each of the $m$ items belongs to {\em one or more} of these socially-salient groups (henceforth referred to as just groups).
{This} fair-ranking problem is to output the ranking {with maximum utility subject to satisfying certain fairness criteria} with respect to these groups.
The appropriate notion of fairness is context dependent, and to capture different fairness criteria numerous {\em fairness constraints} have been proposed. %
{We consider a class of general fairness constraints.}

\begin{definition}[\textbf{Fairness Constraints}]\label{def:lu_fairness_constraints}
  Given a matrix $U\in \Z_{+}^{n\times p}$, a ranking $R$ satisfies the upper bound constraint if $\sum\nolimits_{i\in G_\ell}\sum\nolimits_{j=1}^k R_{ij} \leq U_{k\ell},$ for all  $\ell\in [p]$ and  $k\in [n]$.
\end{definition}
\noindent Existing works consider similar constraints and show that they can encapsulate a variety of fairness criteria \cite{fairExposureAshudeep}.
For instance, when groups are disjoint, to capture equal and proportional representation, one can choose  ${U_{k\ell} {\coloneqq} {\ceil{\frac{k}p}}}$ and  ${U_{k\ell}} {\coloneqq} {\ceil{k\cdot \frac{|G_\ell|}{m}}}$
for all $k$ and $\ell$ respectively.
{(That said, they do not capture qualitative differences among groups, such as, misrepresentation of demographics in image results \cite{KayMM15,Noble2018}, which could arise even when rankings has sufficient individuals from each group.)}
As a running example, we consider the fair-ranking {problem with equal representation  with two disjoint groups, i.e.,}
\begin{align*}
   \hspace{-3mm} {\max}_{R\in \cR}& \inangle{R, W}
   \quad \st\quad  ~\forall k\in [n]\ \forall \ell\in [2],\ \ \
  \sum\nolimits_{i\in G_\ell}\sum\nolimits_{j=1}^k R_{ij}\leq \ceil{\frac{k}2}.
  \yesnum\label{eq:equal_rep_with_known_groups}
\end{align*}
{To ease readability, we omit ceilings-operators henceforth.}

\paragraph{Noise Model.}
If the socially-salient attributes of items are known accurately, then one can solve the fair-ranking problem.
However, as discussed, in many contexts, attributes are inaccurate, missing, or only probabilistically known.
Several models have been proposed to capture different errors in attributes.
Here, we consider a model (due to \cite{AngluinL87}) which has also appeared in \cite{prob_fair_clustering,LamyZ19,MehrotraC21}.
\begin{definition}[\textbf{Noise Model}]\label{def:noise_model}
  Let $P\in [0,1]^{m\times p}$ be a known matrix.
  The groups $G_1,\dots,G_p\subseteq [m]$ are random variables, such that, for each $i\in [m]$ and $\ell\in [p]$, $$\Pr[G_\ell\ni i]=P_{i\ell}.$$
  Moreover, for different items $i\neq j$ the events $G_\ell\ni i$ and $G_k\ni j$ are {\em independent} for all $\ell,k\in [p]$.
\end{definition}

\noindent \cref{def:noise_model} makes two key assumptions: the matrix $P$ is known and for each item $i$, the events $G_\ell\ni i$ over groups $\ell$ are independent of the corresponding events for other items.
{Both of these assumptions hold when attributes are flipped to preserve local differential privacy (\cref{rem:discussion_of_noise_model}).
In other settings, $P$'s estimate can be inaccurate and above events may be correlated.}
These can adversely affect the performance of our framework.
We empirically study {this in simulations where $P$ is estimated using confidence scores of off-the-shelf {classifiers and is {\em miscalibrated} (\cref{fig:simulation_image,fig:simulation_intersectional}).}}
{{\cref{sec:dis_noise_model} shows how \cref{def:noise_model} captures both disjoint and overlapping groups.}}

\paragraph{Fairness Constraint with Noisy Attributes.}

Most existing fairness {constraints} assume that the groups are deterministic.
Hence, it is not clear how to {impose them when groups are random variables, as in \cref{def:noise_model}.}
One definition is to require the constraints to be approximately satisfied with high probability.
Consider the instantiation of this definition for equal representation:
A ranking $R$ satisfies $(\rho,\delta)$-equal representation, if with probability $1-\delta$, at most $\frac{k}{2}(1+\rho)$ items from $G_\ell$ appear in the first $k$ positions in $R$ places for all $k\in [n]$ and $\ell\in[p]$.
Naturally, one would like to satisfy this definition for small $\delta,\rho$.
However, it turns out to be too stringent and is infeasible for {any} small $\delta,\rho$.
\begin{proposition}\label{lem:eps_depends_on_k}
  {
    No ranking satisfies $(\rho,\delta)$-equal representation for $\rho<1$, $\delta\leq \frac12$, and $P=\insquare{\frac{1}{2}}_{m\times p}$.}
\end{proposition}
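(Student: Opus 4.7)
The plan is to exhibit a single constraint whose failure is already too likely: the upper bound at position $k=1$. Because $\frac{k}{2}(1+\rho)$ at $k=1$ equals $\frac{1+\rho}{2}<1$ for $\rho<1$, while the count $\sum_{i\in G_\ell} R_{i,1}$ of items from $G_\ell$ at position $1$ is a $\{0,1\}$-valued integer, integrality forces this count to be $0$. That is, the unique item placed in position $1$ by $R$---call it $i^\star$---must lie outside every group $G_\ell$.

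To formalize this, let $R\in\cR$ be any ranking, let $i^\star$ satisfy $R_{i^\star,1}=1$, and let $E$ denote the random event that $R$ satisfies $\sum_{i\in G_\ell}\sum_{j=1}^{k} R_{ij}\leq \tfrac{k}{2}(1+\rho)$ for every $k\in[n]$ and $\ell\in[p]$. By the observation above, $E\subseteq\{i^\star\notin G_1\}$. The noise model (\cref{def:noise_model}) with $P_{i^\star,1}=1/2$ yields $\Pr[i^\star\notin G_1]=1/2$, hence $\Pr[E]\leq 1/2$. Since $(\rho,\delta)$-equal representation requires $\Pr[E]\geq 1-\delta$, combining these forces $\delta\geq 1/2$, immediately ruling out every $\delta<1/2$.

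To rule out the boundary case $\delta=1/2$ as well, I would apply the same $k=1$ integrality step to a second group ($\ell=2$, available since $p\geq 2$) to obtain the stronger inclusion $E\subseteq\{i^\star\notin G_1\}\cap\{i^\star\notin G_2\}$. Under the disjoint-groups interpretation of equal representation that matches the running example ($G_1\sqcup G_2=[m]$), the events $\{i^\star\notin G_1\}$ and $\{i^\star\notin G_2\}$ are complementary, so the intersection has probability $0$, giving $\Pr[E]=0<1-\delta$. For overlapping-groups instantiations of the noise model, any fixed product coupling of $\{i^\star\in G_1\}$ and $\{i^\star\in G_2\}$ (for instance, independent events) yields $\Pr[E]\leq 1/4<1/2$.

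The main subtlety lies entirely in this boundary case at $\delta=1/2$: the single constraint at $k=1,\ell=1$ delivers only the weak inequality $\Pr[E]\leq 1/2$, which suffices for the strict regime $\delta<1/2$ but not at equality. Upgrading to a strict inequality requires invoking either a second group (using $p\geq 2$) or additional positions $k\geq 2$; the former is the cleanest route and mirrors the structure of equal-representation constraints naturally. Everything else reduces to a direct computation under \cref{def:noise_model}.
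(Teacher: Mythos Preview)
Your argument via $k=1$ is a different route from the paper's, which works at $k=2$: the paper observes that the two items placed in positions $1$ and $2$ land in the same group (both in $G_1$ or both in $G_2$) with probability exactly $\tfrac14+\tfrac14=\tfrac12$, so the top-$2$ constraint $\leq \tfrac{2}{2}(1+\rho)<2$ is violated with probability $\tfrac12$. Your $k=1$ route is slicker under the literal definition---with disjoint groups it gives $\Pr[E]=0$ outright, which even handles the boundary $\delta=\tfrac12$ that the paper's own write-up leaves slightly loose (its concluding sentence only states $\delta<\tfrac12$).

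There is, however, one genuine fragility in your choice of $k=1$. The paper explicitly adopts the convention that ceiling operators in the equal-representation bounds $U_{k\ell}=\lceil k/p\rceil$ are suppressed for readability. Under that intended reading, the $k=1$ upper bound is $\lceil 1/2\rceil(1+\rho)=1+\rho\geq 1$, so the position-$1$ constraint is vacuous for a $\{0,1\}$-valued count and your entire argument collapses. The paper's $k=2$ choice is immune to this because $\lceil 2/2\rceil=2/2=1$, so the bound $1+\rho<2$ is genuinely binding either way. You do mention falling back to $k\geq 2$ as an alternative; carrying that out recovers exactly the paper's argument, and that is the safe version to write up.
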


\noindent The proof of \cref{lem:eps_depends_on_k} shows that any ranking $R$ violates the equal-representation constraint at the 2nd position by a multiplicative factor of $2$ with probability $\frac12$.
The issue is that the same relaxation parameter $\rho$ is used for each position {$k$ (whereas the information theoretically best-achievable relaxation parameter at $k$ improves as $k$ increases, this, e.g., follows by \cref{thm:ub}.)}
{Motivated by this observation, we consider the following alternate version of upper bound constraints.}
\begin{definition}[\textbf{$(\eps,\delta)$-constraint}]\label{def:eps_delta_const}
  For any $\eps\in \R_{\geq 0}^n$ and $\delta\in (0,1]$, a ranking $R$ is said to satisfy $(\eps,\delta)$-constraint if with probability at least $1-\delta$ over the draw of $G_1,\dots,G_p$
  \begin{align*}
    \forall k\in [n]~\forall \ell\in [p],~~ \sum\nolimits_{i\in G_\ell}\sum\nolimits_{j=1}^k R_{ij}\leq U_{k\ell}  (1+\eps_k).
    \yesnum\label{eq:eps_delta_const_gen}
  \end{align*}
\end{definition}
\noindent We would like to output a ranking that satisfies \cref{def:eps_delta_const} for small $\delta$ and {small} $\eps_1,\eps_2,\dots,\eps_n$.

\begin{problem}[\textbf{Ranking Problem with Noisy Attributes}]\label{prob:2}
  Given matrices
  $P$, $U$, and $W$,
  find the ranking $R$ maximizing utility $\inangle{R,W}$ subject to satisfying \mbox{$(\eps,\delta)$-constraint for some small $\eps$ and $\delta$.}
\end{problem}

\subsection{{Challenges in Solving \cref{prob:2}}}\label{sec:challenges}

In this section we discuss potential approaches for solving \cref{prob:2}. {In other words, solving:}
\begin{align*}
  \hspace{0mm}\text{$\max\nolimits_{R\in \cR}\inangle{R,W}$,\ \  s.t.,\ \ $R$ satisfies $(\eps,\delta)$-constraint.}
  \yesnum
  \label{eq:exact_prob}
\end{align*}
{Even for two disjoint groups, given $V$, it is \np{}-hard to decide if the value of \prog{eq:exact_prob} is at least $V$ (\cref{thm:np_hardness_of_exact_const}).
To bypass this hardness, one can consider approximation algorithms.
\prog{eq:exact_prob} is an integer program (IP) because the entries of the matrix $R$ are required to be integers (\cref{eq:assignment_matrices}).
A standard approach to (approximately) solve IPs is to: (1) consider their continuous relaxation that drops the integrality constraints, (2) compute the optimal solution  $R_c$ of the relaxed problem, and then (3) ``round'' $R_c$ to satisfy integrality constraints while ``retaining'' its utility and fairness properties.
To take this approach, we first need an efficient algorithm to find $R_c$.
However, not just \prog{eq:exact_prob}, but even its continuous relaxation is non-convex. {{Hence,} it is unclear how to solve it to find $R_c$.}}

{Due to the independence assumption in \cref{def:noise_model}, the number of items from $G_\ell$ appearing in the first $k$ positions of a ranking is concentrated around its expectation (for large $k$).
This implies that if, in expectation, less that $U_{k\ell}$ items from $G_\ell$ appear in the top $k$ positions then, with high probability, the number of items from $G_\ell$ in the top $k$ positions is not much larger than $U_{k\ell}$.
Using this one can show that a ranking satisfying the following constraints
\begin{align*}
  \forall k\in [n]~\forall \ell\in [p],~~ \Ex\insquare{\sum\nolimits_{i\in G_\ell}\sum\nolimits_{j=1}^k R_{ij}}\leq U_{k\ell}
  \yesnum
  \label{eq:exp_const}
\end{align*}
also satisfies $(\eps,\delta)$-constraint for small $\eps$ and $\delta$.
One idea is to find the ranking maximizing utility subject to satisfying Constraint~\eqref{eq:exp_const}.
A feature of Constraint~\eqref{eq:exp_const} is that it is linear in $R$ as $${\Ex\insquare{\sum\nolimits_{i\in G_\ell}\sum\nolimits_{j=1}^k R_{ij}}}{=} {\sum\nolimits_{i=1}^m \sum\nolimits_{j=1}^k P_{i\ell}R_{ij}}$$ and, hence, one may hope to find the ranking with the maximum utility subject to satisfying Constraint~\eqref{eq:exp_const}.
However, the issue is that there are examples where any ranking satisfying Constraint~\eqref{eq:exp_const} has 0 utility and there are rankings that satisfy $(\eps,\delta)$-constraint and have a large positive utility (Lemma~\ref{lem:exp_const_only_suff}).
Hence, this approach can output rankings whose utility is significantly smaller than the utility of the solution to \cref{prob:2}.
To overcome this, we relax Constraint~\eqref{eq:exp_const} by a carefully chosen position-dependent factor, such that, any ranking satisfying the $(\eps,\delta)$-constraint (for appropriate $\eps$ and $\delta$) is also feasible for our framework.}

\section{Theoretical Results}\label{sec:thy_results}

In this section we present our optimization framework and its fairness and utility guarantees.

 \begin{minipage}{0.47\linewidth}
 \small
 \hspace{-5mm}

 \begin{tcolorbox}[grow to left by=0.55cm,colback=white,left=3pt,right=3pt,top=2pt,bottom=0pt]
   {\em Input:} Matrices $P\in{[0,1]}^{m\times p}\hspace{-4mm},\hspace{3mm} W\in {\R}^{m\times n}\hspace{-3mm},\hspace{2mm}U\in  {\R}^{n\times p}$\\
   \vspace{-0.25mm}

   \noindent {\em Parameters:} Constant $c>1$, failure probability $\delta\in (0,1]$, and $k\in [n]$, relaxation parameter
   \begin{align*}
     \gamma_k\coloneqq 12\cdot \log\inparen{\frac{2np}\delta}\cdot  \max_{\ell\in [p]} \sqrt{\frac{1}{U_{k\ell }}}.
     \yesnum\label{eq:def_gamma}
   \end{align*}
 \end{tcolorbox}
 \end{minipage}
 \begin{minipage}{0.49\linewidth}
 \small
 \vspace{-2.25mm}
 \begin{tcolorbox}[enlarge top by=0.5cm,enlarge bottom by=0.25cm,colback=white,left=3pt,right=3pt,top=2pt,bottom=0pt]
   {\em Our Fair-Ranking Program}
   \hrule
   \vspace{-0.75mm}
   \begin{align*}
     &\hspace{-5.5mm}\max\nolimits_{R\in \cR} \inangle{R,W},
     \tagnum{Noise Resilient}\customlabel{prog:noisy_fair_boxed}{\theequation}\\
     \st &\hspace{-2mm} ~~~~ \forall {\ell \in [p]}~~\forall {k\in [n]}\\
     &\hspace{-6mm}\sum\nolimits_{\substack{i\in [m],\\ j\in [k]}} P_{i\ell} R_{ij} \leq U_{k\ell} \inparen{1+\inparen{1-\frac{1}{2\sqrt{c}}}\gamma_k}.\hspace{-1mm}
     \yesnum\label{eq:const_of_noisy_fair_boxed}
   \end{align*}
   \vspace{0mm}
   \vspace{-1.75mm}
   \end{tcolorbox}
 \end{minipage}

  \smallskip

  \noindent
  The above program modifies the program for fair ranking with accurate groups:
  It has the same objective but different constraints.
  {Instead of sampling the attribute values and applying constraints on the sampled values, {Constraint~\eqref{eq:const_of_noisy_fair_boxed} applies} upper bounds on the expected number of items in the first $k$ positions from group $\ell$ (\cref{sec:challenges}).
  Further, {Constraint~\eqref{eq:const_of_noisy_fair_boxed}} relaxes upper bounds $U_{k\ell}$ by a small position-dependent factor.
  Like for Constraint~\eqref{eq:exp_const},  one can show that any ranking satisfying Constraint~\eqref{eq:const_of_noisy_fair_boxed} also satisfies $(\eps,\delta)$-constraint (for small $\eps_1,\dots,\eps_n$ and $\delta$).
  But unlike Constraint~\eqref{eq:exp_const}, and somewhat surprisingly,
  any ranking that satisfies ${\sinparen{\eps,\delta}}$-constraint (for appropriate $\eps_1,\dots,$ $\eps_n$~and $\delta$) must {also satisfy Constraint~\eqref{eq:const_of_noisy_fair_boxed}.
  {(In fact, $\gamma_k$ is chosen to be the smallest, up to logarithmic factors, value such that this is true.)}
  We use this to prove \cref{thm:ub}'s utility guarantee.}}

  Our first result bounds the fairness and utility of the optimal solution of \prog{prog:noisy_fair_boxed}.
  \begin{theorem}\label{thm:ub}
    Let $\gamma\in \R^n$ be as defined in \cref{eq:def_gamma}.
    There is an optimization program (Program~\eqref{prog:noisy_fair_boxed}),
    parameterized by a constant $c$ and failure probability $\delta$,
    such that for any $c>1$ and $\delta\in (0,\frac{1}{2}]$ its optimal solution satisfies $(c\gamma,\delta)$-constraint and
    has a utility at least as large as the utility of any ranking satisfying $\inparen{(c-\sqrt{c})\gamma,\delta}$-constraint.
  \end{theorem}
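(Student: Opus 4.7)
The plan is to establish the two claims separately: (a) the optimal solution $\ropt$ of Program~\eqref{prog:noisy_fair_boxed} satisfies the $(c\gamma,\delta)$-constraint, and (b) $\inangle{\ropt,W}$ is at least the utility of any ranking satisfying the $((c-\sqrt{c})\gamma,\delta)$-constraint. Both parts hinge on applying multiplicative Chernoff bounds to the random variables $X_{k\ell}(R)\coloneqq \sum_{i\in G_\ell}\sum_{j=1}^k R_{ij}$. Since $R$ is deterministic while the indicators $\mathbbm{1}[i\in G_\ell]$ are independent across items (Definition~\ref{def:noise_model}), $X_{k\ell}(R)$ is a sum of independent Bernoulli variables with mean $\mu_{k\ell}(R)=\sum_{i,\,j\le k}P_{i\ell}R_{ij}$---exactly the left-hand side of Constraint~\eqref{eq:const_of_noisy_fair_boxed}.

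For part (a), the program ensures $\mu_{k\ell}(\ropt)\le U_{k\ell}\inparen{1+\inparen{1-1/(2\sqrt{c})}\gamma_k}$. I would pick $t=\Theta(\gamma_k)$ so that $(1+t)\mu_{k\ell}(\ropt)\le U_{k\ell}(1+c\gamma_k)$; this is possible because the additive gap between the two right-hand sides is $\bigl(c-(1-1/(2\sqrt{c}))\bigr)\gamma_k\cdot U_{k\ell}$. A Chernoff upper-tail inequality gives $\Pr[X_{k\ell}(\ropt)\ge U_{k\ell}(1+c\gamma_k)]\le \exp\!\inparen{-\Omega(t^2\mu_{k\ell}(\ropt))}$; the definition $\gamma_k\ge 12\log(2np/\delta)/\sqrt{U_{k\ell}}$ in~\eqref{eq:def_gamma} is calibrated exactly so that $t^2U_{k\ell}=\Omega(\log^2(np/\delta))$, making each tail at most $\delta/(np)$. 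A union bound over the $np$ pairs $(k,\ell)$ completes part (a).

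For part (b), the plan is to show that any ranking $R$ satisfying the $((c-\sqrt{c})\gamma,\delta)$-constraint is \emph{feasible} for Program~\eqref{prog:noisy_fair_boxed}; optimality of $\ropt$ then yields the utility comparison. I would argue by contradiction: if $\mu_{k\ell}(R)>U_{k\ell}(1+(1-1/(2\sqrt{c}))\gamma_k)$ for some $(k,\ell)$, then the Chernoff lower-tail bound $\Pr[X_{k\ell}(R)\le(1-s)\mu_{k\ell}(R)]\le\exp(-s^2\mu_{k\ell}(R)/2)$, with $s$ chosen so that $(1-s)\inparen{1+(1-1/(2\sqrt{c}))\gamma_k}>1+(c-\sqrt{c})\gamma_k$, produces $X_{k\ell}(R)>U_{k\ell}(1+(c-\sqrt{c})\gamma_k)$ with probability at least $1-\delta/(np)$. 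Union-bounding over $(k,\ell)$, this contradicts the hypothesized $((c-\sqrt{c})\gamma,\delta)$-constraint on $R$, forcing $R$ to be feasible for Program~\eqref{prog:noisy_fair_boxed}.

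The main obstacle is the arithmetic calibration of the three coefficients $c$, $c-\sqrt{c}$, and $1-1/(2\sqrt{c})$. For part (b) a valid $s$ must simultaneously satisfy $1-s\ge(1+(c-\sqrt{c})\gamma_k)/(1+(1-1/(2\sqrt{c}))\gamma_k)$ and $s^2\mu_{k\ell}(R)=\Omega(\log^2(np/\delta))$: the subtrahend $\sqrt{c}$ in $(c-\sqrt{c})$ is precisely what makes $s=\Theta(\gamma_k/\sqrt{c})$ admissible while keeping the Chernoff exponent above the union-bound cost set by the $12\log(2np/\delta)$ prefactor in $\gamma_k$. A secondary wrinkle is that $\mu_{k\ell}$ can be much smaller than $U_{k\ell}$, in which case the purely multiplicative Chernoff bound is too weak; there I would fall back on a Bernstein-type inequality whose exponent depends on the target threshold $U_{k\ell}\gamma_k$ rather than on $\mu_{k\ell}$ itself, which remains $\Omega(\log^2(np/\delta))$ by the choice of $\gamma_k$.
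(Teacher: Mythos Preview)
Your plan matches the paper's approach: it splits the theorem into the same two propositions (any feasible ranking satisfies the $(c\gamma,\delta)$-constraint; any ranking satisfying the $((c-\sqrt c)\gamma,\delta)$-constraint is feasible) and proves them via multiplicative Chernoff upper- and lower-tail bounds on $X_{k\ell}$, with the same union bound over the $np$ pairs for part~(a). Two minor differences: (i) the paper avoids a separate Bernstein-type case for small $\mu_{k\ell}$ by showing directly that the Chernoff exponent, rewritten in terms of the target threshold $U$ (respectively $L$), is monotone in $\mu_{k\ell}$ and therefore worst when $\mu_{k\ell}=U$ (respectively $L$); (ii) no union bound is needed in part~(b), since exhibiting a single $(k,\ell)$ with $\Pr[X_{k\ell}>U_{k\ell}(1+(c-\sqrt c)\gamma_k)]>\delta$ already contradicts the $((c-\sqrt c)\gamma,\delta)$-constraint.
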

  \noindent  For equal representation, $\gamma_k$ is ${\wt{O}{\inparen{\frac{1}{\sqrt{k}}}}}$.
  Thus, \cref{thm:ub} guarantees that, with high probability, the optimal solution of \prog{prog:noisy_fair_boxed} multiplicatively violates equal representation at the $k$-th position by at most $1+{\wt{O}{\inparen{\frac{1}{\sqrt{k}}}}}$.
  Further, this solution's utility is higher than the utility of any ranking satisfying a slight relaxation of this fairness guarantee.
  {\cref{thm:ub} can be extended to position-weighted versions of fairness constraints (\cref{thm:ub_exnt_wt}),
  where the fairness constraint is $$\forall \ell,\forall k,\quad {\sum\nolimits_{i\in G_\ell} \sum\nolimits_{j\in [k]}} {v_j R_{ij}} \leq  {U_{k\ell}}$$
  for specified discount factors $v_1\geq \dots\geq v_n$
  such as  NDCG~\cite{DCG}}.
  If we are also guaranteed {$U_{k\ell}\geq \psi k$} for some constant {$\psi>0$} and all $k$ and $\ell$, then we can improve $\gamma_k$'s dependence on $\delta$ from ${\log\frac{1}{\delta}}$ to ${\sqrt{\log\frac{1}{\delta}}}$ {(\cref{sec:imp_depen_delta:thm:ub}).
  The proof of \cref{thm:ub} appears in \cref{sec:proofof:thm:ub:mb}.}

  \paragraph{Lower Bound on Fairness Guarantee.}
  Our next result complements \cref{thm:ub}'s fairness guarantee.
  \begin{theorem}\label{thm:lb}
    There is a family of matrices $U\in \Z_{+}^{n\times p}$ such that for any $U$ in the family and
    any parameters $\delta\in[0,1)$ and $\eps_1,\dots,\eps_n\geq 0$,
    if for any position $k\in [n]$,
    \begin{align*}
        \eps_k\leq 1
        \quad\text{and}\quad
        \eps_k < \max_{\ell\in [p]}\sqrt{\frac{1}{2U_{k\ell}} \log\frac{1}{4\delta}},
    \end{align*}
    then there exists a matrix $P\in [0,1]^{m\times p}$, %
    such that it is information theoretically impossible to output a ranking that satisfies $(\eps,\delta)$-constraint.
    This family, {in particular, contains the matrices} $U$ {corresponding to equal representation {and proportional representation} constraints.}
  \end{theorem}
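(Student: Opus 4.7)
The plan is to exhibit, for each $U$ in the family and each $(\eps,\delta)$ meeting the hypothesis at some position $k$, a single adversarial $P$ under which the number of $G_{\ell^*}$-members in the top-$k$ positions of \emph{any} ranking (possibly randomized, possibly adaptive to $W$) is a fixed binomial random variable whose right tail at the threshold $U_{k\ell^*}(1+\eps_k)$ strictly exceeds $\delta$. Since \cref{def:eps_delta_const} demands that this count lie at or below the threshold with probability $\geq 1-\delta$ simultaneously for every $(k,\ell)$, a violation at a single $(k,\ell^*)$ suffices to rule out feasibility.

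\textbf{Construction of $P$.} Let $k$ witness the hypothesis and pick $\ell^*\in\argmax_{\ell}\sqrt{\tfrac{1}{2U_{k\ell}}\log\tfrac{1}{4\delta}}$; equivalently, $\ell^*$ minimizes $U_{k\ell}$. Set $q\coloneqq U_{k\ell^*}/k$ and define $P$ by $P_{i\ell^*}=q$ for every $i\in[m]$ and $P_{i\ell}=0$ for $\ell\neq \ell^*$. Under this $P$, the indicators $\{\ind\{i\in G_{\ell^*}\}\}_{i\in[m]}$ are i.i.d.\ $\mathrm{Bernoulli}(q)$, so for any ranking $R\in\cR$ the count $S_R\coloneqq \sum_{i\in G_{\ell^*}}\sum_{j=1}^k R_{ij}$ is distributed as $\mathrm{Bin}(k,q)$ with mean $kq=U_{k\ell^*}$, \emph{independently} of the ranking chosen — the adversary's choice is oblivious to the algorithm. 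All constraints at $(k',\ell)$ with $\ell\neq\ell^*$ are satisfied trivially, so it suffices to force the $(k,\ell^*)$-constraint to fail.

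\textbf{Binomial anti-concentration.} Restrict the family to $U$'s with $\min_{\ell}U_{k\ell}\leq k/2$ at every position $k$, so that $q\leq 1/2$; this includes equal representation ($U_{k\ell}=k/p$ with $p\geq 2$) and proportional representation (take $\ell^*$ to be a minority group). By Slud's inequality — whose range condition $kq\leq kq(1+\eps_k)\leq k(1-q)$ is guaranteed by $\eps_k\leq 1$ together with $q\leq 1/(2+\eps_k)$ —
\begin{align*}
\Pr\!\left[\mathrm{Bin}(k,q)\geq kq+U_{k\ell^*}\eps_k\right] \ \geq\ \Pr\!\left[Z\geq \eps_k\sqrt{U_{k\ell^*}/(1-q)}\right],
\end{align*}
where $Z\sim\mathcal{N}(0,1)$. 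Using $1-q\geq 1/2$, the argument on the right is at most $\eps_k\sqrt{2U_{k\ell^*}}$, which by the hypothesis is strictly less than $\sqrt{\log(1/(4\delta))}$. Applying the standard Gaussian lower-tail bound $\Pr[Z\geq z]\geq \tfrac12\bigl(1-\sqrt{1-e^{-z^2}}\bigr)\geq \tfrac14 e^{-z^2}$ (valid for $z\geq 0$) then yields $\Pr[S_R>U_{k\ell^*}(1+\eps_k)]>\tfrac14\cdot 4\delta=\delta$, as required.

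\textbf{Main obstacle.} The principal subtlety is the boundary regime $q\to 1/2$, in particular equal representation with $p=2$ where $q=1/2$ exactly and Slud's inequality degenerates (its admissible range $[kq,k(1-q)]$ collapses to a point). I would handle this case separately using a direct anti-concentration estimate for the symmetric Binomial/Rademacher walk, such as $\Pr[\mathrm{Bin}(k,1/2)\geq k/2+s]\geq \tfrac14 e^{-4s^2/k}$ for $s\leq\sqrt{k}$, derivable from a local de Moivre–Laplace or Berry–Esseen estimate; plugging in $s=(k/2)\eps_k$ and using $\eps_k\leq 1$ recovers precisely the target tail bound when $U_{k\ell^*}=k/2$. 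The secondary tasks — verifying that both equal and proportional representation belong to the family (case-checking $\min_\ell U_{k\ell}/k\leq 1/2$) and that the Slud range hypotheses hold throughout — are routine.
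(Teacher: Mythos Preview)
Your approach is correct and follows the same overall strategy as the paper: set $P_{i\ell^*}=U_{k\ell^*}/k$ uniformly across all items so that the top-$k$ count of any ranking is $\mathrm{Bin}(k,q)$ with mean $U_{k\ell^*}$, then invoke a binomial anti-concentration lower bound to force a violation of the $(k,\ell^*)$-constraint with probability exceeding $\delta$. The difference is in the anti-concentration tool. The paper applies a one-line Chernoff-type lower bound (\cref{fact:lb_chernoff}, due to Mousavi), valid for $q\leq 1/4$, which yields $\Pr[\mathrm{Bin}(k,q)\geq(1+\eps_k)kq]\geq\tfrac14 e^{-2\eps_k^2 kq}$ directly with no boundary analysis; consequently the paper's family requires $\min_\ell U_{k\ell}\leq k/4$ and covers equal representation only for $p\geq 4$. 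Your Slud-plus-Gaussian route recovers the identical exponent and in principle extends the family to $\min_\ell U_{k\ell}\leq k/2$ (hence $p\geq 2$), but the obstacle you flag is broader than the single point $q=1/2$: Slud's range hypothesis $kq(1+\eps_k)\leq k(1-q)$ fails for every $q\in(1/(2+\eps_k),1/2]$, so your de Moivre--Laplace patch must handle that entire strip, not just the endpoint. Restricting your family to $q\leq 1/3$ (so Slud applies for all $\eps_k\leq 1$) would sidestep this cleanly and match the paper's scope and simplicity.
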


  \noindent
  Since $\gamma_k$ is ${O\inparen{\log\sinparen{\frac{np}{\delta}}\cdot\max_{\ell}\sqrt{\frac{1}{U_{k\ell}}}}}$,
  \cref{thm:lb} shows that {\cref{thm:ub}'s fairness guarantee is optimal up to log-factors.}
  \cref{sec:proofof:thm:lb} proves \cref{thm:lb}.

  \paragraph{An Efficient Algorithm.}
  As for solving our optimization program, it is \np{}-hard to check its feasibility (\cref{thm:hardness_results_main}).
  {However, because Constraint~\eqref{eq:const_of_noisy_fair_boxed} is linear in $R$,  the continuous relaxation of Program~\eqref{prog:noisy_fair_boxed} is a standard linear program and can be solved efficiently.}
  Our algorithm (\cref{algo}) solves the {standard} linear programming relaxation of \prog{prog:noisy_fair_boxed} to find a solution {$R_c$} and then uses a dependent-rounding algorithm by \cite{ChekuriVZ11} to convert {$R_c$} to a ranking. {(See \cref{sec:proofof:thm:algo} for brief discussion of why straightforward rounding approaches are insufficient.)}
  \begin{theorem}\label{thm:algo}
    There is a randomized algorithm {(\cref{algo})} that given
    constants $d > 2$, a failure probability $0<\delta\leq 1$, and matrices $P\in [0,1]^{m\times p}$ and $W\in [0,1]^{m\times n}$,
    outputs a ranking satisfying $\inparen{O(d\gamma),\delta}$-constraint and
    with probability at least $1-\delta$, and has a utility at least $$\inparen{1-\frac{1}{d}}\cdot V-\wt{O}\inparen{\sqrt{dn}},$$
    where $V$ is the utility of
    {any ranking satisfying $\sinparen{(d-\sqrt{d})\gamma,\delta}$-constraint.}
    The algorithm runs in polynomial time in $d$ and the bit complexity\footnote{{The bit complexity of the inputs is the number if bits required to encode the input using the standard binary encoding (which, e.g., maps integers to their binary representation, rational numbers as pair of integers, and vectors/matrices as a tuple of their entries) \cite[Section 1.3]{grotschel2012geometric}.}} of the input.
    \end{theorem}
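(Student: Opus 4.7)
The plan is to solve the continuous relaxation of \prog{prog:noisy_fair_boxed} with parameter $c=d$ and then apply the dependent rounding scheme of \cite{ChekuriVZ11} to convert the fractional optimum into an integral ranking while preserving marginals and inheriting negative-correlation concentration. Since the objective and all constraints of \prog{prog:noisy_fair_boxed} are linear in $R$, dropping the integrality constraint $R\in\zo^{m\times n}$ in favor of $R\in[0,1]^{m\times n}$ yields an LP of polynomial size, solvable in time polynomial in the bit complexity of the input. Let $R_c$ be its optimum. By \cref{thm:ub} applied with $c=d$, the integer program has value at least $V$, so the LP value $V_{\rm LP}\coloneqq\inangle{R_c,W}$ also satisfies $V_{\rm LP}\geq V$; moreover Constraint~\eqref{eq:const_of_noisy_fair_boxed} yields $\sum_{i\in[m],j\in[k]} P_{i\ell} R_{c,ij}\leq U_{k\ell}(1+(1-\tfrac{1}{2\sqrt{d}})\gamma_k)$ for every $k,\ell$.

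Independently rounding the entries of $R_c$ would break the hard constraint that each position holds exactly one item, so I would instead exploit the fact that the assignment polytope \cref{eq:assignment_matrices} is the intersection of two partition matroids (one capping each item at one position, the other fixing one item per position). Applying the swap rounding of \cite{ChekuriVZ11} to $R_c$ produces an integral ranking $\hS\in\cR$ satisfying (i) $\Ex[\hS_{ij}]=R_{c,ij}$ for every $i,j$, and (ii) for any nonnegative $a\in\R_{\geq 0}^{m\times n}$, $\sum_{i,j}a_{ij}\hS_{ij}$ is a sum of negatively-correlated $[0,\max a_{ij}]$-bounded terms, hence obeys multiplicative Chernoff bounds around its expectation. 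For utility, $\inangle{\hS,W}$ is exactly $n$ such terms in $[0,1]$ with expectation $V_{\rm LP}\geq V$; a multiplicative Chernoff bound yields $\inangle{\hS,W}\geq(1-1/d)V-\wt O(\sqrt{dn})$ with probability at least $1-\delta/3$, the additive slack covering the regime where $V$ is too small for the multiplicative form to close on its own.

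The main technical step is the fairness analysis, which couples two layers of randomness. Fix $k$ and $\ell$ and let $Y_{k,\ell}\coloneqq\sum_{i,j\leq k} P_{i\ell}\hS_{ij}$. Property (i) gives $\Ex[Y_{k,\ell}]\leq U_{k\ell}(1+(1-\tfrac{1}{2\sqrt{d}})\gamma_k)$; a Chernoff bound via property (ii) with deviation $\Theta(\sqrt{d}\gamma_k)$ yields $Y_{k,\ell}\leq U_{k\ell}(1+O(\sqrt{d}\gamma_k))$ except with probability $O(\delta/(np))$, where the definition of $\gamma_k$ in \cref{eq:def_gamma} is chosen precisely so that $\gamma_k^2 U_{k\ell}$ dominates the $\log(np/\delta)$ term required to close this bound. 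Conditional on $\hS$, the true count $Z_{k,\ell}\coloneqq\sum_{i\in G_\ell,\, j\leq k}\hS_{ij}$ is a sum of at most $k$ independent Bernoullis $\ind[i\in G_\ell]$ with mean $Y_{k,\ell}$, so a second Chernoff bound with deviation $\Theta(\gamma_k)$ gives $Z_{k,\ell}\leq(1+O(\gamma_k))Y_{k,\ell}$ except with probability $O(\delta/(np))$. Multiplying these and union-bounding over all $np$ constraint pairs yields $Z_{k,\ell}\leq U_{k\ell}(1+O(d\gamma_k))$ simultaneously for all $k,\ell$, establishing the $(O(d\gamma),\delta)$-constraint.

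The principal obstacle is orchestrating these two concentration stages so that the compounded slacks multiply to exactly $1+O(d\gamma_k)$ while the combined failure probability stays below $\delta$. This is what pins down the LP slack factor $(1-\tfrac{1}{2\sqrt{d}})\gamma_k$ in \cref{eq:const_of_noisy_fair_boxed} — just enough to absorb the post-rounding deviation of order $\sqrt{d}\gamma_k$ and leave room for the downstream group-sampling deviation of order $\gamma_k$ — and is what forces the choice $c=d$ when invoking \cref{thm:ub}. The overall running time is dominated by the LP solve and the randomized swap rounding, both polynomial in the bit complexity of the input, as claimed.
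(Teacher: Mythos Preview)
Your high-level plan matches the paper exactly: solve the LP relaxation of \prog{prog:noisy_fair_boxed} with $c=d$ and round via the dependent rounding of \cite{ChekuriVZ11}. The gap is in what that rounding actually guarantees for matroid intersection.

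You assert that swap rounding on $\cR$ (the intersection of two partition matroids) gives (i) exact marginals $\Ex[\hS_{ij}]=R_{c,ij}$ and (ii) negative correlation, hence full Chernoff bounds. That is what CVZ swap rounding gives for a \emph{single} matroid base polytope; for matroid intersection and matching polytopes they do not obtain negative correlation. The tool the paper invokes (\cref{thm:rounding}) takes a parameter $0<\alpha\le\tfrac12$ and returns a random vertex $R$ with $\Ex[1_R]=(1-\alpha)R_F$ together with concentration of the form $\exp(-\tfrac{1}{20}\mu\alpha t^2)$. The algorithm chooses $\alpha=\tfrac{1}{d}$. The $(1-\tfrac{1}{d})$ factor in the utility guarantee is \emph{not} the result of a multiplicative Chernoff bound on a sum with mean $V_{\rm LP}$; it is the deterministic loss in the marginals from the rounding itself. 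The additive $\wt O(\sqrt{dn})$ then comes from the $\alpha=\tfrac{1}{d}$ appearing in the concentration exponent. With the properties you assumed, you would instead get utility $V-\wt O(\sqrt{n})$, which is both stronger than the theorem claims and not what the available tool delivers.

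The same correction applies to the first stage of your fairness argument: the deviation of $Y_{k,\ell}=\sum_{i,\,j\le k}P_{i\ell}\hS_{ij}$ from its mean has an extra $d$ in the exponent (from $1/\alpha$), which is why the rounding slack lands at $O(d\gamma_k)$ rather than $O(\sqrt{d}\gamma_k)$. Once you use the correct CVZ statement, your two-layer analysis (first concentrate the rounded $P$-mass over the rounding randomness, then concentrate the true group count over the noise-model randomness, conditional on $\hS$) is perfectly sound and arguably cleaner than the paper's presentation, which collapses both steps into one application of \cref{thm:rounding}.
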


    \noindent
    The tension in setting $d$ is that
    decreasing $d$ improves the fairness guarantee and the utility guarantee's second term, but worsens the first term in the utility guarantee.
    Under the mild assumption that $V=\Omega(n)$, {increasing $d$ improves the utility guarantee because the first term in the {utility guarantee} dominates the second term.
    In this case, the utility guarantee improves to $(1-\frac{1}{d}-o(1))\cdot V$.}
    Finally, while \cref{thm:algo} requires utilities to be between 0 and 1, it can be extended to any non-negative and bounded utilities by {scaling.
    The proof of \cref{thm:algo} appears in \cref{sec:proofof:thm:algo}.}

    \renewcommand{\folder}{./figs/main-body}
    \begin{figure}[t!]
      \centering
      \vspace{-10mm}
      {\begin{tikzpicture}
        \tikzmath{\s = 1.1;}
        \tikzmath{\mvx = 0;}
        \tikzmath{\mvy = 1.5;}
        \node (image) at (0,-0.17*\s+1.9) {{\includegraphics[width=0.50\linewidth, trim={1.25cm 0cm 2.55cm 1cm},clip]{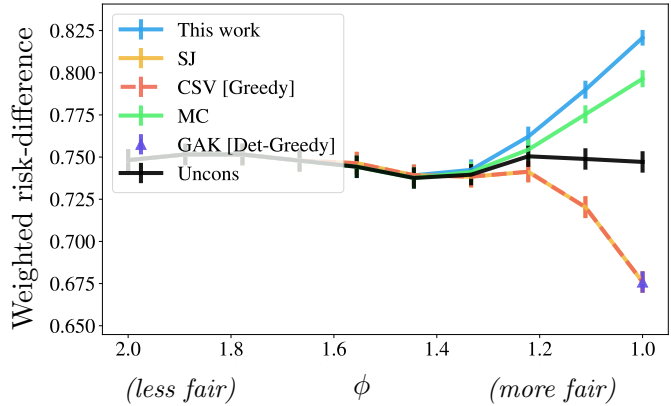}}};
        \node[rotate=90, fill=white] at (+\mvx-3.29375*\s-0.8,0.3+\mvy) {\white{......}Weighted risk-difference\white{......}};
        \draw[draw=white, fill=white] (+\mvx-2*\s+0.35,1.2325*\s+\mvy+1.3) rectangle ++(4*\s,0.15*\s+0.14);
        \node[rotate=0, fill=white] at (+\mvx+0.05*\s, -3.3*5/8*\s+0.05*\s-0.255+\mvy)  {\white{||||||||||}$\phi$\white{||||||||||}};
        \node[rotate=0,fill=white] at (+\mvx+2.35*\s,-1.8625*\s-0.475+\mvy) {\textit{\small(more fair)}};
        \node[rotate=0,fill=white] at (+\mvx-2.14375*\s,-1.8625*\s-0.475+\mvy) {\textit{\small(less fair)}};
      \end{tikzpicture}}
      \caption{
      {\em {Synthetic Data: Nonuniform Error Rate.}}
      We consider synthetic data where imputed socially-salient attributes have a higher false-discovery rate on the minority group.
      We vary the fairness constraint ($\phi$) and observe the weighted risk-difference (RD) of algorithms.
      The $y$-axis plots RD and $x$-axis plots $\phi$. ({\em Note that the $x$-axis decreases toward the right}).
      We observe that \ouralgo{} achieves the most fair RD, while obtaining a similar utility for all $\phi$ (\cref{fig:fig1bskjkj}).
      Error-bars denote the error of the mean.
      \vspace{0mm}
      }
      \vspace{0mm}
      \label{fig:different_fdr}
    \end{figure}

    \section{Empirical Results}\label{sec:empirical_results}

    In this section we evaluate our framework's performance {on} synthetic and real-world data.\footnote{The code for simulations is available at \url{https://github.com/AnayMehrotra/FairRankingWithNoisyAttributes}} %

    \paragraph{Baselines and Metrics.}
    The correct choice of fairness metric is context-dependent and beyond the scope of this work \cite{Selbst:2019}.
    To illustrate our results, we arbitrarily fix the fairness metric as weighted risk-difference (RD).
    {This is a position-weighted version of the standard risk-difference metric \cite{calders2010three} and measures the extent to which a ranking violates equal representation.}
    {The RD of a ranking $R$ is:}
    \begin{align*}
       { 1-\frac1Z\sum\nolimits_{ k=5,10,\dots} \frac{1}{\log{k}}
      \max\nolimits_{\ell,q\in [p]}\abs{{ \sum\nolimits_{\substack{i\in G_\ell, j\in [k]\white{,}}}   R_{ij}  - \sum\nolimits_{\substack{i\in G_q, j\in [k]\white{,}}}   R_{ij}}},}
    \end{align*}
    {Where $G$ denotes the ground-truth protected groups and $Z$ is a constant so that $\rd$ has range $[0,1]$.}
    Here, $\rd=1$ is most fair and $\rd=0$ is least fair.
    We compare our framework, \ouralgo{}, against state-of-the-art fair-ranking algorithms:
    \csv{} (``greedy'' in \cite{celis2018ranking}), \sj{} \cite{fairExposureAshudeep}, and \detgreedy{} (``DetGreedy'' in \cite{linkedin_ranking_paper}).
    {We also compare against \mc{}, which ranks the items, in the subset output by \cite{MehrotraC21}'s algorithm, to maximize utility.}
    {Finally, we compare against the baseline, \uncons{}, which outputs the utility maximizing ranking without fairness considerations.}
    We present additional discussion and variations of the empirical results presented here in \cref{sec:empirical_results_extended}.
    The variations in \cref{sec:empirical_results_extended} repeat the simulations in this section with different parameters and amounts of noise, other fairness constraints, different fairness metrics, and larger number of groups.
    \vspace{1mm}

    \paragraph{Setup.}
    We consider the DCG model of utilities~\cite{DCG} and a  relaxation of equal representation constraints:
    (1) Given an intrinsic value $w_1,w_2,\dots,w_m\geq 0$, for each item $i$ and position $j$, we set $$W_{ij}\coloneqq \frac{w_i}{\log{(j+1)}},$$
    (2) Given a parameter $\phi\in [1,p]$, we set the following upper bounds:
    for each $k\in [n]$ and $\ell\in [p]$
    $$U_{k\ell}\coloneqq \smash{\frac{\phi}{p}}\cdot k.$$
    In simulations, we set $m=500$, $n=25$, and vary $\phi$ from $p$ to 1.
    {To gain some intuition about the relevant values of $\phi$, note that satisfying the 80\% rule requires $\phi\leq \frac{5p}{5p-1}$, i.e., $\phi\leq 1.11$ for $p=2$ and $\phi\leq 1.05$ for $p=4$.}
    For each $\phi$, we draw $m$ items uniformly without replacement and compute an estimate $\hP$ of the matrix $P$ {(from \cref{def:noise_model}) using, e.g., off-the-shelf ML classifiers or public APIs (see the paragraphs ``Estimating $\hP$'' in simulation with image data and ``Setup'' in simulation on name data).}
    We infer socially-salient groups {$\hG_1,\dots,\hG_p$} via $\hP$ by assigning each item to its most-likely group.
    Finally, we run all algorithms using $\hP$ or {$\hG_1,\dots,\hG_p$} as discussed next.

    \vspace{1mm}

    \paragraph{Implementation Details.}
    \ouralgo{} and \mc{} take probabilistic information about socially-salient attributes as input and are given $\hP$.
    \csv{}, \sj{}, and \detgreedy{} require access to socially-salient groups and are given $\hG_1,\dots,\hG_p$.
    \ouralgo{}, \sj{}, and \csv{} use fairness constraints from \cref{def:lu_fairness_constraints} and are given:
    for each $k$ and $\ell$, $U_{k\ell} = \smash{\frac{\phi}{p}}\cdot k$.
    \mc{} requires, for each $\ell\in[p]$, an upper bound on the number of items from $G_\ell$ that can appear in top-$n$ positions.
    It is given $ \smash{\frac{\phi}{p}}\cdot n$ for each $\ell\in[p]$.
    \detgreedy{} requires the desired proportion $\alpha_\ell$ for each group $G_\ell$ and, roughly, satisfies the constraint $U_{k\ell} = \alpha_\ell\cdot k$ for each $k\in [n]$ and $\ell\in [p]$.
    {It is given $\alpha_\ell=\smash{\frac{1}{p}}$ for each $\ell\in[p]$,}
    {this corresponds to $\phi=1$ (hence,  figures only  plot the \detgreedy{} at $\phi=1$).}
    As a heuristic, we set $${\gamma_k =} {\frac{1}{20}\max_{\ell\in [p]}\sqrt{\frac{1}{U_{k\ell}}}}$$ in all simulations.
    We find that this suffices and expect a more refined approach to improve the performance of \ouralgo{}.

\renewcommand{\folder}{./figs/main-body}
\begin{figure}[t!]
  \centering
  \vspace{-5mm}
  {\begin{tikzpicture}
    \tikzmath{\s = 1;}
    \tikzmath{\mvx = 0;}
    \tikzmath{\mvy = 0;}
    \node (image) at (0,-0.17*\s+0.11) {\includegraphics[width=0.5\linewidth, trim={1.35cm 0cm 1.9cm 1.675cm},clip]{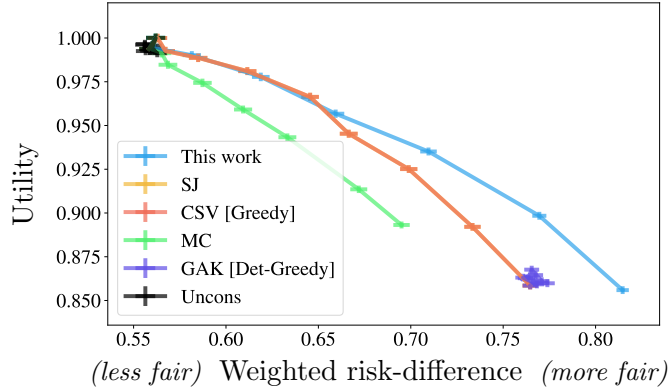}};
    \node[rotate=90, fill=white] at (-4.54375,0) {\white{......}Utility\white{......}};
    \draw[draw=white, fill=white] (+\mvx-2*\s+0.35,1.2325*\s+\mvy+1.3) rectangle ++(4*\s,0.15*\s+0.14);
    \node[rotate=0, fill=white] at (+\mvx+0.05*\s, -3.3*5/8*\s+0.05*\s-0.255+\mvy-0.2-0.05)  {\white{||||}Weighted risk-difference\white{||||}};
    \node[rotate=0,fill=white] at (+\mvx+2.35*\s+0.8,-1.8625*\s-0.425+\mvy-0.18-0.05) {\textit{\small(more fair)}};
    \node[rotate=0,fill=white] at (+\mvx-2.14375*\s-0.8,-1.8625*\s-0.425+\mvy-0.18-0.05) {\textit{\small(less fair)}};
  \end{tikzpicture}}
  \caption{
    {\em Real-World Image Data.}
    In this simulation,
    given {\em non-gender labeled} images and their utilities, our goal is to generate a high-utility gender-balanced ranking.
    We estimate $P$ using an off-the-shelf ML-classifier and vary $\phi$ from $p=2$ (less fair) to $1$ (more fair).
    The $y$-axis plots the utility of algorithms and the $x$-axis plots RD.
    We observe that \ouralgo{} has the most fair RD and the best fairness-utility trade-off.
    Error bars show the error of the mean.
  }
  \label{fig:simulation_image}
  \vspace{-2.5mm}
\end{figure}

    \subsection{Simulation on Synthetic Data}
    \label{sec:simulation_syn_disp_err}

    {We show that on synthetic data, where error-rates of given socially-salient attributes vary over groups, {existing fair-ranking algorithms have worse RD than \uncons{}.}}

    \paragraph{Data.}
    {We generate $w$ and $P$ for two groups using code by \cite{MehrotraC21} and fix $\hP=P$.}
    For all items $i$, $w_i$ is i.i.d. from the uniform distribution over $[0,1]$.
    $\hP$ is constructed such that attributes inferred from $\hP$ have a higher false-discovery rate for \mbox{the minority group compared to the majority ($40\%$ vs $10\%$).\footnote{This 30\% difference in false-discovery rates is comparable to the 34\% difference in the false-discovery rates of dark-skinned females and light-skinned men observed by \cite{BuolamwiniG18} for a commercial classifier.}}

    \paragraph{Results.}
    {See \cref{fig:different_fdr} for the observed \rd{} averaged over 500 iterations.
    We observe that \ouralgo{} achieves best \rd{} ($\approx$0.81), while not {losing} significant utility ($\geq 98\%$ of max.; see \cref{fig:fig1bskjkj}).
    \mc{} achieves the {next} best \rd{} ($\approx$0.79).
    In contrast, \csv{}, \sj{}, and \detgreedy{}, which do not account for noise in the socially-salient attributes, achieve a worse \rd{} ($\leq$0.68) for $\phi\leq 1.2$ than \uncons{} ($\approx$0.75).} %
    {This range of $\phi$ can be desired in practice: e.g., a platform must set $\phi\leq 1.1$ to guarantee the 80\% rule is satisfied two groups.}
    {Thus, we observe that existing fair-ranking algorithms may achieve a worse RD than \uncons{}.}

    \subsection{{Simulation on Real-World Image Data}}\label{sec:simulation_image}

    In this simulation, given {\em non-gender labeled} images-search results %
    and their utilities, our goal is to generate a high-utility and gender-balanced ranking.

    \paragraph{Data.}
    We use the Occupations dataset \cite{celis2020cscw} which contains the top 100 Google Image results for 96 occupation-related queries.
    For each image, the data has its position in search results,
    gender (coded as male/female) of the individual depicted in the image, collected via MTurk.
    We use the (true) gender labels in the data to compute RD and to estimate $\hP$, but do not provide them to algorithms.

    \paragraph{Setup.}
    For each image $i$, with rank $r_i$, we define $$w_{i}\coloneqq \frac{1}{\log\inparen{1+r_i}}.$$
    We say an occupation is gender-stereotypical if more than 80\% of images for this occupation have the same gender label (41/96 occupations).
    An image is said to be stereotypical if {it is} in a gender-stereotypical occupation and its gender label is the majority label for its occupation.
    {We define the socially-salient groups as the sets of stereotypical and non-stereotypical images in gender-stereotypical occupations.}

\paragraph{Estimating $\hP$.}
{After pre-processing, we use a CNN-based gender-classifier $f$ \cite{imdb_wiki_code} to predict the (apparent) gender of the person depicted in each image. We calibrate the confidence scores output by $f$ by binning and use these to estimate $\hP$ (see \cref{sec:empirical_results_extended} for more details).}
We perform this calibration once and on all occupations and, then, use it for gender-stereotypical occupations.
{{Because of this $\hP$ is miscalibrated (and hence, inaccurate).}
For instance, among samples $i$ for which $0.25 \leq \hP_{i, \text{male}} \leq 0.5$, more than 75\% are labeled as ‘man’ (instead of some percentage between 25\% and 50\%). {\em This violates the assumption that $P$ is accurately known.}}

\paragraph{Results.}
See \cref{fig:simulation_image} for RD and utilities (NDCG) averaged over 1000 iterations.
We observe that \ouralgo{} achieves the best RD ($\approx$0.81) and has a better RD-utility trade-off than the other baselines.
In contrast, \csv{}, \sj{}, and \detgreedy{}, achieve a worse RD ($\leq$0.77).
\mc{} achieves the worst RD ($\leq$0.70) and a worst RD-utility trade-off.
{In particular, \ouralgo{}'s RD-utility trade-off strictly dominates all baselines for $\rd\geq 0.66$.
This value of $\rd$ can arise in practice:
\cref{fig:fig2b} plots the $\rd$ vs $\phi$ for this simulation and shows that if $\phi\leq 1.1$ (as required to, e.g., guarantee satisfaction of the 80\% rule), then all baselines have $\rd$ at least 0.66.}
{We further evaluate the robustness of \ouralgo{} to varying levels of noise on the Occupations dataset in \cref{sec:vary_noise} and observe \ouralgo{} has a better or similar \rd{} than each baseline at all noise levels.}

\subsection{{Simulation on Real-World Name Data}} \label{sec:simulation_name_intersectional}

{We consider gender and race (encoded as binary) as socially-salient attributes.
Our goal is to ensure equal representation across the four disjoint groups formed by combinations of these: non-White non-men, White non-men, non-White men, and White men.}

\paragraph{Data.}
{We consider the chess ranking data \cite{GhoshDW21} which has of {3,251} chess players.
For each player, among other attributes, the data has their full-name, self-identified gender (coded as male/female), FIDE rating, and race (Asian, Black, Hispanic, White) collected via MTurk.}
We use the (true) gender and race labels in the data to evaluate RD, but do not provide them to algorithms.

\paragraph{Setup.}
We partition the races into White (81.66\%) and non-White (18.34\%).
For each player $i$,
we query Genderize and EthniColr\footnote{\url{gender-api.com} and \url{github.com/appeler/ethnicolr} respectively} with $i$'s full-name to obtain the ``probabilities''  $p_f(i)$ and $p_{nw}(i)$ that player $i$ is labeled as a women and non-white respectively.
We assume that these probabilities are correct and that the gender and race of players are drawn independently.
Hence, e.g., we set the probability that $i$ is a non-white women as $$\hP_{i,nw+f} = p_{nw}(i) p_{f}(i).$$
Similarly, we set
$$\hP_{i,w+f} = (1-p_{nw}(i) )p_{f}(i),\ \
\hP_{i,nw+m} = p_{nw}(i) (1-p_{f}(i)),\ \ \text{and}\ \
\hP_{i,w+m} = (1-p_{nw}(i)) (1-p_{f}(i)).$$
%

\noindent {Notably, we do not calibrate $\hP$ on this data.
We verify that, like the previous simulation, $\hP$ is miscalibrated in this simulation.
E.g., only 31\% of the samples $i$ for which $\hP_{i, nw+m} > 0.75$ are labeled as `Non-white man' (instead of 75\%). {\em Hence, the assumption that $P$ is accurately known is violated in this simulation.}}
We expect calibration to improve \ouralgo{}'s performance.

\renewcommand{\folder}{./figs/main-body}

\begin{figure}[t!]
      \centering
      {\begin{tikzpicture}
      \tikzmath{\s = 1.1;}
      \tikzmath{\mvx = 0;}
      \tikzmath{\mvy = -0.3;}
      \node (image) at (0,-0.17*\s) {{\includegraphics[width=0.50\linewidth, trim={1.25cm 0cm 2.5cm 1cm},clip]{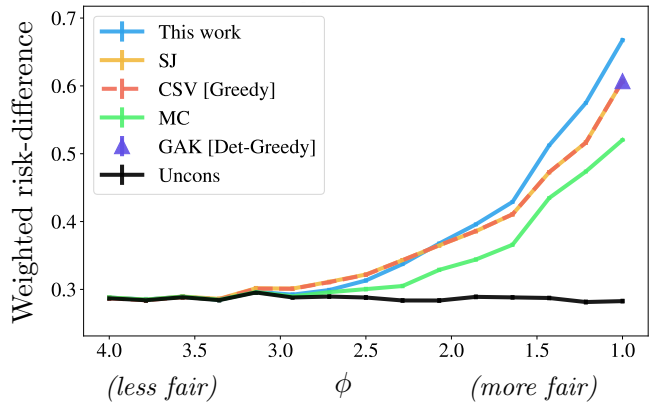}}};
      \node[rotate=90, fill=white] at (+\mvx-3.29375*\s-0.55,0) {\white{......}Weighted risk-difference\white{......}};
      \draw[draw=white, fill=white] (+\mvx-2*\s+0.35-0.5,1.2325*\s-0.3+1.2) rectangle ++(4*\s+1,0.15*\s+0.4);
      \node[rotate=0, fill=white] at (+\mvx+0.05*\s, -3.3*5/8*\s+0.05*\s-0.305+\mvy)  {\white{||||||||||||}$\phi$\white{||||||||||||}};
      \node[rotate=0,fill=white] at (+\mvx+2.35*\s,-1.8625*\s-0.475+\mvy-0.05) {\textit{\small(more fair)}};
      \node[rotate=0,fill=white] at (+\mvx-2.14375*\s,-1.8625*\s-0.475+\mvy-0.05) {\textit{\small(less fair)}};
    \end{tikzpicture}}
    \caption{
    \protect\rule{0ex}{5ex}
    {\em Real-World Name Data: Multiple Attributes.}
    In this simulation, the goal is to ensure equal representation {across four disjoint groups formed by combinations of two attributes (non-White non-men, White non-men, non-White men, and White men).}
    We estimate $P$ by querying public APIs and libraries with names in the data.
    The $y$-axis plots RD and $x$-axis plots $\phi$.
    ({\em Note that the values decrease toward the right}).
    We observe that all algorithms have a better RD than \uncons{} and \ouralgo{} has the best RD compared to all other baselines.
    Error bars represent the error of the mean.
    }
    \label{fig:simulation_intersectional}
\end{figure}

\paragraph{Results.}
See \cref{fig:simulation_intersectional} for RD averaged over 500 iterations.
We observe that all algorithms (\ouralgo{}, \csv{}, \detgreedy{}, \sj{}, and \mc{}) have better RD  than \uncons{}.
Among these, \ouralgo{} achieves the best RD  ($\approx$0.67), next \csv{}, \detgreedy{}, and \sj{} obtain RD ($\approx$0.61), and \mc{} achieves RD ($\leq 0.53$).
{More specifically, for all $\phi\leq
1.75$, \ouralgo{} has a strictly better \rd{} than all baselines (this range of $\phi$ subsumes, e.g., the range $\phi\leq 1.05$--required guarantee satisfaction of the 80\% rule with four groups.)}
Further, in \cref{fig:fig6}, we observe that all algorithms have a similar fairness-utility trade-off.

\section{Proofs of Main Theoretical Results}\label{sec:proofs}

    \subsection{Proof of \cref{thm:ub}}\label{sec:proofof:thm:ub:mb}

    In this section we prove \cref{thm:ub}. The proof is divided into the following two propositions.
    \begin{proposition}\label{prop:xNF_is_fair:main_body}
      For any $\delta\in(0,1]$, any ranking feasible for Prog.~\eqref{prog:noisy_fair_boxed} satisfies $(c\gamma,\delta)$-constraint.
    \end{proposition}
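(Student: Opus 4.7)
Fix a ranking $R$ feasible for \prog{prog:noisy_fair_boxed}. My plan is to bound, by $\delta$, the probability over the random groups $G_1,\dots,G_p$ that the $(c\gamma,\delta)$-constraint fails at some pair $(k,\ell)$. The main idea is to recognise the left-hand side of the fairness inequality as a sum of independent Bernoulli indicators, apply a Chernoff/Bernstein tail bound pointwise in $(k,\ell)$, and finally union bound over the $np$ pairs.

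First I would fix any $k \in [n]$ and $\ell \in [p]$ and write
\[
X_{k\ell} \;:=\; \sum_{i \in G_\ell}\sum_{j=1}^k R_{ij} \;=\; \sum_{i=1}^m Y_i, \qquad Y_i \;:=\; \mathbf{1}[i \in G_\ell]\,\Big(\sum_{j=1}^k R_{ij}\Big).
\]
Since $R$ is deterministic and each row-sum $\sum_{j\leq k}R_{ij}$ lies in $\{0,1\}$ by \eqref{eq:assignment_matrices}, each $Y_i$ is Bernoulli, and by the independence assumption in \cref{def:noise_model} the $Y_i$'s are mutually independent. Their combined mean equals the left-hand side of Constraint~\eqref{eq:const_of_noisy_fair_boxed}, so feasibility of $R$ gives $\mathbb{E}[X_{k\ell}] \leq \hat{\mu} := U_{k\ell}\bigl(1 + (1 - \tfrac{1}{2\sqrt{c}})\gamma_k\bigr)$.

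Next I would apply a standard Bernoulli-sum Chernoff/Bernstein bound (using the variance estimate $\mathrm{Var}(X_{k\ell}) \leq \mathbb{E}[X_{k\ell}] \leq \hat{\mu}$) to the event $\{X_{k\ell} \geq U_{k\ell}(1+c\gamma_k)\}$. Writing $a := 1 - \tfrac{1}{2\sqrt{c}}$, the slack between the target and $\hat{\mu}$ is at least $t := U_{k\ell}(c-a)\gamma_k$, and a direct calculation reduces the failure probability to at most
\[
\exp\!\Big(-\tfrac{U_{k\ell}(c-a)^2\gamma_k^2}{2 + (a+c)\gamma_k}\Big),
\]
up to an absolute constant. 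The core technical step is to verify that this exponent exceeds $\log(np/\delta)$, and this is where the specific form of $\gamma_k$ in \eqref{eq:def_gamma} enters. I would split into two regimes: when $\gamma_k$ is at most a constant depending on $c$, the denominator is $O(1)$ and I use the quadratic bound $U_{k\ell}\gamma_k^2 \geq 144 \log^2(2np/\delta)$ (immediate from \eqref{eq:def_gamma}) to get an exponent that is $\Omega(\log^2(np/\delta))$; when $\gamma_k$ is larger, the denominator is dominated by $(a+c)\gamma_k$ and I instead use $U_{k\ell}\gamma_k \geq 12\sqrt{U_{k\ell}}\log(2np/\delta) \geq 12\log(2np/\delta)$ (assuming $U_{k\ell}\geq 1$). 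The choice of the relaxation constant $1 - \tfrac{1}{2\sqrt{c}}$ in \eqref{eq:const_of_noisy_fair_boxed} is precisely what makes $(c-a)^2$ large enough to absorb $(a+c)$ in the large-$\gamma_k$ regime (a direct check shows $9(c-a)^2 \geq 2a+c$ for every $c > 1$).

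In both regimes the failure probability at $(k,\ell)$ is at most $\delta/(np)$, and a union bound over the $np$ pairs completes the proof. I expect the main obstacle to be exactly this two-regime bookkeeping in the Chernoff computation, i.e., matching the constant $12$ in $\gamma_k$ and the relaxation factor $1 - \tfrac{1}{2\sqrt{c}}$ in the program so that the exponent remains at least $\log(np/\delta)$ in both the quadratic-dominated and the linear-dominated regime of the Bernstein denominator; once this is in place, the rest of the argument is only independence plus a union bound.
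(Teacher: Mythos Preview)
Your proposal is correct and follows essentially the same route as the paper: recognise $X_{k\ell}$ as a sum of independent Bernoullis, apply the Chernoff-type bound of \cref{lem:conc_bound:main_body} with mean bounded by $U_{k\ell}(1+\phi\gamma_k)$ (your $\hat\mu$), show the resulting exponent dominates $\log(2np/\delta)$, and union bound. The only cosmetic difference is that the paper packages your two-regime analysis of the Bernstein denominator into the single algebraic shortcut \cref{fact:quadratic_roots:main_body} (if $x\ge y+\sqrt{y}$ then $x^2/(1+x)\ge y$), which is exactly the content of your small-$\gamma_k$/large-$\gamma_k$ split.
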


    \begin{proposition}\label{prop:xOPT_delta_is_feasible:main_body}
      For any $\delta\in(0,\frac12)$ and $c>1$,
      any ranking satisfying the $\inparen{(c-\sqrt{c}){\gamma},\delta}$-constraint is feasible for \prog{prog:noisy_fair_boxed}.
    \end{proposition}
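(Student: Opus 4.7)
The plan is to argue by contrapositive. Suppose a ranking $R$ is infeasible for \prog{prog:noisy_fair_boxed}, so there is a pair $(k,\ell)$ at which
\begin{align*}
\mu\defeq \sum\nolimits_{i\in[m]} P_{i\ell}\sum\nolimits_{j=1}^{k}R_{ij} \;>\; T'\defeq U_{k\ell}\inparen{1+\inparen{1-\tfrac{1}{2\sqrt{c}}}\gamma_k}.
\end{align*}
My goal is to deduce a failure of the $\inparen{(c-\sqrt{c})\gamma,\delta}$-constraint. By \cref{def:eps_delta_const}, it suffices to establish, for this single pair $(k,\ell)$, that $\Pr\insquare{X_{k\ell}>T}>\delta$, where $T\defeq U_{k\ell}\inparen{1+(c-\sqrt{c})\gamma_k}$ and $X_{k\ell}\defeq \sum_{i\in G_\ell}\sum_{j=1}^{k}R_{ij}$, since then the joint event in \cref{def:eps_delta_const} must fail with probability strictly exceeding $\delta$.

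The key structural observation is that $X_{k\ell}=\sum_{i\in[m]}\ind\insquare{i\in G_\ell}\cdot \inparen{\sum_{j\leq k}R_{ij}}$ is a sum of independent Bernoullis with mean $\mu$: the independence comes from \cref{def:noise_model}, and the weights $\sum_{j\leq k}R_{ij}\in\zo$ are deterministic because $R\in\cR$. I will apply the multiplicative Chernoff lower-tail bound
\begin{align*}
\Pr\insquare{X_{k\ell}\leq T}\;\leq\;\exp\inparen{-(\mu-T)^2/(2\mu)}, \qquad \text{valid for }T\leq \mu.
\end{align*}
The coefficients in the proposition statement are chosen so that $1-\tfrac{1}{2\sqrt{c}}>c-\sqrt{c}$ in the regime of interest, giving $T<T'<\mu$ and making this bound applicable.

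The heart of the proof is to show $(\mu-T)^2/(2\mu)>\log\tfrac{1}{1-\delta}$, so that $\Pr\insquare{X_{k\ell}\leq T}<1-\delta$ and hence $\Pr\insquare{X_{k\ell}>T}>\delta$, contradicting the tail constraint. Since $\mu\mapsto (\mu-T)^2/(2\mu)$ is strictly increasing on $(T,\infty)$ and $\mu>T'>T$, it is enough to prove the analogue at $\mu=T'$, which by direct substitution becomes
\begin{align*}
\frac{(T'-T)^2}{2T'} \;=\; \frac{U_{k\ell}\,\gamma_k^2\,\Delta_c^2}{2\inparen{1+(1-\tfrac{1}{2\sqrt{c}})\gamma_k}},\qquad \Delta_c\defeq \inparen{1-\tfrac{1}{2\sqrt{c}}}-(c-\sqrt{c}).
\end{align*}
By the definition of $\gamma_k$ in \cref{eq:def_gamma}, one has $U_{k\ell}\gamma_k^2\geq 144\log^2(2np/\delta)$, and for $\delta\leq \tfrac12$ we have $\log\tfrac{1}{1-\delta}\leq \log 2$; after bounding $\Delta_c$ below by a positive constant and controlling the denominator, we will obtain the desired strict inequality.

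The main obstacle I anticipate is the algebraic bookkeeping in this final step: verifying that $\Delta_c$ is a positive quantity of constant order (which is what forces the specific form $1-\tfrac{1}{2\sqrt{c}}$ of the program coefficient and matches the $1/\sqrt{U_{k\ell}}$-scale of Chernoff deviations) and that the denominator $1+(1-\tfrac{1}{2\sqrt{c}})\gamma_k$ does not erode the $\log^2(2np/\delta)$ factor. For the latter, one exploits that $U_{k\ell}\gamma_k$ itself grows like $\sqrt{U_{k\ell}}\log(np/\delta)$, which absorbs any enlargement coming from $\gamma_k$ in the denominator and preserves the required lower bound.
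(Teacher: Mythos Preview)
Your high-level strategy---contrapositive, then a lower-tail Chernoff bound at the pair $(k,\ell)$ where infeasibility forces $\mu>T'$---matches the paper's approach. But there is a concrete error in your execution.

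You assert that $\Delta_c=\inparen{1-\tfrac{1}{2\sqrt c}}-(c-\sqrt c)$ is positive ``in the regime of interest,'' which you need so that $T<T'<\mu$ and the lower-tail bound applies. This is false over the full range $c>1$ claimed in the proposition: writing $s=\sqrt c$, one has $\Delta_c=1-\tfrac{1}{2s}-s^2+s$, which equals $\tfrac12$ at $c=1$ but is already negative by $c\approx 2.1$ (for instance $\Delta_c=\tfrac34-2<0$ at $c=4$). For such $c$ one may have $T'<\mu<T$, in which case $\{X_{k\ell}>T\}$ is an \emph{upper}-tail event relative to the mean $\mu$; your Chernoff inequality $\Pr[X_{k\ell}\le T]\le\exp\inparen{-(\mu-T)^2/(2\mu)}$ requires $T\le\mu$ and no longer applies, and the subsequent ``replace $\mu$ by $T'$ via monotonicity on $(T,\infty)$'' step is invalid as well. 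So the obstacle you flag as algebraic bookkeeping is in fact a structural failure of the argument for all $c$ beyond roughly $2$.

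The paper's proof differs precisely here: it does not aim the concentration bound at the constraint threshold $T=U_{k\ell}\inparen{1+(c-\sqrt c)\gamma_k}$. Instead it introduces an auxiliary parameter $b=1-\tfrac{1}{\sqrt c}$ and applies \cref{lem:conc_bound:main_body} with $L'=T'$ to bound $\Pr\insquare{Z_\#\le U_{k\ell}(1+b\gamma_k)}$; the gap controlling that bound is $\phi-b=\tfrac{1}{2\sqrt c}$, which is positive for every $c>1$, so the concentration step there goes through uniformly in $c$.
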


    \begin{proof}[Proof of \cref{thm:ub} assuming \cref{prop:xNF_is_fair:main_body,prop:xOPT_delta_is_feasible:main_body}]
      Let $R^\star$ be the optimal solution of \prog{prog:noisy_fair_boxed}.
      Since $R^\star$ is feasible by definition, \cref{prop:xNF_is_fair:main_body} implies that $R^\star$ satisfies the $(c\gamma,\delta)$-constraint.
      Pick any $R'$ that satisfies the $\inparen{(c-\sqrt{c}){\gamma},\delta}$-constraint.
      \cref{prop:xOPT_delta_is_feasible:main_body} implies that $R'$ is feasible for \prog{prog:noisy_fair_boxed}.
      Since $R^\star$ is an optimal solution of \prog{prog:noisy_fair_boxed}, $R^\star$'s utility is at least as large as the utility of $R'$.
    \end{proof}

    \paragraph{Notation.}
    For each item $i$ and group $\ell$,
    let $Z_{i\ell}\in \zo$ be the indicator random variable
    $$Z_i\coloneqq \mathds{I}[G_\ell\ni i].$$
    \noindent By \cref{def:noise_model},  $\Pr[Z_{i\ell}]=P_{i\ell}$ and $Z_{i\ell}$ and $Z_{j\ell}$ are independent for any $i\neq j$.
    Given ranking $R\in \cR$, group $\ell\in [p]$, and position $k\in [n]$, let $Z_{\#}(R,\ell,k)$ be the number of items from $G_\ell$ in the top $k$ positions of $R$ and let $$P_{\#}(R,\ell,k)\coloneqq \Ex[Z_{\#}(R,\ell,k)].$$
    \noindent From the above, we get: %
    \begin{align*}
      P_{\#}(R,\ell,k) =\Ex\insquare{Z_{\#}(R,\ell,k)} =\sum_{i\in [m]}\sum_{j\in [k]} P_{i\ell} R_{ij}.
    \end{align*}
    We use the following concentration result (proved in \cref{sec:conc_ineq}) in the proof.
    \begin{lemma}\label{lem:conc_bound:main_body}
      For any position $k\in [n]$, group $\ell\in[p]$, parameters $\eps\geq 0$ and $L,U\in \R$, and ranking $R\in \cR$,
      where $R$ is possibly a random variable independent of $\inbrace{Z_{i\ell}}_{i,\ell}$,
      {if $P_{\#}(R,\ell,k) \leq U$ or $P_{\#}(R,\ell,k) \geq L$ then the following equations hold respectively}
      \begin{align*}
          \Pr\insquare{Z_{\#}(R,\ell,k) < \inparen{1+\eps} U} \geq 1-{e^{-\frac{U \eps^2}{2+\eps}}}
          \quad\text{and}\quad
          \Pr\insquare{Z_{\#}(R,\ell,k) > \inparen{1-\eps} L} \geq 1-{e^{-\frac{L\eps^2}{2(1-\eps)}}}.
      \end{align*}
    \end{lemma}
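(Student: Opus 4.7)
The plan is to reduce $Z_\#(R,\ell,k)$ to a sum of independent Bernoulli random variables, conditionally on $R$, and then invoke standard multiplicative Chernoff bounds. The starting observation is that since $R\in\cR$, for every item $i$ the partial row sum $\sum_{j\in[k]} R_{ij}$ lies in $\{0,1\}$ (each item occupies at most one position). Therefore, conditional on $R$,
\[ Z_\#(R,\ell,k) \;=\; \sum_{i=1}^m \Bigl(\sum_{j\in[k]} R_{ij}\Bigr)\, Z_{i\ell} \]
is a sum over $i$ of $\{0,1\}$ random variables which, conditional on $R$, are mutually independent: the $Z_{i\ell}$ are independent across $i$ by \cref{def:noise_model}, and conditioning on the independent $R$ does not break this independence. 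The conditional mean is exactly $P_\#(R,\ell,k)=\sum_i P_{i\ell}\sum_{j\in[k]}R_{ij}$.

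Next, I would apply the standard multiplicative Chernoff bounds: for a sum $X$ of independent $\{0,1\}$ random variables with mean $\mu$ and any $\eta\in(0,1)$,
\[ \Pr[X \geq (1+\eta)\mu] \leq \exp\!\Bigl(-\tfrac{\mu\eta^2}{2+\eta}\Bigr), \qquad \Pr[X \leq (1-\eta)\mu] \leq \exp\!\Bigl(-\tfrac{\mu\eta^2}{2(1-\eta)}\Bigr). \]
Applied conditionally on $R$ with $\mu=P_\#(R,\ell,k)$, these give tail bounds in which the exponent depends on $\mu$ rather than on the hypothesized bound $U$ or $L$.

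The final step is to replace $\mu$ in the exponents by $U$ (respectively $L$). For the upper tail, I would rewrite the Chernoff exponent in the equivalent form $(t-\mu)^2/(\mu+t)$ with $t=(1+\eps)U$, and verify by differentiation that this expression is monotonically decreasing in $\mu$ on the interval $\mu\leq t$; hence the hypothesis $\mu\leq U$ yields an exponent at least $(t-U)^2/(U+t) = U\eps^2/(2+\eps)$, as required. An entirely analogous monotonicity argument, rewriting the lower-tail exponent as $(\mu-t)^2/(2t)$ with $t=(1-\eps)L$ and noting it is increasing in $\mu\geq t$, handles the case $\mu\geq L$ and yields $L\eps^2/(2(1-\eps))$. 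Finally, once the pointwise conditional bounds hold under the stated hypothesis (which we may assume holds almost surely for $R$), taking expectation over $R$ removes the conditioning and gives the claimed statements. The only nonroutine step is this elementary monotonicity check; everything else is a direct application of textbook Chernoff bounds.
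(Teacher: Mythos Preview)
Your proposal is correct and follows essentially the same route as the paper: the paper proves the two tails separately (\cref{lem:lowerbound_main} and \cref{lem:upperbound_main}) by applying the multiplicative Chernoff bound with deviation parameter chosen so that the threshold equals $(1+\eps)U$ or $(1-\eps)L$, and then invokes elementary monotonicity facts (\cref{fact:algebra_lowerbound} and \cref{fact:algebra_upperbound}) to replace the mean $P_{\#}$ in the exponent by the hypothesized bound $U$ or $L$. Your explicit conditioning on $R$ followed by averaging is a slightly more careful phrasing of what the paper handles by simply noting independence of $R$ from $\{Z_{i\ell}\}_{i,\ell}$.
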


    \begin{proof}[Proof of \cref{prop:xNF_is_fair:main_body}]
      Fix any $k$ and $\ell$.
      Let
      \begin{align*}
        {\phi  \hspace{0.3mm}{\coloneqq}\hspace{0.3mm} 1-\frac{1}{2\sqrt{c}},}\quad
        {U'  \hspace{0.3mm}{\coloneqq}\hspace{0.3mm} U_{k\ell} \inparen{1+\phi\gamma_k}},
        \quad \text{and}\quad
        {\zeta   \hspace{0.3mm}{\coloneqq}\hspace{0.3mm} \frac{(1-\phi)\gamma_k}{1+\phi\gamma_k}}.
        \yesnum
        \label{eq:proof1:params_val:main_body}
      \end{align*}
      Here, $U'$ and $\zeta$ satisfy $U'(1+\zeta) = U_{k\ell} (1+c\gamma_k).$
      Fix any ranking $R$ that is feasible for \prog{prog:noisy_fair_boxed}.
      Since $R$ is feasible, it satisfies that
      \begin{align*}
         \forall \ell\in [p],\ k\in [n],\ \ \
        P_{\#}(R,\ell,k)
        \leq U_{\ell k} \inparen{1+\phi\gamma_k}.
        \yesnum\label{eq:proof1:ub:main_body}
      \end{align*}
      \noindent Using $U'(1+\zeta) = U_{k\ell} (1+c\gamma_k)$, \cref{eq:proof1:ub:main_body}, and \cref{lem:conc_bound:main_body}, we get that
      \begin{align*}
        \Pr\insquare{{Z_{\#}(R,\ell,k) \geq U'
        (1+\zeta)}}
         \ \ \ &\Stackrel{}{\leq }\ \ \  \exp\inparen{-\frac{2U^\prime\zeta^2}{2+\zeta}}\\
         \ \ \ &
         \Stackrel{\eqref{eq:proof1:params_val:main_body}}{=}\ \ \  \exp\inparen{-\frac{(1-\phi)^2 c^2\gamma_k^2  U_{k\ell} }{2 + (1+\phi)c\gamma_k}}\\
         \ \ \ &\Stackrel{(\phi\leq 1)}{=} \ \ \  \exp\inparen{-\frac{(1-\phi)^2 c^2 \gamma_k^2 U_{k\ell} }{2(1 + c\gamma_k)}}.
        \yesnum\label{eq:proof1:main_body}
      \end{align*}
      \begin{fact}\label{fact:quadratic_roots:main_body}
        For all $x,y\geq 0$, if $x\geq y+\sqrt{y}$, then $$\frac{x^2}{1+x}\geq y.$$
      \end{fact}
      \noindent Using \cref{fact:quadratic_roots:main_body} and \cref{eq:def_gamma},
      we can show that for each $k$, $${\frac{c^2\gamma_k^2}{1 + c\gamma_k} \geq \frac{2}{(1-\phi)^2 U_{k\ell}}\cdot \log\frac{2np}{\delta}}.$$
      (This uses ${\delta<\frac12}$ and ${U_{k\ell},n\geq 1}$.)
      Substituting this in \cref{eq:proof1:main_body} we get:
      \begin{align*}
        \Pr\insquare{{Z_{\#}(R,\ell,k) \geq U_{\ell k}(1+c\gamma_k)}}
        \leq \frac{\delta}{2np}.
        \yesnum\label{eq:proof1:proved:main_body}
      \end{align*}
      {Taking the union bound over all positions $k$ and $\ell$, we get (as desired) that
      with probability at least $1-\delta$, for all
      $k\in [n]$ and $\ell \in [p]$, $$Z_{\#}(R, \ell, k) \leq U_{\ell k} (1+c\gamma_{k}).$$}
    \end{proof}

    \begin{proof}[Proof of \cref{prop:xOPT_delta_is_feasible:main_body}]
              Let $${\phi\coloneqq 1-\frac{1}{2\sqrt{c}}}.$$
              Towards a contradiction, suppose that
              $R'$ satisfies $\inparen{(c-\sqrt{c}){\gamma},\delta}$-constraint but is not feasible for \prog{prog:noisy_fair_boxed}.
              Then there exists $\ell$ and $k$ such that $$P_{\#}(R',k,\ell) > U_{k\ell}\cdot\inparen{1+\phi\gamma_k}.$$
              Fix any $k$ and $\ell$ satisfying this.
              Let
              \begin{align*}
                \hspace{-2mm}
                b\hspace{0.3mm}{\coloneqq}\hspace{0.3mm} 1-\frac{1}{\sqrt{c}},\ \
                L'\hspace{0.3mm}{\coloneqq}\hspace{0.3mm} U_{k\ell}\inparen{1+\phi\gamma_k}
                \ \ \text{and}\ \
                \zeta \hspace{0.3mm}{\coloneqq}\hspace{0.3mm} \frac{(1+b)\gamma_k}{1+\phi\gamma_k}.
                \yesnum
                \label{eq:proof2:params_val2:main_body}
              \end{align*}
              It holds that ${L'(1-\zeta) = U_{k\ell}(1+b\gamma_k)}$ and, hence, we get
              \begin{align*}
                {\Pr\insquare{{Z_{\#}(R',k,\ell) \leq L^\prime (1-\zeta)}}}
                \qquad &\Stackrel{\eqref{eq:proof2:params_val2:main_body}, \ \rm Lem.\ref{lem:conc_bound:main_body}}{\leq}\qquad
                \exp\inparen{-\frac{L^\prime\zeta^2}{2(1-\zeta)}}
                \yesnum\label{eq:int_step_2}\\
                \qquad &\Stackrel{\eqref{eq:proof2:params_val2:main_body}}{=}\qquad  \exp\inparen{\frac{-(c-b)^2  U_{k\ell}\gamma_{k}}{2(1+b)}}\\
                \qquad &\Stackrel{}{\leq}\qquad  \exp\inparen{\frac{-U_{k\ell}c\gamma_{k}}{4(2\sqrt{c}-1)\sqrt{c}}}\\
                \qquad &\Stackrel{(c>0)}{=}\qquad \exp\inparen{\frac{-U_{k\ell}\gamma_{k}}{8}}.
              \end{align*}
              Since ${\gamma_k\geq 8\log{\frac{np}{\delta}}\cdot \max_{\ell}\sqrt{\frac{1}{U_{k\ell}}}}$, ${\delta<\frac{1}{2}}$, and ${U\geq 1}$, we have
              $$\Pr\insquare{{Z_{\#}(R',k,\ell)\leq U_{k\ell}}} \leq \frac{\delta}{np} < 1 - \delta.$$
              Since $R'$ satisfies $\inparen{(c-\sqrt{c}){\gamma},\delta}$-constraint we have a contradiction, hence $R'$ must be feasible. %
            \end{proof}

        \subsubsection{Proof of \cref{lem:conc_bound:main_body}}\label{sec:conc_ineq}
    In this section, we prove certain concentration inequalities which are used in the proof of  \cref{thm:ub}.
    We divide the proof of \cref{lem:conc_bound:main_body} into two parts: \cref{lem:lowerbound_main,lem:upperbound_main_2}

    For each item $i\in [m]$ and protected attribute $\ell\in [p]$, let $Z_{i\ell}\in \zo$ be the indicator random variable that the $i$-th item is in the $\ell$-th protected group, i.e.,
    if $i\in G_\ell$, then $Z_i=1$, and other $Z_i=0$.
    Using \cref{def:noise_model}, it follows that:
    \begin{align*}
      \forall i\in [m],\ \ell \in [p],\quad &\Pr[Z_{i\ell}]=P_{i\ell},
      \yesnum\label{eq:prob_Z}\\
      \forall i,j\in [m],\ \ell \in [p],\quad\st, i\neq j,\quad &\text{$Z_{i\ell}$ and $Z_{j\ell}$ are independent.}
      \yesnum\label{eq:indep_Z}
    \end{align*}

    \noindent To simplify the notation, given a ranking $R\in \cR$, a protected attribute $\ell\in [p]$, and a position $k\in [n]$, let $Z_{\#}(R,\ell,k)\in \Z$ be the random variable equal to the number of items from $G_\ell$ in the top $k$ positions of $R$ and let $P_{\#}(R,\ell,k)\in \R$  be the expectation of $Z_{\#}(R,\ell,k)$, i.e.,
    \begin{align*}
      Z_{\#}(R,\ell,k)\coloneqq \sum_{i\in [m]}\sum_{j\in [k]} Z_{i\ell} R_{ij}
      \quad\text{and}\quad
      P_{\#}(R,\ell,k)\coloneqq \Ex\insquare{Z_{\#}(R,\ell,k)}.
    \end{align*}
    Using \cref{eq:prob_Z} and linearity of expectation it follows that
    \begin{align*}
      P_{\#}(R,\ell,k) = \sum_{i\in [m]}\sum_{j\in [k]} P_{i\ell} R_{ij}.
    \end{align*}

    \begin{lemma}\label{lem:lowerbound_main}
      For any position $k\in [n]$, attribute $\ell\in[p]$, parameters $\eps\geq 0$ and $L\in \R$, and ranking $R\in \cR$,
      where $R$ is possibly a random variable and is independent of $\inbrace{Z_{i\ell}}_{i,\ell}$,
      if $P_{\#}(R,\ell,k) \geq L$
      then with probability at least $1-\exp\inparen{-\frac{L\eps^2}{2(1-\eps)}}$, it holds that $Z_{\#}(R,\ell,k) > L\inparen{1-\eps}$.
    \end{lemma}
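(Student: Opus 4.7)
The plan is to reduce the claim to a standard multiplicative lower-tail Chernoff bound for a sum of independent Bernoulli random variables. Since $R$ is independent of the family $\{Z_{i\ell}\}_{i\in[m],\ell\in[p]}$, it will suffice to establish the bound conditional on each realization $R = r$, where $r \in \cR$ can be taken to satisfy $\sum_{i,j\in[k]} P_{i\ell} r_{ij} \geq L$ because the hypothesis $P_\#(R,\ell,k) \geq L$ must hold almost surely. The unconditional bound then follows immediately by averaging over the law of $R$, since the target $\exp(-L\eps^2/(2(1-\eps)))$ does not depend on $r$.

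For a fixed $r \in \cR$, observe that because each row of $r$ has at most one nonzero entry (\cref{eq:assignment_matrices}), the quantity $\alpha_i := \sum_{j\in[k]} r_{ij}$ lies in $\{0,1\}$ for every $i$. Letting $S := \{i \in [m] : \alpha_i = 1\}$, one may write
\[
Z_\#(r,\ell,k) \;=\; \sum_{i \in S} Z_{i\ell},
\]
which is a sum of independent $\{0,1\}$-valued random variables (by the independence clause in \cref{def:noise_model}) with mean $\mu_r := \sum_{i \in S} P_{i\ell} = P_\#(r,\ell,k) \geq L$. The Chernoff--Cram\'er technique now proceeds routinely: for any $t > 0$, Markov's inequality applied to $e^{-t Z_\#(r,\ell,k)}$ gives
\[
\Pr\bigl[Z_\#(r,\ell,k) \leq (1-\eps)L\bigr] \;\leq\; e^{t(1-\eps)L}\,\Ex\bigl[e^{-tZ_\#(r,\ell,k)}\bigr],
\]
and using independence together with $1+x \leq e^x$ yields $\Ex[e^{-tZ_\#(r,\ell,k)}] \leq \exp((e^{-t}-1)\mu_r) \leq \exp((e^{-t}-1)L)$, where the last step uses $e^{-t}-1 < 0$ and $\mu_r \geq L$. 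Setting $t = -\ln(1-\eps)$ (valid for $\eps \in (0,1)$) turns the exponent into $L[-(1-\eps)\ln(1-\eps) - \eps]$, and a standard calculus estimate yields the claimed bound $\exp(-L\eps^2/(2(1-\eps)))$.

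The only real subtlety in this plan is handling the possibility that $R$ is random; naively one might worry that the lower-tail Chernoff bound must be integrated against an unknown distribution on $R$, which could introduce extra variance in the composite random variable $Z_\#(R,\ell,k)$. This is what the independence hypothesis $R \perp \{Z_{i\ell}\}$ is designed to rule out: conditioning on $R = r$ preserves the joint law of the indicators, so a uniform-in-$r$ tail bound lifts immediately to the unconditional statement. Beyond this, the remaining work is a textbook lower-tail Chernoff calculation, the only care point being to invoke $\mu_r \geq L$ to pass from $\mu_r$ to $L$ in the exponent before picking the optimal $t$.
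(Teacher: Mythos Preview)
Your approach is essentially the same as the paper's: both condition on $R$ (using independence from $\{Z_{i\ell}\}$), recognize $Z_\#$ as a sum of independent Bernoulli indicators, and apply a lower-tail Chernoff bound, using $P_\# \geq L$ to pass from the mean to $L$. The paper cites the packaged form $\Pr[X \leq (1-\delta)\mu] \leq e^{-\mu\delta^2/2}$ and then minimizes the resulting expression over $P_\# \geq L$, whereas you derive the MGF bound from scratch and substitute $\mu_r \geq L$ before optimizing $t$; these are cosmetic differences.

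There is, however, a gap in your final step. The ``standard calculus estimate'' you invoke would require $(1-\eps)\ln(1-\eps) + \eps \geq \eps^2/(2(1-\eps))$, and this inequality is false: at $\eps = \tfrac12$ the left side is $\approx 0.153$ while the right side is $0.25$, and the Taylor expansions are $\eps^2/2 + \eps^3/6 + \cdots$ versus $\eps^2/2 + \eps^3/2 + \cdots$. What your argument (and the paper's, carried out correctly) actually delivers is the weaker bound $\exp(-L\eps^2/2)$. The paper's own proof contains a parallel arithmetic slip---after substituting $P_\# = L$ into $\exp\bigl(-(P_\# - L(1-\eps))^2/(2P_\#)\bigr)$ it writes the denominator as $L(1-\eps)$ rather than $L$---so the extra factor of $(1-\eps)$ in the lemma statement appears to be an artifact rather than something either argument establishes.
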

    \begin{proof}
      Since $\ell$, $k$, and $R$ are fixed, we use $Z_{\#}$ and $P_{\#}$ to denote $Z_{\#}(R,\ell,k)$ and $P_{\#}(R,\ell,k)$ respectively.
      Since $R$ and $\inbrace{Z_{i\ell}}_{i,\ell}$ are independent, we can bound the required probability as follows
      \begin{align*}
        \Pr\insquare{Z_{\#} \leq L (1-\eps)}\ \
        &=\ \ \Pr\insquare{Z_{\#} \leq
        P_{\#}\cdot \inparen{1-\frac{P_{\#}-L(1-\eps)}{P_{\#}}}}\\
        &\leq\ \ \exp\inparen{-\frac{P_{\#}}{2}\cdot \inparen{\frac{P_{\#}-L(1-\eps)}{P_{\#}}}^2 }\tag{Chernoff bound, see \cite{motwani1995randomized}}\\
        &=\ \ \exp\inparen{-\frac{1}{2}\cdot \frac{\inparen{P_{\#}-L(1-\eps)}^2}{P_{\#}} }.\yesnum\label{eq:proof_1}
      \end{align*}
      To bound the right-hand side of \cref{eq:proof_1}, we will use the following fact.
      \begin{fact}\label{fact:algebra_lowerbound}
        For all $L,\eps>0$, $\frac{(x-L(1-\eps))^2}{x}$ attains its minima at $L$ over the domain $[L,\infty)$.
      \end{fact}
      \noindent Since $P_{\#}\geq L$, from \cref{fact:algebra_lowerbound} it follows that the right-hand side of \cref{eq:proof_1} attains its maxima at $P_{\#} = L$.
      Substituting $P_{\#} = L$ in \cref{eq:proof_1}, we get:
      \begin{align*}
        \Pr\insquare{Z_{\#} \leq L (1-\eps)}
        &\leq \exp\inparen{-\frac{1}{2}\cdot \frac{\inparen{L\eps}^2}{L(1-\eps) }}
        = \exp\inparen{\frac{-L\eps^2}{2(1-\eps)}}.
      \end{align*}
    \end{proof}

    \begin{lemma}\label{lem:upperbound_main}
      For any position $k\in [n]$, attribute $\ell\in[p]$, parameters $\eps\geq 0$ and $U\in \R$, and ranking $R\in \cR$,
      where $R$ is possibly a random variable and is independent of $\inbrace{Z_{i\ell}}_{i,\ell}$,
      if $R$ satisfies that $P_{\#}(R,\ell,k) \leq U$
      then with probability at least $1-\exp\inparen{-\frac{U \eps^2}{2+\eps}}$, it holds that $Z_{\#}(R,\ell,k) < \inparen{1+\eps}\cdot U$.
    \end{lemma}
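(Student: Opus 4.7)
The plan is to follow the strategy used in the proof of \cref{lem:lowerbound_main}, but with the multiplicative Chernoff \emph{upper}-tail estimate in place of the lower-tail one. By the independence of $R$ from $\inbrace{Z_{i\ell}}_{i,\ell}$, I would condition on a realization of $R$; for each such realization, $Z_{\#}(R,\ell,k)$ is a sum of $k$ independent $\zo$-valued random variables with mean $P_{\#}(R,\ell,k)$. Writing $Z_\# \coloneqq Z_{\#}(R,\ell,k)$ and $P_\# \coloneqq P_{\#}(R,\ell,k)$, the standard form
\[
\Pr[Z_\# \geq (1+\delta)P_\#] \leq \exp\inparen{-\frac{P_\# \delta^2}{2+\delta}} \qquad (\delta > 0)
\]
then applies.

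The next step is to rewrite the target threshold as $(1+\eps)U = P_\#(1+\delta)$ with $\delta = ((1+\eps)U - P_\#)/P_\#$, which is non-negative by the hypothesis $P_\# \leq U$. Substituting and simplifying the exponent using $P_\#\delta^2/(2+\delta) = ((1+\eps)U - P_\#)^2 / ((1+\eps)U + P_\#)$ yields
\[
\Pr[Z_\# \geq (1+\eps)U] \leq \exp\inparen{-\frac{((1+\eps)U - P_\#)^2}{(1+\eps)U + P_\#}}.
\]

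It then remains to maximize the right-hand side over $P_\# \in [0, U]$, i.e., to minimize $g(x) \coloneqq ((1+\eps)U - x)^2 / ((1+\eps)U + x)$ on $[0, U]$. A direct derivative computation gives $g'(x) = -((1+\eps)U - x)(3(1+\eps)U + x)/((1+\eps)U + x)^2$, which is non-positive on $[0, (1+\eps)U]$, so $g$ is decreasing on $[0, U]$ and is minimized at $x = U$ with value $\eps^2 U/(2+\eps)$. Substituting $P_\# = U$ into the displayed bound yields $\exp(-U\eps^2/(2+\eps))$, completing the proof.

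The main obstacle I anticipate is purely algebraic bookkeeping around this last monotonicity step; the cleanest presentation likely introduces a small auxiliary fact (analogous to \cref{fact:algebra_lowerbound}) asserting that $g$ attains its minimum on $[0, U]$ at $x = U$. The degenerate case $P_\# = 0$ can be handled separately: then $Z_\# = 0$ almost surely and the claim is immediate.
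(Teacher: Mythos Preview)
Your proposal is correct and follows essentially the same approach as the paper's proof: apply the multiplicative Chernoff upper-tail bound with $\delta = ((1+\eps)U - P_\#)/P_\#$, simplify the exponent to $((1+\eps)U - P_\#)^2/((1+\eps)U + P_\#)$, and then minimize this over $P_\# \in [0,U]$ to find the worst case at $P_\# = U$. The paper packages the monotonicity step as a separate fact (\cref{fact:algebra_upperbound}) stated without proof, whereas you supply the derivative computation directly; your explicit handling of the degenerate case $P_\# = 0$ is a small addition the paper omits.
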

    \begin{proof}
      Since $\ell$, $k$, and $R$ are fixed, we use $Z_{\#}$ and $P_{\#}$ to denote $Z_{\#}(R,\ell,k)$ and $P_{\#}(R,\ell,k)$ respectively.
      Since $R$ and $\inbrace{Z_{i\ell}}_{i,\ell}$ are independent, we can bound the required probability as follows
      \begin{align*}
        \Pr\insquare{Z_{\#} \geq U (1+\eps)}\ \
        &=\ \ \Pr\insquare{Z_{\#} \leq
        P_{\#}\cdot \inparen{1 + \frac{U(1+\eps) - P_{\#}}{P_{\#}}}}\\
        &\leq\ \ \exp\inparen{P_{\#} \cdot\inparen{\frac{U(1+\eps) - P_{\#}}{P_{\#}}}^2 \cdot \frac{1}{2+\frac{U(1+\eps) - P_{\#}}{P_{\#}}}}.
        \intertext{Where we used the fact that: For any $\delta>0$ and independent 0/1 random variables $Y_1,Y_2,\dots,Y_n$, $\Pr\insquare{\sum_{i}Y_i>(1+\delta)\mu}<\exp\inparen{\frac{\mu\delta^2}{2+\delta}}$, where $\mu\coloneqq \Ex[\sum_i Y_i]$ (see\cite{motwani1995randomized}).
        Simplifying the right-hand side of the above equation, we get:}
        \Pr\insquare{Z_{\#} \geq U (1+\eps)}\ \
        &=\ \ \exp\inparen{-\frac{\inparen{U(1+\eps) - P_{\#}}^2}{U(1+\eps) + P_{\#}}}.\yesnum\label{eq:proof_2}
      \end{align*}
      To bound the right-hand side of \cref{eq:proof_2}, we will use the following fact.
      \begin{fact}\label{fact:algebra_upperbound}
        For all $U,\eps>0$, $\frac{(U(1+\eps)-x)^2}{U(1+\eps)+x}$ attains its minima at $U$ over the domain $[0,U]$.
      \end{fact}
      \noindent Since $P_{\#} \leq U$, from \cref{fact:algebra_upperbound} it follows that the right-hand side of \cref{eq:proof_2} attains its maxima at $P_{\#} = U$.
      Substituting $P_{\#} = U$ in \cref{eq:proof_2}, we get:
      \begin{align*}
        \Pr\insquare{Z_{\#} \geq U (1+\eps)}
        \ \ &\leq\ \ \exp\inparen{\frac{-U\eps^2}{2+\eps}}.\yesnum\label{eq:proof_2_2}
      \end{align*}
    \end{proof}

    \subsubsection{Improved Dependence of \cref{thm:ub} on $\gamma$ on $\delta$}
        \label{sec:imp_depen_delta:thm:ub}
        In this section, we show that given a constant $\psi>0$, if $U$ satisfies that
        \begin{align*}
          \forall\ell\in[p],\forall k\in[n],\quad
          U_{k\ell} \geq \psi k,
        \end{align*}
        then we can improve the dependence of $\gamma$ (from \cref{eq:def_gamma}) on $\log\frac{2np}{\delta}$ and $\alpha$.
        Concretely, \cref{thm:ub} holds for the following $\gamma$:
        \begin{align*}
          \forall k\in [n],\quad
          \gamma_k\coloneqq \max_{\ell\in [p]}\sqrt{\frac{1}{2\psi}\cdot \log\inparen{\frac{2np}\delta}\cdot \frac{1}{U_{k\ell }}}.
          \yesnum\label{eq:def_gamma:proof:imp}
        \end{align*}

        \noindent The proof of this relies on analogous of \cref{lem:lowerbound_main,lem:upperbound_main}: \cref{lem:lowerbound_main_2,lem:upperbound_main_2}.
        \begin{lemma}\label{lem:lowerbound_main_2}
          For any position $k\in [n]$, attribute $\ell\in[p]$, parameter $\eps\geq 0$, and lower bound constraint $L\in \Z_{\geq 0}^{n\times p}$, and ranking $x\in \cR$,
          if $x$ satisfies that $P_{\#}(R,\ell,k) \geq L$
          then with probability at least $1-\exp\inparen{-2L^2\eps^2 k^{-1}},$ it holds that $Z_{\#}(R,\ell,k) > L\inparen{1-\eps}.$
        \end{lemma}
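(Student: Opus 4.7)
The plan is to apply Hoeffding's inequality to $Z_{\#}(R,\ell,k)$ after rewriting it as a sum of $k$ (rather than $mk$) independent bounded random variables. The key observation is structural: since $R\in\cR$ places exactly one item in each position, for each $j\in[k]$ the quantity $Y_j\coloneqq\sum_{i\in[m]}Z_{i\ell}R_{ij}$ equals $Z_{i^\star(j),\ell}$, where $i^\star(j)\in[m]$ is the unique item assigned to position $j$. In particular, each $Y_j$ takes values in $\{0,1\}$, and since the items $i^\star(1),\ldots,i^\star(k)$ are all distinct, the independence assumption in \cref{def:noise_model} implies that $Y_1,\ldots,Y_k$ are mutually independent. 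Hence $Z_{\#}(R,\ell,k)=\sum_{j=1}^k Y_j$ is a sum of $k$ independent $[0,1]$-valued random variables with mean $P_{\#}(R,\ell,k)\geq L$.

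I would then invoke the one-sided Hoeffding bound: for independent $[0,1]$-valued random variables $Y_1,\ldots,Y_k$ with sum $S$ and mean $\mu$, one has $\Pr[S\leq\mu-t]\leq\exp(-2t^2/k)$ for every $t\geq 0$. Applying this at deviation $t=L\eps$ yields
\begin{align*}
\Pr\!\insquare{Z_{\#}(R,\ell,k)\leq P_{\#}(R,\ell,k)-L\eps}\ \leq\ \exp\!\inparen{-\tfrac{2L^2\eps^2}{k}}.
\end{align*}
Since $P_{\#}(R,\ell,k)\geq L$, the event $\{Z_{\#}(R,\ell,k)\leq L(1-\eps)\}$ is contained in the event $\{Z_{\#}(R,\ell,k)\leq P_{\#}(R,\ell,k)-L\eps\}$, so the same tail bound gives $\Pr[Z_{\#}(R,\ell,k)\leq L(1-\eps)]\leq\exp(-2L^2\eps^2/k)$, which is exactly the claimed inequality (complemented).

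If we wish to allow $R$ to be a random ranking independent of $\{Z_{i\ell}\}_{i,\ell}$ (as in the analogous \cref{lem:lowerbound_main,lem:upperbound_main}), the argument extends by conditioning on $R$: conditionally on $R=r$ the bound above holds with $Y_j=Z_{i^\star_r(j),\ell}$, and since the resulting tail bound is uniform over $r$, it carries over after taking expectation over $R$.

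I do not anticipate a real obstacle here: the entire improvement over \cref{lem:lowerbound_main} (which went through a Chernoff multiplicative bound giving an exponent proportional to $L$) comes from using a Hoeffding additive bound whose exponent is proportional to $L^2/k$. This is advantageous precisely in the regime $L=\Theta(k)$ assumed in \cref{sec:imp_depen_delta:thm:ub} (via $U_{k\ell}\geq\psi k$), where $L^2/k=\Theta(L)$ and the logarithmic factors inherited from the multiplicative Chernoff analysis in \cref{eq:def_gamma} can be replaced by their square roots as in \cref{eq:def_gamma:proof:imp}. The only subtlety to be careful about is the reduction to exactly $k$ independent summands, which relies essentially on the fact that a ranking assigns distinct items to distinct positions.
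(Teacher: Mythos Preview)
Your proposal is correct and matches the paper's own proof: both rewrite $Z_{\#}(R,\ell,k)$ as a sum of $k$ independent $[0,1]$-valued variables, apply the one-sided Hoeffding bound, and then use $P_{\#}(R,\ell,k)\geq L$ to pass from the deviation $P_{\#}-L(1-\eps)$ down to $L\eps$. If anything, your write-up is slightly more explicit than the paper's about why the sum has exactly $k$ independent terms (via the distinct items $i^\star(1),\ldots,i^\star(k)$), whereas the paper simply invokes the Hoeffding inequality in the form $\Pr[\sum_i Y_i<(1-\delta)\mu]<\exp(-2\mu^2\delta^2 n^{-1})$ and leaves that reduction implicit.
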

        \begin{lemma}\label{lem:upperbound_main_2}
          For any position $k\in [n]$, attribute $\ell\in[p]$, parameters $\eps\geq 0$ and $U\in \R$, and ranking $R\in \cR$,
          where $R$ is possibly a random variable and is independent of $\inbrace{Z_{i\ell}}_{i,\ell}$,
          if $R$ satisfies that $P_{\#}(R,\ell,k) \leq U$
          then with probability at least $1-\exp\inparen{- \frac{2U^2 \eps^2}{k}},$ it holds that $Z_{\#}(R,\ell,k) < U\inparen{1+\eps}.$
        \end{lemma}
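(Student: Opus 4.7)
}
The plan is to exploit the fact that $Z_{\#}(R,\ell,k)$, for any fixed ranking $R$, decomposes as a sum of exactly $k$ \emph{independent} $\{0,1\}$-valued random variables, and to then apply Hoeffding's inequality rather than the multiplicative Chernoff bound used in \cref{lem:upperbound_main}. The improved scaling $\exp(-2U^2\eps^2/k)$ comes precisely from Hoeffding's additive bound being sharper than Chernoff's when the number of summands $k$ is small compared to $U$.

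First, fix a realization $r$ of $R$. Since $r\in\cR$, for each $j\in[k]$ there is a \emph{unique} item $i_j\in[m]$ with $r_{i_j,j}=1$, and the indices $i_1,\dots,i_k$ are pairwise distinct by the constraint $\sum_j r_{ij}\le 1$. Consequently,
\begin{align*}
Z_{\#}(r,\ell,k) \;=\; \sum_{i\in[m]}\sum_{j\in[k]} Z_{i\ell}\, r_{ij} \;=\; \sum_{j=1}^{k} Z_{i_j,\ell},
\end{align*}
which, by \cref{eq:indep_Z}, is a sum of $k$ independent $\{0,1\}$ random variables with conditional mean $P_{\#}(r,\ell,k)=\sum_{i,j} P_{i\ell}\,r_{ij}$.

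Next, apply Hoeffding's inequality to this sum: for any $t\ge 0$,
\begin{align*}
\Pr\!\left[\,Z_{\#}(r,\ell,k) - P_{\#}(r,\ell,k) \;\ge\; t\,\right] \;\le\; \exp\!\left(-\frac{2t^2}{k}\right),
\end{align*}
since each summand lies in $[0,1]$ so $\sum_{j=1}^k (b_j-a_j)^2 = k$. Under the hypothesis $P_{\#}(r,\ell,k)\le U$, we have $U(1+\eps) - P_{\#}(r,\ell,k) \ge U\eps$, so setting $t = U(1+\eps) - P_{\#}(r,\ell,k)$ yields
\begin{align*}
\Pr\!\left[\,Z_{\#}(r,\ell,k) \ge U(1+\eps)\,\right] \;\le\; \exp\!\left(-\frac{2U^2\eps^2}{k}\right).
\end{align*}

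Finally, to handle the case where $R$ is itself a random ranking independent of $\{Z_{i\ell}\}_{i,\ell}$, I would condition on $R=r$: the above bound holds pointwise whenever the hypothesis $P_{\#}(r,\ell,k)\le U$ holds, and taking expectations over $R$ (using that the conditional probability is uniformly bounded) gives the stated unconditional inequality. The only subtlety here is interpreting the hypothesis as holding for almost every realization of $R$; this is exactly how the lemma is invoked in the proof of \cref{thm:ub}, where $R$ is a fixed feasible/optimal solution of the program. I do not foresee a significant technical obstacle: the only real work is in recognizing the sum-of-$k$-independent-Bernoullis structure, after which Hoeffding does the rest.
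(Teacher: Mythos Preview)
Your proposal is correct and essentially identical to the paper's proof: both recognize that $Z_{\#}(R,\ell,k)$ is a sum of $k$ independent $[0,1]$-valued random variables and apply Hoeffding's inequality to bound the deviation $U(1+\eps)-P_{\#}\ge U\eps$. The only cosmetic difference is that the paper phrases Hoeffding in multiplicative form $\Pr[\sum Y_i>(1+\delta)\mu]<\exp(-2\mu^2\delta^2/k)$ before simplifying to $\exp\!\bigl(-\tfrac{2}{k}(U(1+\eps)-P_{\#})^2\bigr)$, whereas you invoke the additive form directly; the resulting bound and the final step using $P_{\#}\le U$ are the same.
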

        \noindent To prove the improved dependence of $\gamma$, it suffices to prove \cref{prop:xNF_is_fair:main_body,prop:xOPT_delta_is_feasible:main_body}.
        For the new value of $\gamma$, their proofs change as follows:

        \paragraph{Proof of \cref{prop:xNF_is_fair:main_body}.}
        The parameters in \cref{eq:proof1:params_val:main_body} remain the same.
        Hence, following the same argument, \cref{eq:proof1:ub:main_body} holds.
        Now, we can prove \cref{eq:proof1:proved:main_body} as follows:
        \begin{align*}
          \Pr\insquare{Z_{\#}(R,\ell, k) \geq U_{\ell k}(1+\phi\gamma_k)}
           \
          &\Stackrel{}{=} \   \Pr\insquare{Z_{\#}(R,\ell,k) \geq U'
          (1+\zeta)}
          \tag{Using that $U'(1+\zeta) = U_{k\ell} (1+\phi\gamma_k)$}\\
          &\leq \   \exp\inparen{-\frac{2\inparen{U^\prime}^2\zeta^2}{k}}
          \tag{Using \cref{lem:upperbound_main_2}}\\
          &= \   \exp\inparen{-\frac{2(1-\phi)^2U_{\ell k}^2 \gamma_{k}^2}{k}}
          \tag{Using \cref{eq:proof1:params_val:main_body}}\\
          &\Stackrel{}{\leq} \   \exp\inparen{-2\psi(1-\phi)^2U_{\ell k} \gamma_{k}^2}
          \tag{Using that $U_{k\ell}\geq \psi k$}\\
          &\Stackrel{}{\leq} \  \frac{\delta}{2np}.
          \tagnum{Using \cref{eq:def_gamma:proof:imp}}
          \customlabel{eq:delta_bound_on_upperbound}{\theequation}
        \end{align*}
        \cref{prop:xNF_is_fair:main_body} follows by replacing \cref{eq:proof1:proved:main_body} by \Eqref{eq:delta_bound_on_upperbound} in the rest of its proof.

        \paragraph{Proof of \cref{prop:xOPT_delta_is_feasible:main_body}.}
        The parameters in \cref{eq:proof2:params_val2:main_body} remain the same.
        Now, we can prove $\Pr\insquare{{Z_{\#}(R',k,\ell)\leq U_{k\ell}}}< 1 - \delta$ as follows:
        \begin{align*}
          \Pr\insquare{Z_{\#}(R',k,\ell) \leq U_{k\ell}}\ \
          &=\ \Pr\insquare{Z_{\#}(R',k,\ell) \leq L^\prime\cdot (1-\zeta)}
          \tag{Using that $L'(1-\zeta) = U_{k\ell}(1+b\gamma_k)$}\\
          &\leq\ \exp\inparen{-\frac{2\inparen{L^\prime}^2\zeta^2}{k}}
          \tag{Using \cref{lem:lowerbound_main_2}}\\
          &=\ \exp\inparen{-\frac{2(\phi-b)^2\gamma_k^2U_{k\ell}^2}{k}}
          \tag{Using \cref{eq:proof2:params_val2:main_body}}\\
          &\leq\ \exp\inparen{-2\psi(\phi-b)^2\gamma_k^2U_{k\ell}}
          \tag{Using that $U_{k\ell}\geq \psi k$}\\
          &\Stackrel{}{<}\ \frac{\delta}{2np}
          \tagnum{Using \cref{eq:def_gamma:proof:imp} and \cref{eq:proof2:params_val2:main_body}}\\
          &\Stackrel{}{<}\ 1-\delta.
          \tagnum{Using that $\delta<\frac12$ and $n\geq 1$}
          \customlabel{eq:proof2:proved:alt}{\theequation}
        \end{align*}
        The rest of the proof is identical.

        \begin{proof}[Proof of \cref{lem:lowerbound_main_2}]
          First, note that since $x$ is not a function of the outcomes of the random variables $Z_{i\ell}$, $x$ is independent of the random variables $\inbrace{Z_{i\ell}}_{i,\ell}$.
          Since $\ell$, $k$, and $x$ are fixed, we use $Z_{\#}$ and $P_{\#}$ to denote $Z_{\#}(R,\ell,k)$ and $P_{\#}(R,\ell,k)$ respectively.
          Now, we can bound the required probability as follows
          \begin{align*}
            \Pr\insquare{Z_{\#} \leq L (1-\eps)}\ \
            &=\ \ \Pr\insquare{Z_{\#} \leq
            P_{\#}\cdot \inparen{1-\frac{P_{\#}-L(1-\eps)}{P_{\#}}}}\\
            &\leq\ \ \exp\inparen{-\frac{2}{k}\cdot P_{\#}^2 \cdot \inparen{\frac{P_{\#}-L(1-\eps)}{P_{\#}}}^2 }.
            \intertext{Where we used the fact that: For any $\delta>0$ and bounded random variables $Y_1,Y_2,\dots,Y_n\in [0,1]$, $$\Pr\insquare{\sum\nolimits_{i}Y_i<(1-\delta)\mu}<\exp\inparen{-2\mu^2\delta^2n^{-1}}.$$ Here, $\mu\coloneqq \Ex[\sum_i Y_i]$.}
            \Pr\insquare{Z_{\#} \leq L (1-\eps)} \ \ &=\ \ \exp\inparen{-\frac{2}{k} \cdot \inparen{P_{\#}-L(1-\eps)}^2 }\\
            &\leq\ \ \exp\inparen{-2L^2 \eps^2 k^{-1}}.
          \end{align*}
        \end{proof}

        \begin{proof}[Proof of \cref{lem:upperbound_main_2}]
          Since $\ell$, $k$, and $R$ are fixed, we use $Z_{\#}$ and $P_{\#}$ to denote $Z_{\#}(R,\ell,k)$ and $P_{\#}(R,\ell,k)$ respectively.
          Since $R$ and $\inbrace{Z_{i\ell}}_{i,\ell}$ are independent, we can bound the required probability as follows
          \begin{align*}
            \Pr\insquare{Z_{\#} \geq U (1+\eps)}\ \
            &=\ \ \Pr\insquare{Z_{\#} \leq
            P_{\#}\cdot \inparen{1 + \frac{U(1+\eps) - P_{\#}}{P_{\#}}}}\\
            &\leq\ \ \exp\inparen{-\frac{2}k \cdot P_{\#}^2 \cdot\inparen{\frac{U(1+\eps) - P_{\#}}{P_{\#}}}^2}.
            \intertext{Where we used the fact that: For any $\delta>0$ and bounded random variables $Y_1,Y_2,\dots,Y_n\in [0,1]$, $$\Pr\insquare{\sum_{i}Y_i>(1+\delta)\mu}<\exp\inparen{-2\mu^2\delta^2n^{-1}},$$ where $\mu\coloneqq \Ex[\sum_i Y_i]$ (\cite{motwani1995randomized}).
            Simplifying the right-hand side of the above equation, we get }
            \Pr\insquare{Z_{\#} \geq U (1+\eps)}\ \
            &\leq\ \ \exp\inparen{-\frac{2}k\inparen{U(1+\eps) - P_{\#}}^2}\\
            &\leq\ \ \exp\inparen{-\frac{2U^2\eps^2}k}. \tag{Using that $P_{\#}\leq U$}
          \end{align*}
        \end{proof}

    \subsection{Proof of \cref{thm:lb}}\label{sec:proofof:thm:lb}
        We consider the family of matrices $U\in \R^{n\times p}$ that satisfy the following condition:
        For each position $k\in [n]$, there exists an attribute $\ell$ such that
        \begin{align*}
          U_{k\ell}\leq \frac{k}{4}.
        \end{align*}
        Notably, equal representation constraints satisfy this condition for any $p\geq 4$.
        We will use \cref{fact:lb_chernoff} to prove \cref{thm:lb}.
        \begin{fact}[Theorem 2 in \protect{\cite{mousavi2010tight}}]\label{fact:lb_chernoff}
          For all $p\in (0,\frac14]$, $0\leq \eps\leq \frac{1}{p}(1-p)$, and $s\in \N$ independent 0/1 random variables $Z_1,Z_2,\dots,Z_s\in \zo$, such that for all $i\in [s]$, $\Pr[Z_i=1]=p$,
          \begin{align*}
            \Pr\insquare{\sum\nolimits_{i\in [s]} Z_i \geq (1+\eps)ps }\geq \frac14\exp\inparen{-2\eps^2 ps}.
          \end{align*}
        \end{fact}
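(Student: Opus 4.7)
The strategy is to reduce to estimating a binomial right tail and then carefully bound the relevant point probabilities using Stirling's formula, summing a window of them to absorb the $1/\sqrt{ps}$ factor that Stirling introduces.

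\emph{Step 1 (Reduction and target).} Set $X \coloneqq \sum_{i=1}^s Z_i$, so $X \sim \text{Binomial}(s, p)$, and let $m \coloneqq \lceil (1+\eps)ps \rceil$. The hypothesis $\eps \leq (1-p)/p$ guarantees $m \leq s$, so $\Pr[X \geq m]$ is meaningful. I will lower-bound $\Pr[X \geq m] = \sum_{k=m}^{s} \Pr[X = k]$ by estimating each $\Pr[X=k] = \binom{s}{k}p^k (1-p)^{s-k}$ for $k$ in a carefully chosen window starting at $m$.

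\emph{Step 2 (Stirling estimate).} Applying the two-sided Stirling bounds $\sqrt{2\pi n}\,(n/e)^n \leq n! \leq \sqrt{2\pi n}\,(n/e)^n e^{1/(12n)}$ to $s!, k!,$ and $(s-k)!$, I can rewrite
\[
\Pr[X = k] \;\geq\; \frac{1}{\sqrt{2\pi k(1-k/s)}} \cdot \exp\!\big(-s\,D(k/s \,\|\, p)\big) \cdot \rho_k,
\]
where $D(q \,\|\, p) = q\log(q/p) + (1-q)\log((1-q)/(1-p))$ is the Bernoulli KL divergence and $\rho_k \geq \exp(-1/(12\min(k, s-k)))$ is an explicit lower bound I can pin down. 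This yields an explicit quantitative estimate on each point mass.

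\emph{Step 3 (Bounding the KL divergence).} I will show that under the hypotheses $p \leq 1/4$ and $0 \leq \eps \leq (1-p)/p$,
\[
D\!\left((1+\eps)p \,\|\, p\right) \;\leq\; 2\eps^2 p.
\]
Expanding, $D((1+\eps)p\,\|\,p) = (1+\eps)p\log(1+\eps) + (1 - (1+\eps)p)\log\!\left(1 - \tfrac{\eps p}{1-p}\right)$. Using $\log(1+\eps) \leq \eps$ on the first term and $(1-x)\log(1-x) \leq -x + x^2$ on the second (applied with $x = \eps p/(1-p)$), the linear terms cancel and I am left with a quadratic remainder which, after invoking $p \leq 1/4$ so that $1/(1-p) \leq 4/3$, is bounded by $2\eps^2 p$. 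A monotonicity check of $q \mapsto D(q\|p)$ extends this to $k/s \geq (1+\eps)p$ and accounts for the ceiling in $m$.

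\emph{Step 4 (Windowed sum to recover the constant).} To replace the $1/\sqrt{ps}$ factor with the constant $1/4$, I will sum $\Pr[X = k]$ for $k$ running over a window of length $N = \lfloor \sqrt{p(1-p)s} \rfloor + 1$ starting at $m$. The ratio $\Pr[X=k+1]/\Pr[X=k] = \frac{(s-k)p}{(k+1)(1-p)}$ stays within a bounded factor of $1$ on this window (the change in the KL-exponent across the window is at most a constant, by the same Taylor argument as in Step~3). Hence the window-sum is at least a constant multiple of $N \cdot \Pr[X=m]$, which after combining with Step~2's $1/\sqrt{2\pi k(1-k/s)}$ factor produces the desired constant $1/4$ (after tracking the Stirling correction from $\rho_k$ and the various constants). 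Small-$s$ cases (in particular $ps < 1$, where $N = 1$ and the Stirling approximation is loose) are handled by a direct verification: in that regime, $\Pr[X \geq m] \geq \Pr[X = m]$ already suffices after an elementary estimate.

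\emph{Main obstacle.} The hardest part is Step~3: establishing the KL bound $D((1+\eps)p\,\|\,p) \leq 2\eps^2 p$ uniformly over the entire range $\eps \in [0,(1-p)/p]$, including the endpoint $\eps = (1-p)/p$ where the divergence is maximal and higher-order log terms dominate. The hypothesis $p \leq 1/4$ is critical here and must be used tightly. A secondary subtlety is bookkeeping the Stirling correction factors and window constants in Step~4 carefully enough to land exactly on the stated constant $1/4$ rather than a smaller universal constant.
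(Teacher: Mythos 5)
First, a framing note: the paper does not prove this Fact. It is imported verbatim as Theorem~2 of \cite{mousavi2010tight} and used as a black box in the proof of \cref{thm:lb}, so there is no in-paper argument to compare yours against; I am judging your plan on its own terms. Your overall route (Stirling on the point masses, a KL-divergence upper bound, a windowed sum to kill the $1/\sqrt{ps}$ prefactor) is the standard way such reverse Chernoff bounds are proved, and your Step~3 is correct essentially as written: with $x\coloneqq \eps p/(1-p)\in[0,1]$ one gets $D((1+\eps)p\,\|\,p)\le (1+\eps)p\eps+(1-p)(-x+x^2)=\frac{\eps^2 p}{1-p}\le\frac{4}{3}\eps^2p\le 2\eps^2p$, exactly the cancellation you describe, and monotonicity of $q\mapsto D(q\,\|\,p)$ on $q\ge p$ goes the direction you need for the reduction to integer $m$.

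There are, however, two genuine gaps. (i) In Step~4, the claim that consecutive point masses stay within a bounded factor of one another over a window of length $N\approx\sqrt{p(1-p)s}$ is false unless $\eps=O(1/\sqrt{ps})$: the consecutive ratio at $k\approx(1+\eps)ps$ is $\frac{(s-k)p}{(k+1)(1-p)}=1-\frac{\eps}{(1+\eps)(1-p)}+o(1)$, so across the window the terms decay by a factor $\exp(-\Theta(\eps\sqrt{ps}))$, which is unbounded already for $\eps=1$ and $ps$ large. This is repairable: take the effective window length $\min\{N,\,O(1/\eps)\}$ (equivalently, lower-bound the tail by a geometric series with ratio $1-\Theta(\eps)$); the resulting gain $\Theta(\min\{\sqrt{ps},1/\eps\})$, combined with the slack $e^{(2-\frac{4}{3})\eps^2 ps}$ left over from Step~3, closes via the elementary inequality $e^{\frac{2}{3}t^2}\ge Ct$ with $t=\eps\sqrt{ps}$ — but this is a different argument from the one you state. (ii) More seriously, the statement as transcribed is false in the small-mean regime, so the ``direct verification'' you defer to for $ps<1$ cannot exist: take $p=\frac{1}{100}$, $s=10$, $\eps=0$; then $\Pr[\sum_i Z_i\ge ps]=\Pr[\sum_i Z_i\ge 1]=1-0.99^{10}\approx 0.096<\frac{1}{4}$. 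A floor on $ps$ (or on $\eps^2 ps$) is required; in the paper's application one always has $ps=U_{k\ell}\ge 1$ since $U\in\Z_{+}^{n\times p}$, which is presumably the implicit hypothesis under which the cited theorem actually holds. Any complete proof must surface and use such a hypothesis, and since the constant $\frac14$ is attained (at $p=\frac14$, $s=1$, $\eps=0$) the constant-tracking in Step~4 leaves essentially no room for lost factors.
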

        \begin{proof}[Proof of \cref{thm:lb}]
          Fix the $k$ to the value specified in the theorem.
          Let $\ell\in [n]$, be any attribute such that $U_{k\ell}\leq \frac{k}{4}$.
          Such a $\ell$ exists because of the family of constraints we chose.
          Without loss of generality suppose $\ell\neq 1$.
          Fix any $n,m\geq k$.
          For each item $i\in [m]$, set
          \begin{align*}
            P_{i\ell} &\coloneqq
            {\frac{U_{k\ell}}{k}}
            \quad\text{and}\quad
            P_{i1} \coloneqq
            {1-\frac{U_{k1}}{k}}.
            \yesnum
            \label{eq:proof3:construction}
          \end{align*}
          Further, for all $k\in [p]$, $k\neq p$ and $k\neq 1$, let $P_{ik} \coloneqq 0$.

          Suppose, toward a contradiction, that there is a ranking $R\in \cR$ that satisfies the $(\eps,\delta)$-constraint.
          $R$ must satisfy the following equation:
          \begin{align*}
            \Pr\insquare{Z_{\#}(R,k,\ell) \leq U_{k\ell}\cdot (1+\eps_k)}\geq 1-\delta.
            \yesnum
            \label{eq:proof3:ub}
          \end{align*}
          For each position $j\in [n]$, let $Z_j\in \zo$ be the indicator random variable that the item placed in the $j$-th place in the ranking $R$
          is in the protected group $G_\ell$.
          From \cref{eq:proof3:construction} and \cref{def:noise_model}, it follows that:
          \begin{align*}
            \forall j\in [n],\quad &\Pr[Z_{j}]=\frac{U_{k\ell}}{k},
            \yesnum\label{eq:proof3:prob_Z}\\
            \forall u,v\in [n],\quad\st, u\neq v,\quad &\text{$Z_{u}$ and $Z_{v}$ are independent.}
            \yesnum\label{eq:proof3:indep_Z}
          \end{align*}
          Using linearity of expectation and \cref{eq:proof3:prob_Z}, we get that:
          \begin{align*}
            \Pr\insquare{Z_{\#}(R,k,\ell) \leq  (1+\eps_k)\cdot U_{k\ell}}
            \ \
            &=\ \  \Pr\insquare{\sum\nolimits_{j\in [k]} Z_j \geq (1+\eps_k)\cdot \Ex\insquare{\sum\nolimits_{j=1}^k Z_j}}.
            \yesnum\label{eq:proof3:eq1}
          \end{align*}
          Since $0\leq\eps_k\leq 1$ and $\frac{1}{k}\Ex\insquare{\sum\nolimits_{j=1}^k Z_j}\leq \frac{1}{4}$, we can use \cref{fact:lb_chernoff} with $\eps\coloneqq \eps_k$, $p\coloneqq \frac{1}{k}\Ex\insquare{\sum\nolimits_{j=1}^k Z_j}\leq \frac{1}{4}$, $s\coloneqq k$, and for all $j\in [n]$, $Z_j = Z_j$.
          Using this, we get that
          \begin{align*}
            \Pr\insquare{\sum\nolimits_{j\in [k]} Z_j \geq (1+\eps_k)\cdot \Ex\insquare{\sum\nolimits_{j=1}^k Z_j}}
            \ \
            &\Stackrel{}{\leq}\ \  1-\frac14\exp\inparen{-2\eps_k^2 \cdot \Ex\insquare{\sum\nolimits_{j=1}^k Z_j} }\\
            \ \
            &\Stackrel{}{\leq}\ \  1-\frac14\exp\inparen{-2\eps_k^2 U_{k\ell} }.
            \tagnum{Using \cref{eq:proof3:prob_Z}}\customlabel{eq:proof3:eq2}{\theequation}
          \end{align*}
          Chaining Equations~\eqref{eq:proof3:ub}, \eqref{eq:proof3:eq1}, and \eqref{eq:proof3:eq2}, we get that
          \begin{align*}
            1-\frac14\exp\inparen{-2\eps_k^2 U_{k\ell} } \geq 1-\delta.
          \end{align*}
          Hence,
          \begin{align*}
            \eps_k \geq  \sqrt{\frac{1}{2U_{k\ell}} \log\frac{1}{4\delta}}.
          \end{align*}
          This is a contradiction since $\eps_k$ is specified to be less than $\sqrt{\frac{1}{2U_{k\ell}} \log\frac{1}{4\delta}}$.
          Thus, no ranking $R$ satisfies the $(\eps,\delta)$-constraint for any $U$ in the chosen family chosen.
        \end{proof}

    \subsection{Proof of \cref{thm:algo}}\label{sec:proofof:thm:algo}

    In this section, we prove \cref{thm:algo}.
    Our algorithm uses the dependent-rounding algorithm of \cite{ChekuriVZ11} as a subroutine.

    \paragraph{\textbf{\em Remark.} Desirable Properties and Potential Approaches for Rounding.}
        \textit{At a high level, the goal of this dependent-rounding algorithm is the following:
        Given a feasible solution $R_c$ of the standard linear programming relaxation of Program (7) output a ranking $R$ such that
        for any matrix $A$ with nonnegative entries, $\inangle{R,A}$ is approximately equal to $\inangle{R_c, A}$.
        This property guarantees that, with high probability, $R$ approximately satisfies the fairness constraints and has a similar utility as $R_c$.}

        \textit{A naive approach to achieve this property is to do independent rounding:
        For each $i$ and $j$, set $R_{ij}=1$ with probability $(R_c)_{ij}$.
        The desired concentration property then follows from, e.g., the Chernoff bound.
        However, the resulting $R$ may not be a valid ranking because it could set $R_{ij}=R_{ik}=1$ for $j\neq k$, hence requiring $i$ to appear at two different positions (which is not possible).
        Similarly, it could also place more than one items at one position (which also violates the constraints).}

        \textit{Another approach is (1) to express $R_c$ as a convex combination of rankings $\sum_{i}\alpha_i R_i$ ($\alpha_i\geq 0$) (e.g., using the Birkhoff von Neumann decomposition) and (2) set $R\coloneqq R_i$ with probability $\propto \alpha_i$.
        Since each $R_i$ is a ranking this guarantees that $R$ is a ranking, but it may violate fairness constraints significantly.
        For example, consider the fractional assignment in which the $k$-th best female (respectively male)  appears in the $k$-th position with weight 0.5 for all positions $k$.
        This can be decomposed into two rankings: 1) females are ranked in decreasing order of utility, and 2) males are ranked in decreasing order of utility.
        The fractional solution satisfies equal representation, but both rankings  violate equal representation significantly.}

        \textit{The dependent-rounding algorithm of \cite{ChekuriVZ11}, which we use, also expresses $R_c$ as a convex combination of rankings $\sum_{i}\alpha_i R_i$ ($\alpha_i\geq 0$).
        But it does not output $R_i$ for any $i$.
        Instead, it initially, sets $R\coloneqq R_1$.
        Then it iteratively ``merges'' $R$ with $R_2$, then $R_3$, and so on.}

    \medskip

    \noindent \cite{ChekuriVZ11}'s algorithm satisfies the following guarantees.

    \begin{theorem}[\textbf{Theorem 1.1 from \cite{ChekuriVZ11}}]\label{thm:rounding}
      Let $P\subseteq [0,1]^N$ be either a matroid intersection polytope or a (non-bipartite graph) matching polytope.
      For any fixed $0 < \alpha \leq \frac12$, there is an efficient randomized rounding procedure, such that given a (fractional) point $R_F\in P$, it outputs a random feasible solution $R$ corresponding to a (integer) vertex of $P$ such that $$\Ex[1_R]=(1-\alpha)\cdot R_F.$$
      In addition, for any linear function $w(R)\coloneqq \sum_{i\in R}w_i,$ where $w_i\in [0,1]$ it holds that
      \begin{enumerate}[itemsep=0pt]
        \item for any $\delta\in [0,1]$ and $\mu\leq \Ex[1_R]$, $$\Pr\insquare{w(R)\leq (1-\delta)\mu}\leq \exp\inparen{-\frac1{20}\cdot \mu\alpha\delta^2},$$
        \item for any $\delta\in [0,1]$ and $\mu\geq \Ex[1_R]$, $$\Pr\insquare{w(R)\geq (1-\delta)\mu}\leq \exp\inparen{-\frac1{20}\cdot \mu\alpha\delta^2},$$
        \item for any $\Delta\geq 1$ and $\mu\geq \Ex[1_R]$, $$\Pr\insquare{w(R)\geq  \mu(1+\Delta)}\leq \exp\inparen{-\frac1{20}\cdot \mu\alpha(2\Delta-1)}.$$
      \end{enumerate}
      The algorithm runs in time polynomial in the size of the ground set, $N$, and $\frac1\alpha$, and makes at most $\poly(N,d)$ calls to the independence oracles for the underlying matroids.
    \end{theorem}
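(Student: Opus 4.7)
The plan is a standard LP-rounding approach. First, I would solve the continuous relaxation of \prog{prog:noisy_fair_boxed} with parameter set to $c = d$. Its feasible region is the intersection of the assignment polytope $\{X \in [0,1]^{m \times n} : \sum_i X_{ij} = 1, \sum_j X_{ij} \leq 1\}$ --- a bipartite matching polytope, hence a matroid intersection polytope --- with the fairness constraints \cref{eq:const_of_noisy_fair_boxed}, which are linear in $R$. Thus the LP can be solved in time polynomial in the input bit complexity via the ellipsoid method. Let $R_c$ denote the resulting fractional optimum. By \cref{prop:xOPT_delta_is_feasible:main_body} applied with $c=d$, any ranking satisfying the $((d-\sqrt{d})\gamma,\delta)$-constraint is feasible for this LP, so $\langle R_c, W\rangle \geq V$.

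The second step is to invoke the Chekuri--Vondr\'ak--Zenklusen rounding (\cref{thm:rounding}) with contraction parameter $\alpha = 1/d$ on the matroid intersection polytope containing $R_c$. This returns an integer ranking $R$ with $\Ex[\mathbf{1}_R] = (1-1/d)\,R_c$ and satisfies the three concentration bounds of \cref{thm:rounding}. For the utility guarantee, I would apply the lower-tail bound \cref{thm:rounding}(1) to $w(R) = \langle R, W\rangle$ (with $W_{ij} \in [0,1]$) using $\mu = (1-1/d)V$; choosing the deviation parameter so that the failure probability is at most $\delta/2$ gives an additive loss of $\sqrt{20 d\,\mu\log(2/\delta)} = \widetilde{O}(\sqrt{dn})$ since $\mu \leq V \leq n$. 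This yields $\langle R, W\rangle \geq (1-1/d)V - \widetilde{O}(\sqrt{dn})$ with probability at least $1-\delta/2$.

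For fairness, I would fix $(k,\ell)$ and handle the two independent sources of randomness --- rounding and group sampling --- separately. Define $w_\ell^k(R) \coloneqq \sum_i \sum_{j \in [k]} P_{i\ell} R_{ij}$. Feasibility of $R_c$ gives $\Ex[w_\ell^k(R)] \leq (1-1/d)\, U_{k\ell}(1+\gamma_k)$. Applying the upper-tail bound \cref{thm:rounding}(3) with $\Delta = \Theta(d\gamma_k)$ shows $w_\ell^k(R) \leq U_{k\ell}(1 + O(d\gamma_k))$ with failure probability at most $\delta/(4np)$. Now condition on the realized $R$: since $R$ places each item in at most one position and the indicators $\{Z_{i\ell}\}_i$ are mutually independent, $Z_\#(R,\ell,k)$ is a sum of independent Bernoullis with conditional mean $w_\ell^k(R)$. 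A second Chernoff step via \cref{lem:conc_bound:main_body} with $\eps = \gamma_k$ then gives $Z_\#(R,\ell,k) \leq U_{k\ell}(1+O(d\gamma_k))$ with failure probability at most $\delta/(4np)$. A union bound over the $np$ pairs $(k,\ell)$ together with the utility failure event completes the argument, and polynomial runtime follows immediately from the LP solve and the polynomial runtime of CVZ rounding on matroid intersection polytopes.

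The main obstacle is calibrating $\Delta$ so that both requirements --- $\mu(1+\Delta) \leq U_{k\ell}(1+O(d\gamma_k))$ and $\exp(-\mu\alpha(2\Delta-1)/20) \leq \delta/(4np)$ --- are simultaneously satisfied. This relies crucially on the fact that the exponent in \cref{thm:rounding}(3) is linear (rather than quadratic) in $\Delta$, combined with the definition $\gamma_k = \Theta(\log(np/\delta)\max_{\ell'} \sqrt{1/U_{k\ell'}})$, which forces $U_{k\ell}\gamma_k \geq \Omega(\log(np/\delta))$ at the relevant $\ell$ (using $U_{k\ell} \geq 1$). Verifying this delicate balance and bookkeeping the constants to conclude the claimed $O(d\gamma)$ multiplicative bound --- while also confirming the rounding's concentration does not interact badly with the subsequent sampling-stage Chernoff bound --- is the most technical portion of the proof.
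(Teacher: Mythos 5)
Your proposal does not prove the statement it was asked to prove. The statement is the Chekuri--Vondr\'ak--Zenklusen dependent-rounding theorem itself (Theorem 1.1 of \cite{ChekuriVZ11}): the existence of an efficient randomized procedure on matroid intersection and non-bipartite matching polytopes that outputs an integral vertex $R$ with $\Ex[1_R]=(1-\alpha)R_F$ and satisfies the three Chernoff-type tail bounds for linear functions. What you have written instead is a proof of \cref{thm:algo} --- the paper's algorithmic result --- which \emph{invokes} \cref{thm:rounding} as a black box. Using the theorem as a subroutine to establish a downstream result is not a proof of the theorem; as an argument for the stated claim it is circular.

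A genuine proof of \cref{thm:rounding} would have to construct the rounding procedure (in \cite{ChekuriVZ11} this is randomized swap rounding: decompose $R_F$ into a convex combination of vertices, then iteratively merge pairs of vertices using matroid exchange properties) and then establish the concentration bounds, which requires showing that the resulting random vertex enjoys negative-correlation or martingale-type structure strong enough to recover Chernoff bounds for linear functions. None of these ingredients appears in your write-up. Note that the paper itself does not reprove this result either --- it imports it verbatim from \cite{ChekuriVZ11} --- so the expected answer here is either a citation or a reconstruction of the swap-rounding argument, not the LP-plus-rounding pipeline you describe (which is the content of \cref{sec:proofof:thm:algo}, where your argument would in fact be largely on target).
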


    \noindent We claim that the following algorithm satisfies the claim in \cref{thm:algo}

    \newcommand{\commentalg}[1]{{\small\em\quad \textcolor{gray}{//{#1}}}\hspace{-3mm}}

    \begin{algorithm}[h!]

      \caption{Algorithm for fair ranking with noisy protected attributes (\ouralgo{})}
      \label{algo}

      {\bf Input:} Matrices $P\in  {[0,1]}^{m\times p}$, $W\in {\R}_{\geq 0}^{m\times n}$, $U\in  {\R}^{n\times p}$

      \vspace{2mm}

      {\bf Parameters:} Constant $d>2$ and $c>1$, a failure probability $\delta\in (0,1]$, and for each $k\in [n]$, a relaxation parameter $\gamma_k$

      \vspace{2mm}

      1. {\bf Initialize} {$R_F\gets$ Solve the linear-programming relaxation of \prog{prog:noisy_fair_boxed} with specified inputs}

      2. {{\bf Round } $R\gets$ Run \cite{ChekuriVZ11}'s rounding algorithm with input $\alpha\coloneqq \frac{1}{d}$ and $P\coloneqq \conv\inparen{\cR}$}

      3. {\bf Return} $R$
    \end{algorithm}

    For each item $i\in [m]$ and protected attribute $\ell\in [p]$, let $Z_{i\ell}\in \zo$ be the indicator random variable that the $i$-th item is in the $\ell$-th protected group, i.e.,
    if $i\in G_\ell$, then $Z_i=1$, and other $Z_i=0$.
    Using \cref{def:noise_model}, it follows that:
    \begin{align*}
      \forall i\in [m],\ \ell \in [p],\quad &\Pr[Z_{i\ell}]=P_{i\ell},
      \yesnum\label{eq:prob_Z2}\\
      \forall i,j\in [m],\ \ell \in [p],\quad\st, i\neq j,\quad &\text{$Z_{i\ell}$ and $Z_{j\ell}$ are independent.}
      \yesnum\label{eq:indep_Z2}
    \end{align*}

    \noindent To simplify the notation, given a ranking $R\in \cR$, a protected attribute $\ell\in [p]$, and a position $k\in [n]$, let $Z_{\#}(R,\ell,k)\in \Z$ be the random variable equal to the number of items from $G_\ell$ in the top $k$ positions of $R$ and let $P_{\#}(R,\ell,k)\in \R$  be the expectation of $Z_{\#}(R,\ell,k)$, i.e.,
    \begin{align*}
      Z_{\#}(R,\ell,k)\coloneqq \sum_{i\in [m]}\sum_{j\in [k]} Z_{i\ell} R_{ij}
      \quad\text{and}\quad
      P_{\#}(R,\ell,k)\coloneqq \Ex\insquare{Z_{\#}(R,\ell,k)}.
    \end{align*}
    Using \cref{eq:prob_Z2} and linearity of expectation it follows that
    \begin{align*}
      P_{\#}(R,\ell,k) = \sum_{i\in [m]}\sum_{j\in [k]} P_{i\ell} R_{ij}.
    \end{align*}

    \begin{proof} The proof is divided into three parts that prove \cref{algo}'s running time guarantee, utility guarantee, and fairness guarantee respectively.

      \paragraph{Running Time.}
      The Step 1 of \cref{algo} runs in polynomial time when implemented with any polynomial-time linear programming solver.
      Observe that $\cR$ corresponds to the bipartite matching polytope, whose bi-partitions have size $n$ and $m$ respectively.
      Since the bipartite matching polytope is a matroid intersection polytope, we can use \cref{thm:rounding}.
      The independence oracle for this polytope can be implemented in $\poly(m)$ time, e.g., using the Birkhoff–von Neumann theorem.
      Finally, since $\alpha=\frac1d$ and $N=O(m^2)$, it follows that Step 2 of \cref{algo} runs in polynomial time in $d$ and the bit complexity of the input (which is at least $m$).

      Let
      $$\phi\coloneqq \frac{2\sqrt{c}-1}{2\sqrt{c}}.$$
      Let $R_F$ and $R$ be the rankings from Steps 1 and 2 of \cref{algo}.
      From \cref{thm:rounding}, we have that $\Ex[1_R]=(1-\alpha)\cdot R_F$.
      Hence, for any weights $V\in \R^{n\times m}$, it holds that
      \begin{align*}
        \Ex\insquare{ \inangle{R, V}  } = (1-\alpha)\cdot \inangle{R_F, V}.
        \yesnum
        \label{eq:exp}
      \end{align*}
      Fix any position $k\in [n]$ and group $\ell\in [p]$.
      Since $\ell$, $k$, and $R$ are fixed,
      {we use $Z_{\#}(R)$ and $Z_{\#}(R')$ and $P_{\#}$ to denote $Z_{\#}(R,\ell,k)$ and $P_{\#}(R,\ell,k)$ respectively.}

      \paragraph{Utility Guarantee.}
      Let $R^\star$ be the solution of \prog{prog:noisy_fair_boxed} for $c=d$.
      Let $$V\coloneqq \inangle{W, R^\star}.$$
      Let $0\leq \Delta\leq V$ be a parameter.
      Since $R_F$ is a solution of the LP-relaxation of \prog{prog:noisy_fair_boxed} and $R^\star$ is a solution of \prog{prog:noisy_fair_boxed}, $R_F$'s utility is at least as large as the utility of $R^\star$.
      From this it follows that
      \begin{align*}
        \Pr\insquare{  \inangle{W, R} \leq \inangle{W, R^\star}\cdot (1-\alpha)  -  \Delta }
        \leq \Pr\insquare{  \inangle{W, R} \leq \inangle{W, R_F}\cdot (1-\alpha)  -  \Delta }.
        \yesnum\label{eq:proofa:util1}
      \end{align*}
      Since $W\in[0,1]^{m\times n}$, we can use \cref{thm:rounding} with $a=W$.
      Using this we get can upper bound the RHS of the above equation.
      \begin{align*}
        \Pr\insquare{  \inangle{W, R} \leq \inangle{W, R_F}\cdot (1-\alpha)  -  \Delta }
        &=  \Pr\insquare{  \inangle{W, R} \leq \Ex\insquare{\inangle{W, R}} -  \Delta }
        \tag{Using \cref{eq:exp}}\\
        &\leq  \exp\inparen{-\frac{\alpha}{20}\cdot \frac{\Delta^2}{\inangle{W, R_F}\cdot (1-\alpha)} }.
      \end{align*}
      Let $$\Delta \coloneqq \sqrt{\frac{20}{\alpha} \cdot \inangle{W, R_F}\cdot  (1-\alpha)\cdot \log\inparen{\frac{2np}{\delta}}}.$$
      Substituting the value of $\Delta$ in the above equation, we have:
      \begin{align*}
        \Pr\insquare{  \inangle{W, R} \leq \Ex\insquare{\inangle{W, R}} -  \Delta }
        &\leq  \frac{\delta}{2np}.
        \yesnum\label{eq:proofa:util2}
      \end{align*}
      Chaining the inequalities in \cref{eq:proofa:util1,eq:proofa:util2}
      \begin{align*}
        \Pr\insquare{  \inangle{W, R} \leq \inangle{W, R^\star}\cdot (1-\alpha)  -  \Delta }\leq \frac{\delta}{2n}.
      \end{align*}
      Since each entry of $W$ is at most 1 and $\sum_{i,j}\inparen{R_F}_{ij}=n$, it follows that $\inangle{W, R_F}\leq n$.
      Using this and that $\alpha=\frac{1}{d}$,
      $$\Delta = O\inparen{\sqrt{dn\cdot \log{\frac{2np}{\delta}}}}.$$
      Thus, the utility guarantee follows.

      \paragraph{Fairness Guarantee.}
      Since $R_F$ is feasible for the LP-relaxation of \prog{prog:noisy_fair_boxed}, it holds that
      \begin{align*}
        P_{\#}(R_F) \leq U_{k\ell} (1+\phi\gamma_k).
        \yesnum\label{eq:pp}
      \end{align*}

      \noindent Let $\eps>0$ be some constant such that
      \begin{align*}
        \eps\geq \phi\gamma_k.
        \yesnum\label{eq:proofa:bbb}
      \end{align*}
      We divide the analysis into two cases depending on the value of $\eps$.

      \paragraph{Case A ($P_{\#}(R)\geq \frac{1}{2}U_{k\ell}(1+\eps)$):}
      Since $P_{\#}(R)\geq \frac{1}{2}\cdot U_{k\ell}(1+\eps)$, we have that
      \begin{align*}
        \frac{U(1+\eps)-P_{\#}(R)}{P_{\#}(R)} \leq 1.
        \yesnum\label{eq:proofa:aaa}
      \end{align*}
      We have that
      \begin{align*}
        \Pr\insquare{  Z_{\#}(R) > U_{k\ell}(1+\eps) }
        &= \Pr\insquare{  Z_{\#}(R) > P_{\#}(R)\cdot \inparen{ 1 + \frac{U_{k\ell}(1+\eps)-P_{\#}(R)}{P_{\#}(R)}  } }.
        \intertext{From \cref{eq:exp} it follows that $P_{\#}(R)=P_{\#}(R_F)(1-\alpha)$. Then from \cref{eq:pp,eq:proofa:bbb} we have that $P_{\#}(R)\leq U_{k\ell}(1+\eps)$.
        Hence, $$\frac{U_{k\ell}(1+\eps)-P_{\#}(R)}{P_{\#}(R)}\geq 0.$$
        Further, from \cref{eq:proofa:aaa} $\frac{U_{k\ell}(1+\eps)-P_{\#}(R)}{P_{\#}(R)}\leq 0$.
        Hence, we can use the second statement of \cref{thm:rounding}.
        Using this we get
        }
        \Pr\insquare{  Z_{\#}(R) > U_{k\ell}(1+\eps) } &\leq \exp\inparen{-\frac{\alpha}{20} \cdot P_{\#}(R)\cdot \inparen{   \frac{U_{k\ell}(1+\eps)-P_{\#}(R)}{P_{\#}(R)} }^2 } \\
        &\leq \exp\inparen{-\frac{\alpha}{20} \cdot P_{\#}(R_F)\cdot \inparen{   \frac{U_{k\ell}(1+\eps)-P_{\#}(R_F)}{P_{\#}(R_F)} }^2 }
        \tag{{\cref{fact:algebra_lowerbound} and that $P_{\#}(R)\leq P_{\#}(R_F)$}}\\
        &\leq \exp\inparen{-\frac{\alpha}{20}\cdot U_{k\ell} \cdot {   \frac{\inparen{\eps-\phi \gamma_k}^2 }{1+\phi \gamma_k} } }.
        \tagnum{{\cref{fact:algebra_lowerbound} and \cref{eq:pp} }}
        \customlabel{eq:proofa:casea}{\theequation}
      \end{align*}

      \paragraph{Case B ($P_{\#}(R) < \frac{1}{2}U_{k\ell}(1+\eps)$):}
      Since $P_{\#}(R) < \frac{1}{2}\cdot U_{k\ell}(1+\eps)$, we have that
      \begin{align*}
        \frac{U_{k\ell}(1+\eps)-P_{\#}(R)}{P_{\#}(R)} \geq 1.
        \yesnum
        \label{eq:proofa:2}
      \end{align*}
      We have that
      \begin{align*}
        \Pr\insquare{  Z_{\#}(R) > U_{k\ell}(1+\eps) }
        &= \Pr\insquare{  Z_{\#}(R) > P_{\#}(R)\cdot \inparen{ 1 + \frac{U_{k\ell}(1+\eps)-P_{\#}(R)}{P_{\#}(R)}  } }\\
        &\leq \exp\inparen{-\frac{\alpha}{20} \cdot P_{\#}(R)\cdot \inparen{2\cdot\frac{U_{k\ell}(1+\eps)-P_{\#}(R)}{P_{\#}(R)} - 1 } }
        \tag{{Using third statement in \cref{thm:rounding} and that \cref{eq:proofa:2}}}\\
        &= \exp\inparen{-\frac{\alpha}{20} \cdot \inparen{2U_{k\ell}(1+\eps)-3P_{\#}(R) } } \\
        &\leq \exp\inparen{-\frac{\alpha}{40} \cdot U_{k\ell}(1+\eps) }.
        \tagnum{Using that $P_{\#}(R) < \frac{1}{2}\cdot U_{k\ell}(1+\eps)$}
        \customlabel{eq:proofa:caseb}{\theequation}
      \end{align*}

      \noindent Combining Equations~\eqref{eq:proofa:casea} and \eqref{eq:proofa:caseb} we get that
      \begin{align*}
        \Pr\insquare{  Z_{\#}(R) > U(1+\eps) }
        \leq
        \max\inbrace{
        \exp\inparen{-\frac{\alpha}{20}  \cdot U_{k\ell} {   \frac{\inparen{\eps-\phi \gamma_k}^2 }{1+\phi \gamma_k} } },
        \exp\inparen{-\frac{\alpha}{40} \cdot U_{k\ell}(1+\eps) }
        }.
        \yesnum\label{eq:tmptmp}
      \end{align*}
      Let
      \begin{align*}
        \eps\coloneqq \frac{40}{\alpha} \cdot \gamma_k.
        \yesnum\label{eq:proofa:def_alpha}
      \end{align*}
      We claim that for this value of $\eps$, it holds that
      \begin{align*}
        \Pr\insquare{  Z_{\#}(R) > U_{k\ell}(1+\eps) } \leq \frac{\delta}{2n}.
        \yesnum\label{eq:proofa:toprove}
      \end{align*}
      Now by taking a union bound over bound over all $\ell\in [n]$ and using that $\alpha \coloneqq \frac{1}{d}$, it follows that $R$ satisfies the fairness guarantee with probability at least $\frac{\delta}{2n}$.

      \noindent We can upper bound the second term in \cref{eq:tmptmp}, as follows
      \begin{align*}
        \exp\inparen{-\frac{\alpha}{40} \cdot U_{k\ell}(1+\eps) }
        &\leq \exp\inparen{-\frac{\alpha}{40} \cdot U_{k\ell}\cdot \eps }\\
        &\leq \exp\inparen{-U_{k\ell}\cdot \gamma_k }\\
        &\leq \frac{\delta}{np}. \tag{Using that $\gamma_k\geq \frac{1}{U_{k\ell}} \cdot \log\frac{2np}{\delta}$; which follows from \cref{eq:def_gamma}, $U_{k\ell}\geq 1$, and $\log\frac{2np}{\delta}\geq 1$}
      \end{align*}
      To upper bound the first term in \cref{eq:tmptmp}, we use \cref{fact:quadratic_rootsb}.
      \begin{fact}\label{fact:quadratic_rootsb}
        For all $x,y\geq 0$, if $x\geq y+\sqrt{y}$, then $\frac{x^2}{1+x}\geq y$.
      \end{fact}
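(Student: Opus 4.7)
The plan is to clear the denominator and reduce to a quadratic inequality. Since $1+x > 0$, the desired inequality $\frac{x^2}{1+x} \geq y$ is equivalent to $x^2 - yx - y \geq 0$. Viewing the left-hand side as a quadratic in $x$, its larger root is $x^\star = \tfrac{1}{2}\inparen{y + \sqrt{y^2 + 4y}}$, and the inequality holds precisely when $x \geq x^\star$. So it suffices to verify that the hypothesis $x \geq y + \sqrt{y}$ implies $x \geq x^\star$, i.e., that $y + \sqrt{y} \geq x^\star$, or equivalently $2y + 2\sqrt{y} \geq y + \sqrt{y^2 + 4y}$, i.e., $y + 2\sqrt{y} \geq \sqrt{y^2 + 4y}$.

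Both sides of this last inequality are nonnegative, so squaring is valid. Squaring gives $(y + 2\sqrt{y})^2 = y^2 + 4y\sqrt{y} + 4y$ on the left and $y^2 + 4y$ on the right; the difference is $4y\sqrt{y} \geq 0$, which settles it.

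An even more direct alternative is to plug in the boundary value $x = y + \sqrt{y}$ and compute $x^2 - y(1+x) = (y^2 + 2y\sqrt{y} + y) - (y + y^2 + y\sqrt{y}) = y\sqrt{y} \geq 0$, so the inequality holds at the boundary. Since $\tfrac{d}{dx}\inparen{\tfrac{x^2}{1+x}} = \tfrac{x^2 + 2x}{(1+x)^2} \geq 0$ for $x \geq 0$, the function is monotonically nondecreasing, and the inequality extends to all $x \geq y + \sqrt{y}$. The degenerate case $y = 0$ is trivial since then the hypothesis becomes $x \geq 0$ and the conclusion $\tfrac{x^2}{1+x} \geq 0$ is immediate.

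There is no real obstacle here: this is a short algebraic manipulation. The only thing to be careful about is that the quadratic argument implicitly uses $x \geq 0$ to pick the correct root, which is given by hypothesis.
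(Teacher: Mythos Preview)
Your proof is correct and follows essentially the same approach as the paper: both clear the denominator to reduce to the quadratic $x^2 - xy - y \geq 0$, identify the larger root $\tfrac{y}{2}+\sqrt{\tfrac{y^2}{4}+y}$, and then verify that $y+\sqrt{y}$ dominates it (the paper uses the subadditivity $\sqrt{a}+\sqrt{b}\geq\sqrt{a+b}$ where you square directly, but these are equivalent). Your alternative monotonicity argument is a nice addition not present in the paper.
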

      \begin{proof}
        Since $1+x>0$,  $\frac{x^2}{1+x}\geq y$ holds if and only if $x^2-xy-y\geq 0$.
        The roots of the quadratic $f(x)\coloneqq x^2-xy-y$ are
        \begin{align*}
          \frac{y}{2}-\sqrt{\frac{y^2}{4}+y}
          \quad\text{and}\quad
          \frac{y}{2}+\sqrt{\frac{y^2}{4}+y}.
        \end{align*}
        If $x$ is larger than both roots, then $f(x)\geq 0$ and, hence, $\frac{x^2}{1+x}\geq y$.
        It follows that $x\geq \frac{y}{2}+\sqrt{\frac{y^2}{4}+y}$ suffices.
        Then using that for all $a,b\geq 0$, $\sqrt{a}+\sqrt{b}\geq \sqrt{a+b}$, we get that
        \begin{align*}
          y+\sqrt{y} \geq \frac{y}{2}+\sqrt{\frac{y^2}{4}+y}.
        \end{align*}
        Thus, it suffices $x\geq y+\sqrt{y}$ implies that $\frac{x^2}{1+x}\geq y$.
      \end{proof}
      We have
      \begin{align*}
        \frac{\inparen{\eps-\phi \gamma_k}^2 }{1+\phi \gamma_k}
        &\geq
        \inparen{\frac{39}{\alpha}}^2\cdot \frac{{\gamma_k}^2 }{1+\phi \gamma_k}
        \tag{Using that $0\leq \phi\leq 1$, $\alpha\leq \frac{1}{2}$, and \cref{eq:proofa:def_alpha}}\\
        &\geq
        \inparen{\frac{39}{\alpha}}^2\cdot \frac{{\gamma_k}^2 }{1+\gamma_k}.
        \tag{Using that $0<\phi\leq 1$}
      \end{align*}
      To proof \cref{eq:proofa:toprove}, it suffices to prove that
      \begin{align*}
        \frac{{\gamma_k}^2 }{1+\gamma_k} \geq \frac{1}{U_{k\ell}}\cdot \log\inparen{\frac{n+2}{\delta}}.
        \yesnum\label{eq:proofa:to_prove2}
      \end{align*}
      Further, \cref{fact:quadratic_rootsb} implies that to prove \cref{eq:proofa:to_prove2} it suffices to prove that
      \begin{align*}
        \gamma_k \geq
        y+\sqrt{y},
      \end{align*}
      where $y\coloneqq \frac{1}{U_{k\ell}}\cdot \log\frac{n+2}{\delta}$.
      To prove this, observe that
      \begin{align*}
        \log\frac{np}{\delta}\cdot \frac{1}{U_{k\ell}}
        \ \ \leq \ \
        \log\frac{np}{\delta}\cdot \sqrt{\frac{1}{U_{k\ell}}},
        \tag{Using that $U_{k\ell}\geq 1$}\\
        \sqrt{\log\frac{np}{\delta}\cdot \frac{1}{U_{k\ell}}}
        \ \ \leq \ \
        \log\frac{np}{\delta}\cdot \sqrt{\frac{1}{U_{k\ell}}}.
        \tag{Using that $\log\frac{np}{\delta}\geq \frac{1}{2}$ as $n\geq 1$ and $\delta\leq \frac12$}
      \end{align*}
      Hence, \cref{eq:proofa:to_prove2} follows from \cref{eq:def_gamma}.
    \end{proof}

\section{Proofs of Additional Theoretical Results}\label{sec:extra_proofs}

      \subsection{Proof of \cref{lem:eps_depends_on_k}}

      \begin{proof}[Proof of \cref{lem:eps_depends_on_k}]
        Suppose $R$ is deterministic.
        Suppose it places items $i,j\in [m]$ on the first and second position respectively.
        With probability $p_{i}\cdot p_{j}=\frac14$, both $i$ and $j$ belong to $G_1$, and with probability $p_{i}\cdot p_{j}=\frac14$ both $i$ and $j$ belong to $G_2$.
        Thus, at least one of these events occurs with probability $\frac12$.
        If either of these events hold, then $R$ violates the equal representation constraint on the top-2 positions by a multiplicative factor of $2$.
        The last two statements imply that $R$ violates $(\rho,\delta)$-equal representation for any $\rho<1$ and $\delta<\frac12$.

        If $R$ is a random variable, then any draw $R'$ of $R$ is a deterministic ranking, and hence, by the above argument $R'$ violates the equal representation constraint on the top-2 positions by a multiplicative factor of $2$ with a probability $\frac12$ (over the randomness in $G_1$ and $G_2$).
        Since this holds for all draws of $R$ and $R$ is independent of $G_1$ and $G_2$, it follows that $R$ violates the equal representation constraint on the top-2 positions by a multiplicative factor of $2$ with a probability $\frac12$ (over the randomness in $G_1$ and $G_2$, and $R$).
        Thus, $R$ does not satisfy $(\rho,\delta)$-equal representation for any $\rho<1$ and $\delta<\frac12$.
      \end{proof}

    \subsection{$\np$-Hardness Result}\label{sec:proofof:thm:hardness_results_main}

    \begin{theorem}\label{thm:hardness_results_main}
      Given constants $c>1$ and vector $\gamma\in \R^{n}_{\geq 0}$, , and matrices $P\in  {[0,1]}^{m\times p}$, $W\in {\R}_{\geq 0}^{m\times n}$, $U\in  {\R}^{n\times p}$,
      it is \np{}-hard to decide if \prog{prog:noisy_fair_boxed} is feasible.
    \end{theorem}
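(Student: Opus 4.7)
The plan is to establish \np-hardness through a polynomial-time many-one reduction from \textsc{Exact-$n$-Subset Sum}: given positive integers $a_1, \ldots, a_m$, a target $T$, and an integer $n \le m$, decide whether there exists $S \subseteq [m]$ with $\abs{S} = n$ and $\sum_{i \in S} a_i = T$. This problem is \np-hard: given an ordinary \textsc{Subset Sum} instance $(a_1,\ldots,a_{m'}, T)$, pad with $m'$ zero-valued dummy items and set $n \coloneqq m'$; the padded size-$m'$ instance is a yes-instance iff some subset of the original $a_i$'s sums to $T$.

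Given an \textsc{Exact-$n$-Subset Sum} instance with total $\Sigma \coloneqq \sum_{i=1}^m a_i$, I will construct an instance of Program~\eqref{prog:noisy_fair_boxed} with $p = 2$ groups, $n$ positions, $m$ items, $c$ arbitrary, $W \equiv 0$ (so only feasibility matters), and $\gamma \equiv 0$ (permissible since $\gamma$ is a free input to the decision problem). The key idea is to use two \emph{complementary} groups whose probabilities sum to one:
\[
P_{i,1} \coloneqq \frac{a_i}{\Sigma}, \qquad P_{i,2} \coloneqq 1 - \frac{a_i}{\Sigma},
\]
and tighten the upper bounds only at the final position: $U_{n,1} \coloneqq T/\Sigma$ and $U_{n,2} \coloneqq n - T/\Sigma$, while $U_{k,\ell} \coloneqq m$ for every $k < n$ (trivially satisfied since $\sum_{i,j\le k} P_{i\ell} R_{ij} \le k \le m$).

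With $\gamma \equiv 0$, the constraints at position $n$ for any $R \in \cR$ simplify, letting $S \coloneqq \inbrace{i : \sum_j R_{ij} = 1}$ (so $\abs{S} = n$), to $\sum_{i \in S} a_i \le T$ (from $\ell = 1$) and, after rearranging $n - \tfrac{1}{\Sigma}\sum_{i \in S} a_i \le n - T/\Sigma$, to $\sum_{i \in S} a_i \ge T$ (from $\ell = 2$). These two \emph{sandwich} the subset sum to exact equality, so the program is feasible if and only if the \textsc{Exact-$n$-Subset Sum} instance is a yes-instance. All constructed entries are rationals with denominator $\Sigma$, so their bit-length is polynomial in that of the input; this yields a polynomial-time reduction.

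The main obstacle I anticipate is justifying the choice $\gamma \equiv 0$: the theorem's input space is $\R^n_{\ge 0}$, so $0$ is permitted and the sandwich is exact. However, one should be prepared for the stronger variant where a specific positive $\gamma$ is prescribed, as in~\eqref{eq:def_gamma}; there, the slack $U_{n,\ell}(1 - \tfrac{1}{2\sqrt c})\gamma_n$ would loosen the sandwich and allow near-misses. The standard fix is to scale: multiply every $a_i$ and $T$ by a sufficiently large integer $N$ (polynomial in the input size) so that the total slack falls strictly below the granularity $1/\Sigma$ of feasible subset sums, restoring the exact-equality characterization.
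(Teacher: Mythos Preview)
Your reduction is correct. Both your argument and the paper's share the same skeleton---trivialize the constraints at every position $k<n$ and encode the hard combinatorics in the single constraint at position $n$---but the instantiations differ. The paper does not reduce from \textsc{Subset Sum} directly; it invokes Theorem~5.2 of \cite{MehrotraC21}, which already establishes \np-hardness of a fairness-constrained subset-selection program, and then observes that that program is literally the $k=n$ slice of \prog{prog:noisy_fair_boxed} once one sets $\gamma\equiv 1$, $P=q^\circ$, $U_{n\ell}=U_\ell^\circ$, and $U_{k\ell}=n$ for $k<n$. Your route is more elementary and self-contained: you reduce straight from cardinality-constrained \textsc{Subset Sum}, and your complementary-group sandwich ($P_{i,2}=1-P_{i,1}$ together with $U_{n,2}=n-U_{n,1}$) is a clean device for forcing exact equality that the paper does not need because it outsources that step to \cite{MehrotraC21}. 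Your choice $\gamma\equiv 0$ is also tidier than the paper's $\gamma\equiv 1$, since it kills the relaxation factor $(1+(1-\tfrac{1}{2\sqrt c})\gamma_k)$ outright and makes the value of $c$ irrelevant; with $\gamma\equiv 1$ one must in principle absorb the factor $2-\tfrac{1}{2\sqrt c}$ into $U$. One cosmetic point: you state \textsc{Exact-$n$-Subset Sum} with \emph{positive} integers but then pad with zeros; either relax to nonnegative integers or use the standard shift (add a common $B$ to every item and $nB$ to the target) so the padded instance stays in the stated class.
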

    \noindent \cref{thm:hardness_results_main} follows from Theorem 5.2 of \cite{MehrotraC21}, which proves that checking the feasibility of the following program is \np-hard.\footnote{Theorem 5.2 of \cite{MehrotraC21} states an \np-hardness result holds for a generalization of \prog{mc_prog}. However, in their proof they only consider the special case of \prog{mc_prog}. Thus, their proof also implies \np-hardness of \prog{mc_prog}.}
    \begin{mdframed}[style=FrameBox]
      \begin{align*}
        \max_{x\in \zo^m}\ \  &\sum\nolimits_{i=1}^{m}w_{i}^{\circ} x_{i},\yesnum\label{mc_prog}\\
        \st,\quad\  &
        \forall \ \ell\in [p^\circ],\quad
        \sum\nolimits_{i=1}^{m^{\circ}}  q_{i\ell}^{\circ} x_{i} \leq U_{\ell}^{\circ},\
        \yesnum\label{eq:denoised_fair:fairness_constraint}\\
        & \sum\nolimits_{i=1}^{m^{\circ}}  x_i = n^{\circ}.\yesnum\label{eq:denoised_fair:cardinality_constraint}
      \end{align*}
    \end{mdframed}
    Where we used a superscript ``${}^{\circ}$'' on the variables of  \cite{MehrotraC21}, to differentiate between ours and \cite{MehrotraC21}'s variables.
    \cref{thm:hardness_results_main} follows from Theorem 5.2 of \cite{MehrotraC21} by observing that \prog{mc_prog} is a special case of \prog{prog:noisy_fair_boxed}, when:
    \begin{align*}
      &\text{$n = n^{\circ}$, $m = m^{\circ}$, $p = p^\circ$, $\gamma = 1_n$, $P=q^{\circ}$,}\\
      \forall k\in [n],\quad& \gamma_k = 1,\\
      \quad& U_{n\ell} = U^{\circ}_{\ell},\\
      \forall k\in [n]\setminus \inbrace{1},\quad& U_{k\ell} = n,\\
      \forall i\in [m], j\in [n],\quad& W_{ij} = w_{i}^{\circ}.
    \end{align*}
    Finally, we can choose any $c>1$.

  \subsection{Proof of \cref{lem:const_are_opt}}
  Given a non-empty subset $\cC\subseteq \cR$ denoting a constraint, let $R_{\cC}$ be the ranking with the highest utility in $\cC$, i.e.,
  $$R_{\cC}\coloneqq\argmax\nolimits_{R\in \cC} \inangle{R,W}.$$
  In other words, $R_{\cC}$ is the utility maximizing ranking subject to satisfying the ``constraint'' $\cC$.

  \begin{proposition}\label{lem:const_are_opt}

    Let $\cC^\star$ be the set of all rankings that satisfy $(\eps,\delta)$-constraint.
    For any subset $\cC\subseteq \cR$, such that $\cC\neq \cC^\star$, at least one of the following holds:

    \begin{itemize}
      \item there exists a matrix $W\in \R_{\geq 0}^{m\times n}$ such that, $R_{\cC}$ does not satisfy $(\eps,\delta)$-equal representation,
      \item there exists a matrix $W\in \R_{\geq 0}^{m\times n}$ such that, $\inangle{R_{\cC},W}\leq \inangle{R_{\cC^\star},W}\cdot \inparen{1-\frac1n}$.
    \end{itemize}
  \end{proposition}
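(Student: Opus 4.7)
The plan is to split into two cases according to how $\cC$ fails to equal $\cC^\star$: either $\cC$ contains an infeasible ranking (i.e.\ one not in $\cC^\star$), in which case we will force $R_{\cC}$ to be that bad ranking by a cleverly chosen $W$; or $\cC$ misses some good ranking in $\cC^\star$, in which case we will force $R_{\cC^\star}$ to be a specific ranking that $\cC$ does not contain, so $R_{\cC}$ must be strictly worse. In both cases the construction of $W$ is the same template.

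The template is the following ``peaked'' utility matrix associated to any target ranking $R^{\circ}\in\cR$: set $W_{ij}\defeq 1$ if $R^{\circ}_{ij}=1$ and $W_{ij}\defeq 0$ otherwise. Then $\inangle{R^{\circ},W}=n$, and for any other ranking $R'\in\cR$ with $R'\ne R^{\circ}$ at least one $(i,j)$-pair is different, so $\inangle{R',W}\le n-1$. In particular, $R^{\circ}$ is the unique utility-maximizer over all of $\cR$ (and hence over any $\cC\subseteq\cR$ that contains it), and any $R'\ne R^{\circ}$ has utility at most $(1-\tfrac1n)\cdot\inangle{R^{\circ},W}$.

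For Case A ($\cC\not\subseteq\cC^\star$), pick any $R^{\circ}\in\cC\setminus\cC^\star$ and use the template above. Then $R^{\circ}\in\cC$ and is the unique maximizer of $\inangle{\cdot,W}$ over $\cR$, so $R_{\cC}=R^{\circ}$. Since $R^{\circ}\notin\cC^\star$, $R_{\cC}$ does not satisfy the $(\eps,\delta)$-constraint, which is the first bullet.

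For Case B ($\cC\subsetneq\cC^\star$), pick any $R^{\circ}\in\cC^\star\setminus\cC$ and apply the template to this $R^{\circ}$. Then $R^{\circ}\in\cC^\star$ is the unique maximizer over $\cR$, so $R_{\cC^\star}=R^{\circ}$ with utility $n$. Since $R^{\circ}\notin\cC$, any $R\in\cC$ satisfies $R\ne R^{\circ}$ and hence $\inangle{R,W}\le n-1$; in particular $\inangle{R_{\cC},W}\le n-1=(1-\tfrac{1}{n})\inangle{R_{\cC^\star},W}$, giving the second bullet. The only edge case to check is that $\cC$ is assumed non-empty so $R_{\cC}$ is well defined (and if $\cC$ were empty, one can treat $\inangle{R_{\cC},W}$ as $0$, in which case the second bullet holds trivially for any $W$ with $\inangle{R_{\cC^\star},W}>0$). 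There is no real obstacle here; the whole argument is an elementary case split, and the ``peaked'' $W$ is the only construction we need.
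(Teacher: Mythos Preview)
Your proposal is correct and follows essentially the same approach as the paper: the paper also splits into the two cases $\cC\setminus\cC^\star\neq\emptyset$ and $\cC^\star\setminus\cC\neq\emptyset$, and uses exactly your ``peaked'' utility construction (stated there as a separate lemma) to force the argmax onto the distinguished ranking. The only cosmetic difference is that in Case~B the paper bounds $\inangle{R_{\cC^\star},W}\ge\inangle{R^\circ,W}$ rather than asserting equality, but since $R^\circ$ is the unique maximizer over all of $\cR$ your sharper claim is also fine.
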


  \noindent We will use the following lemma in the proof of \cref{lem:const_are_opt}.
  \begin{lemma}\label{lem:choose_util}
    For all rankings $R\in \cR$, there exists a matrix $W\in \R_{\geq 0}^{m\times n}$ such that for all other rankings $R'\in \cR$, $R\neq R'$, it holds that $\inangle{R',W}\leq \inangle{R,W}\cdot \inparen{1-\frac{1}{n}}.$
  \end{lemma}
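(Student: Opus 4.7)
The plan is to construct $W$ directly from the ranking $R$ itself, taking $W$ to be (a rescaling of) the $0/1$ indicator matrix of $R$. Concretely, I would define
\[
  W_{ij} \coloneqq R_{ij} \qquad \text{for all } i\in[m],\ j\in[n].
\]
Since $W\in\{0,1\}^{m\times n}\subseteq \R_{\geq 0}^{m\times n}$, this is a valid choice. The utility of $R$ under this $W$ is then
\[
  \inangle{R,W} \;=\; \sum_{i,j} R_{ij}^2 \;=\; \sum_{i,j} R_{ij} \;=\; n,
\]
where the last equality uses the column-sum constraint $\sum_i R_{ij} = 1$ for every $j\in[n]$ in the definition of $\cR$ (see \eqref{eq:assignment_matrices}).

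The main step is to argue that any other ranking $R'\in\cR$ with $R'\neq R$ satisfies $\inangle{R',W}\leq n-1$. Observe that
\[
  \inangle{R',W} \;=\; \sum_{i,j} R'_{ij} R_{ij} \;=\; \bigl|\{(i,j): R_{ij}=R'_{ij}=1\}\bigr|,
\]
which counts the number of (item, position) pairs on which $R$ and $R'$ agree. Because each column of any ranking in $\cR$ contains exactly one $1$, this quantity equals the number of positions $j\in[n]$ at which $R$ and $R'$ place the same item. Since $R\neq R'$, they must differ on at least one column $j$, so this count is at most $n-1$.

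Combining the two bounds yields
\[
  \inangle{R',W} \;\leq\; n-1 \;=\; n\cdot\Bigl(1-\frac{1}{n}\Bigr) \;=\; \inangle{R,W}\cdot\Bigl(1-\frac{1}{n}\Bigr),
\]
as desired. There is no real obstacle here; the only ``insight'' is to realize that the indicator matrix of $R$ itself suffices rather than attempting a more elaborate position-dependent weighting, and the column-sum constraint in $\cR$ does all the work of forcing any $R'\neq R$ to lose at least one unit of utility.
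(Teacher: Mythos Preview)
Your proposal is correct and is essentially identical to the paper's proof: the paper also takes $W$ to be the indicator matrix of $R$ (written there as $W_{ij}=1$ iff $i=i_j$, where $i_j$ is the item $R$ places at position $j$), computes $\inangle{R,W}=n$, and uses the column-sum constraint to conclude $\inangle{R',W}\leq n-1$ for any $R'\neq R$.
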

  \begin{proof}
    Suppose $R$ ranks items $i_1,i_2,\dots,i_n$, in that order, in the first $n$ positions.
    Pick $W\in [0,1]^{n\times m}$ such that $W_{ij}=1$ if $i=i_j$ and 0 otherwise.
    $R$ has utility
    $$\inangle{W, R} = \sum_{j=1}^n \inparen{W}_{i_j j} = n.$$
    We claim that $\inangle{W, R'} \leq n-1$.
    If this is true, then the lemma follows.

    Since $R\neq R'$, there exists a position $k\in [n]$ such that $\inparen{x_\cC}_{i_k k}=0$.
    We can upper bound $\inangle{W, R'}$ as follows:
    \begin{align*}
      \inangle{W, R'}
      &= \sum_{j=1}^n\sum_{i=1}^m \mathds{I}[i=i_j] \inparen{R'}_{ij}\tag{By the choice of $W$}\\
      &= \sum_{j=1}^n \inparen{R'}_{i_j j}\\
      &= \sum_{j=1}^{k-1} \inparen{R'}_{i_j j} + 0 + \sum_{j=k+1}^{n} \inparen{R'}_{i_j j}
      \tag{Using that $\inparen{R'}_{i_k k}=0$}\\
      &\leq n-1.   \tag{Using that for all $i\in [m]$ and $j\in [n]$, $\inparen{W}_{ij}\leq 1$}
    \end{align*}

  \end{proof}

  \begin{proof}[Proof of \cref{lem:const_are_opt}]
    Since $\cC\neq \cC^\star$, at least one of the sets $\cC\setminus \cC^\star$ or $\cC^\star\setminus \cC$ is nonempty.
    We divide the proof into two cases.

    \paragraph{Case A ($\abs{\cC\setminus \cC^\star}\neq 0$):}
    In this case, there exists a ranking $R\in \cC$ such that $R\not\in \cC^\star$.
    Since $\cC^\star$ is the set of all rankings that satisfy $(\eps,\delta)$-constraint, it follows that $R$ does not satisfy $(\eps,\delta)$-constraint.
    Further, from \cref{lem:choose_util} it follows that there exists a matrix $W$ such that
    $R\coloneqq \argmax_{R'\in \cR}\inangle{R',W}$.
    Since $\cC\subseteq\cR$, it follows that $R_{\cC}=R$.
    Therefore, for this $W$, $R_{\cC}$ does not satisfy $(\eps,\delta)$-constraint.

    \paragraph{Case B ($\abs{\cC^\star\setminus \cC}\neq 0$):}
    In this case, there exists a ranking $R\in \cC^\star$ such that $R\not\in \cC$.
    From \cref{lem:choose_util} it follows that there exists a matrix $W$ such that, for rankings $R'$ different from $R$ (i.e., $R\neq R'$),
    $$\inangle{R',W}\leq \inangle{R,W}\cdot \inparen{1-\frac{1}{n}}.$$
    Thus, for this $W$, it follows that
    \begin{align*}
      \inangle{R_{\cC^{\star}},W}\cdot \inparen{1-\frac{1}{n}}
      \geq \inangle{R,W}\cdot \inparen{1-\frac{1}{n}}
      \geq \inangle{R',W}.
    \end{align*}
    In particular, for $R'=R_\cC$, we get $\inangle{R_{\cC^{\star}},W}\cdot \inparen{1-\frac{1}{n}} \geq \inangle{R',W}$.

  \end{proof}

  \subsection{Proof of \cref{lem:exp_const_only_suff}}

  Suppose there are two groups $G_1$ and $G_2$.
  Let $R_E$ be the optimal solution to \cref{eq:exp_const} and let $R^\star$ be the ranking with the highest utility subject to satisfying $(\gamma,\delta)$-equal representation constraints for the following $\gamma$:
  \begin{align*}
    \text{$\forall k\in [n]$},\quad
    \gamma_k\coloneqq \frac{1}{k} +  2\sqrt{\frac{6}{k}\cdot \log\inparen{\frac{2n}{\delta}}}.
    \yesnum
    \label{def:gamma:exx}
    \end{align*}
    \begin{lemma}\label{lem:exp_const_only_suff}
      There exists a matrices $P\in [0,1]^{m\times 2}$ and $W\in [0,1]^{m\times 2}$
      such that
      \begin{itemize}
        \item $R_{\rm E}$ satisfies $(\gamma,\delta)$-equal representation and has utility 0,
        \item $R^\star$ has utility 1.
      \end{itemize}
    \end{lemma}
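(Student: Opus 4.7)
The plan is to exhibit a small explicit instance with $n=p=2$ and two disjoint groups (so $P_{i,2}=1-P_{i,1}$) in which the expectation-only constraint~\eqref{eq:exp_const} rules out every positive-utility ranking, while the probabilistic $(\gamma,\delta)$-equal-representation constraint remains loose enough to admit one. Take $m=3$ items with $P_{1,1}=P_{2,1}=\tfrac12$ and $P_{3,1}=1$, and utilities $W_{1,j}=W_{2,j}=0$ for $j\in\{1,2\}$, $W_{3,1}=1$, $W_{3,2}=0$. Items $1$ and $2$ are ``ambiguous'' and worthless, whereas the only item of positive utility, item $3$, is deterministically in $G_1$.

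For $R_E$, the $k=1$ expectation constraint reads $P_{i_1,\ell}\leq\tfrac12$ for each $\ell$, which together with disjointness forces $P_{i_1,1}=\tfrac12$; hence position $1$ must hold item $1$ or item $2$. Substituting this into the $k=2$ group-$1$ constraint $P_{i_1,1}+P_{i_2,1}\leq 1$ also excludes item $3$ from position $2$. Therefore every ranking feasible for~\eqref{eq:exp_const}, and in particular $R_E$, uses only items $1$ and $2$, and $\langle R_E,W\rangle=0$. Nevertheless $R_E$ satisfies $(\gamma,\delta)$-equal representation trivially: the count $Z_\#(R_E,\ell,k)$ is at most $k\leq 2$ in every realization, whereas the relaxed bound $U_{k\ell}(1+\gamma_k)=\tfrac{k}{2}(1+\gamma_k)$ with $\gamma_k$ from~\eqref{def:gamma:exx} exceeds $2$ for both $k\in\{1,2\}$ and every $\delta\in(0,1]$ (since $\log(2n/\delta)\geq \log 4$ when $n=2$, the dominant square-root term already dwarfs $1$).

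For $R^\star$, consider the ranking that places item $3$ in position $1$ and item $1$ in position $2$; it attains utility $W_{3,1}=1$, which is also the maximum utility over \emph{all} rankings in $\cR$ since item $3$ is the only positive-utility entry of $W$. Its per-group counts in the top-$k$ positions are either deterministic (position $1$ contributes exactly one item to $G_1$) or Bernoulli($\tfrac12$) (from item $1$ in position $2$), so $Z_\#\leq 2\leq U_{k\ell}(1+\gamma_k)$ holds in every realization and hence with probability $1$. This ranking is therefore feasible for the $(\gamma,\delta)$-equal-representation class, and $\langle R^\star,W\rangle=1$. The only step requiring care is the numerical inequality $U_{k\ell}(1+\gamma_k)>2$ for $k=1,2$, which I anticipate being the main---though really purely computational---obstacle; it follows in a line from the definition~\eqref{def:gamma:exx} of $\gamma_k$.
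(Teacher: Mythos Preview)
Your proof is correct and uses essentially the same construction as the paper: ambiguous items with $P_{i,1}=\tfrac12$ together with one deterministic item (in $G_1$ with probability~$1$) carrying all the utility. The paper takes general $m$ with the last row of $W$ all ones, while you specialize to $m=3$, $n=2$ and set only $W_{3,1}=1$; either choice works and the statement only asks for $n=2$ anyway since $W\in[0,1]^{m\times 2}$.

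The one genuine difference is in verifying $(\gamma,\delta)$-feasibility. The paper applies Chernoff bounds to $\sum_j Z_j$ and takes a union bound over $k\in[n]$ to show the relaxed constraint holds with probability $1-\delta$. You instead note that $Z_\#(R,\ell,k)\leq k\leq 2$ deterministically and verify numerically that $\tfrac{k}{2}(1+\gamma_k)>2$ for $k\in\{1,2\}$, so the constraint holds with probability~$1$. Your route is shorter and exploits that for $n=2$ the slack $\gamma_k$ from~\eqref{def:gamma:exx} is so large that the constraint becomes vacuous; the paper's Chernoff argument is what one would need if $n$ were allowed to grow.
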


    \begin{proof}
      Let $P$ be the matrix with $P_{i1}=P_{i2}=\frac12$ for all $i\in \inbrace{1,2,\dots,m-1}$ and $P_{m1}=1$ and $P_{m1}=0$.
      Let $W$ be the matrix whose first $m-1$ rows are 0, and the last row has is all 1s.
      Hence, only the last item, say $i_m$, has a nonzero contribution to the utility:
      If a ranking $R$ ranks $i_m$ in the first $n$ positions, then the utility of $R$ is 1, otherwise the utility of $R$ is 0.

      Our first claim will follow because the choice of $P$ ensures that any ranking which ranks $i_m$ in the first $n$ positions cannot satisfy \cref{eq:exp_const}.
      To see this, suppose $R$ ranks $i_m$ at the $k$-th position, then
      \begin{align*}
        \Ex\insquare{\sum\nolimits_{i\in G_1}\sum\nolimits_{j=1}^k R_{ij}}
        &= \sum\nolimits_{i\in [m]}\sum\nolimits_{j=1}^k P_{i1} R_{ij}\\
        &= 1 + \sum\nolimits_{i\in [m]\setminus\inbrace{i_m}}\sum\nolimits_{j=1}^{k-1}
        P_{i1} R_{ij}
        \tag{Using that $P_{i_m,1}=1$}\\
        &= \frac{k+1}{2}
        \tag{Using that $P_{i,1}=\frac12$ for all $i\neq i_m$}\\
        &> \frac{k+1}{2}.
      \end{align*}
      Hence, $R$ cannot satisfy \cref{eq:exp_const}.

      To prove our second claim, we will construct a ranking which has utility 1 and satisfies $(\gamma,\delta)$-equal representation .
      It suffices to choose any ranking $R$ which places $i_m$ in the first $n$ position satisfies constraint.
      By our earlier argument this ranking has a utility 1.
      Let $Z_j$ be the indicator random variable that the item in the $j$-th position in $R$ belongs to $G_1$.
      This implies that $\sum\nolimits_{i\in G_1}\sum\nolimits_{j=1}^k R_{ij}=  \sum\nolimits_{j=1}^k Z_j$ for all $k$.
      Further, by the choice of $P$, we have
      \begin{align*}
        \frac{k}{2}\leq \Ex\insquare{\sum\nolimits_{j=1}^k Z_j} \leq \frac{k+1}{2}.
        \yesnum\label{eq:2jjf}
      \end{align*}
      \noindent Further, by \cref{def:noise_model}, we have that $Z_j$ is independent of $Z_k$ for any $j\neq k$.
      Let $$\eps_k\coloneqq \sqrt{\frac{6}{k}\cdot \log\inparen{\frac{2n}{\delta}}}.$$
      Using the above, we have
      \begin{align*}
        \Pr\insquare{\sum\nolimits_{i\in G_1}\sum\nolimits_{j=1}^k R_{ij}\geq \frac{k+1}{2}\cdot (1+\eps_k)}
        &= \Pr\insquare{\sum\nolimits_{j=1}^k Z_{j}\geq \frac{k+1}{2}\cdot (1+\eps_k)}\\
        &\leq \Pr\insquare{\sum\nolimits_{j=1}^k Z_{j}\geq \Ex\insquare{\sum\nolimits_{j=1}^k Z_j}\cdot (1+\eps_k)}
        \tag{Using \cref{eq:2jjf}}\\
        &\leq \exp\inparen{-\frac{\eps_k^2}{3}\cdot \Ex\insquare{\sum\nolimits_{j=1}^k Z_j}}\tag{Using the Chernoff's bound, see \cite{motwani1995randomized}}\\
        &\leq \exp\inparen{-\frac{\eps_k^2k}{6}}
        \tag{Using \cref{eq:2jjf}}\\
        &\leq \frac{\delta}{2n}\tag{Using that $\eps_k\coloneqq \sqrt{\frac{6}{k}\cdot \log\inparen{\frac{2n}{\delta}}}$}.
      \end{align*}
      Further, as $\gamma_k\geq \frac{k+1}{k}\cdot (1+\eps_k)$, we get
      \begin{align*}
        \Pr\insquare{\sum\nolimits_{i\in G_1}\sum\nolimits_{j=1}^k R_{ij}\geq \frac{k}{2}\cdot (1+\gamma_k)}
        &\leq \Pr\insquare{\sum\nolimits_{i\in G_1}\sum\nolimits_{j=1}^k R_{ij}\geq \frac{k+1}{2}\cdot (1+\eps_k)}
        \\
        &\leq \frac{\delta}{2n}.
        \end{align*}
        Further, considering $1-Z_j$ and repeating a similar argument for $G_2$, we get
        \begin{align*}
          \Pr\insquare{\sum\nolimits_{i\in G_2}\sum\nolimits_{j=1}^k R_{ij}\geq \frac{k}{2}\cdot (1+\eps_k)}
          &= \Pr\insquare{\sum\nolimits_{j=1}^k (1-Z_{j})\geq \frac{k}{2}\cdot (1+\gamma_k)}\\
          &\leq \Pr\insquare{\sum\nolimits_{j=1}^k (1-Z_{j})\geq \Ex\insquare{\sum\nolimits_{j=1}^k (1-Z_j)}\cdot (1+\gamma_k)}
          \tag{Using \cref{eq:2jjf}}\\
          &\leq \exp\inparen{-\frac{\gamma_k^2}{3}\cdot \Ex\insquare{\sum\nolimits_{j=1}^k (1-Z_j)}}\tag{Using the Chernoff's bound, see \cite{motwani1995randomized}}\\
          &\leq \exp\inparen{-\frac{\gamma_k^2 (k-1)}{6}}
          \tag{Using \cref{eq:2jjf}}\\
          &\leq \frac{\delta}{2n}. \tag{Using \cref{def:gamma:exx}}
        \end{align*}
        By taking the union bound over all $k$, one can show that $R$ satisfies $(\gamma,\delta)$-equal representation.
      \end{proof}

      \noindent

      \subsection{Proof of \cref{lem:relaxation_nonconvex}}

      \begin{proposition}\label{lem:relaxation_nonconvex}
        There exist $p\in [0,1]^m$ such that \eqref{eq:exact_prob} is non-convex in $R$.
      \end{proposition}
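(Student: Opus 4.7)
The plan is to exhibit a small explicit instance in which the feasible region of the continuous relaxation of \eqref{eq:exact_prob} is not convex. Since the objective $\inangle{R,W}$ is linear in $R$, non-convexity of the relaxed program amounts to non-convexity of its feasible set, so it suffices to produce feasible relaxed points whose convex combination is infeasible. I would take a single group, $m=3$ items, $n=2$ positions, $p_i=\tfrac{1}{2}$ for every $i\in[3]$, upper bounds $U_{11}=U_{21}=1$, tolerances $\eps_1=\eps_2=0$, and failure probability $\delta=\tfrac{1}{4}$. Writing $Z_i\coloneqq \mathbbm{1}[i\in G_1]$, the variables $Z_1,Z_2,Z_3$ are i.i.d.\ $\mathrm{Bernoulli}(\tfrac{1}{2})$ by \cref{def:noise_model}.

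First I would verify feasibility of three integer rankings. Let $R^{(ab)},R^{(ac)},R^{(bc)}\in\cR$ denote the rankings placing the item pairs $\{1,2\},\{1,3\},\{2,3\}$ into the two positions (in some fixed order). For any such ranking the $k=1$ constraint is satisfied deterministically because a single item occupies position~1, and the $k=2$ constraint $\sum_{i\in G_1}\sum_{j\le 2}R_{ij}\le 1$ is violated exactly when both selected items lie in $G_1$, an event of probability $\tfrac{1}{4}$. Hence each of the three rankings satisfies the $(\eps,\delta)$-constraint with probability exactly $\tfrac{3}{4}=1-\delta$, and is therefore feasible.

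Next I would evaluate the midpoint $R\coloneqq \tfrac{1}{3}\bigl(R^{(ab)}+R^{(ac)}+R^{(bc)}\bigr)\in\mathrm{conv}(\cR)$. By symmetry each item carries total weight $\tfrac{2}{3}$ across the two positions, so
\begin{align*}
\sum_{i\in G_1}\sum_{j=1}^{2}R_{ij}=\tfrac{2}{3}\bigl(Z_1+Z_2+Z_3\bigr).
\end{align*}
This exceeds $U_{21}=1$ iff $Z_1+Z_2+Z_3\ge 2$, which occurs with probability $\tfrac{1}{2}>\delta$. Thus $R$ fails the $(\eps,\delta)$-constraint even though it is a convex combination of three feasible rankings, so the feasible region of the relaxation is non-convex.

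The only non-routine point is explaining \emph{why} blending extreme points degrades feasibility: the mean of the constraint sum is unchanged across the three rankings and their midpoint, but spreading each position's mass across more items increases the number of independent Bernoulli contributions to the random sum, fattening its tail and so violating the chance constraint more often. Aside from flagging this phenomenon, the argument is just direct computation of a couple of binomial probabilities and poses no real obstacle.
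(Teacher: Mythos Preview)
Your proof is correct and shares the paper's overall strategy—exhibit feasible points whose convex combination is infeasible—but with a different concrete construction. The paper works in the two-group equal-representation setting with $m=4$, heterogeneous probabilities $p=(1,0,\delta,1-\delta)$, and the midpoint of \emph{two} rankings; the crux there is that items $3$ and $4$ land in the same group with probability $2\delta(1-\delta)>\delta$, which forces the averaged ranking to overshoot the top-$2$ bound. Your version is more symmetric: three items, uniform $p_i=\tfrac12$, a single group, and the barycenter of \emph{three} rankings, which yields a tidy binomial computation. Two minor remarks. First, the paper's framework posits at least two groups, so to match the setting exactly you may want to append a vacuous second group ($P_{i2}=0$, $U_{k2}$ large); nothing in your argument changes. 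Second, your tail-fattening heuristic is slightly misleading—the averaged sum $\tfrac{2}{3}(Z_1+Z_2+Z_3)$ actually has \emph{smaller} variance than each integer ranking's sum $Z_a+Z_b$—and the real mechanism, which your computation correctly captures, is that fractional weights let the sum hit values like $\tfrac{4}{3}$ that sit just above the threshold where an integer-valued sum cannot.
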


      \begin{proof}
        It suffices to specify $n$, $m$, $p$, $\eps$, $\delta$, and two rankings $R_1$ and $R_2$ such that both $R_1$ and $R_2$ satisfy $(\eps,\delta)$-equal representation, but $\frac{R_1+R_2}{2}$ does not satisfy $(\eps,\delta)$-equal representation.

        Define $n\coloneqq 2$, $m\coloneqq 4$, and $\eps\coloneqq \insquare{\begin{smallmatrix}\frac13 & \frac13\end{smallmatrix}}^\top$.
        Fix any  $0<\delta<\frac12$.
        Define
        $$p\coloneqq \begin{bmatrix}1 & 0 & \delta & 1-\delta\end{bmatrix}^\top.$$
        Let $R_1$ be the ranking that places items 1 and 3 in the first and second position, and $R_2$ be the ranking that places items 2 and 4 in the first and second position, i.e.,
        \begin{align*}
          R_1\coloneqq
          \begin{bmatrix}
            1 & 0 & 0 & 0\\
            0 & 0 & 1 & 0
          \end{bmatrix}
          \quad\text{and}\quad
          R_2\coloneqq
          \begin{bmatrix}
            0 & 1 & 0 & 0\\
            0 & 0 & 0 & 1
            \end{bmatrix}.
          \end{align*}
          If $1\in G_1$ and $3\in G_2$, then $R_1$ places an equal number of items from $G_1$ and $G_2$ in the first two positions, and hence, satisfies equal representation.
          This event, happens with probability $p_1(1-p_3)=1-\delta$.
          Thus, $R_1$ satisfies $(0,\delta)$-equal representation, and hence, $(\eps,\delta)$-equal representation.
          Replace item 1 and 3 with 2 and 4 and swap $G_1$ and $G_2$ in the above argument, to get that $R_2$ also satisfies $(\eps,\delta)$-equal representation.

          However, we claim that $\frac{R_1+R_2}{2}$ does not satisfy $(\eps,\delta)$-equal representation.
          Note that with probability 1, $1\in G_1$ and $2\in G_2$.
          If $3,4\in G_1$ or $3,4\in G_2$, then $\frac{R_1+R_2}{2}$ violates the equal representation constraint on the top-2 positions by a multiplicative factor of $\frac32$.
          At least one of these events happens with probability $$p_3p_4+(1-p_3)(1-p_4)=2\delta(1-\delta)>\delta,$$ as $\delta<\frac12$.
          Thus, $\frac{R_1+R_2}{2}$ does not satisfy $(\eps,\delta)$-equal representation for the specified $\eps\coloneqq \insquare{\begin{smallmatrix}\frac13 & \frac13\end{smallmatrix}}^\top$ and $\delta<\frac12$.
        \end{proof}

        \subsection{Proof of \cref{thm:np_hardness_of_exact_const}}
        In this section, we prove the following theorem.
        \begin{theorem}\label{thm:np_hardness_of_exact_const}
          Given $p\in [0,1]^m$, $\delta\in (0,1]$, $W\in \R_{\geq 0}^{m\times n}$, $\eps\in [0,1]^n$, and $V\geq 0$
          it is \np{}-hard to decide if the value of \prog{eq:exact_prob} is at least $V$.
        \end{theorem}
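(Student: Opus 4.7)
The plan is to give a polynomial-time reduction from Program~\eqref{mc_prog}, whose decision version is \np{}-hard by Theorem 5.2 of \cite{MehrotraC21} (the same result invoked in the proof of \cref{thm:hardness_results_main}). The key conceptual move is to pick $P$ with entries in $\{0,1\}$, which forces the Bernoulli indicators $Z_{i\ell}$ from the noise model to be deterministic and therefore collapses the probabilistic $(\eps,\delta)$-constraint into an ordinary linear constraint on $R$ that holds with probability $1$ for every $\delta \in (0,1]$.

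Given an instance $(n^\circ, m^\circ, p^\circ, w^\circ, q^\circ, U^\circ, V^\circ)$ of Program~\eqref{mc_prog} with $q^\circ \in \{0,1\}^{m^\circ \times p^\circ}$ (the regime in which \cite{MehrotraC21}'s reduction is set up, since the underlying hard problem is a $0/1$ subset selection), I will construct an instance of Program~\eqref{eq:exact_prob} by setting $n = n^\circ$, $m = m^\circ$, $p = p^\circ$, $P = q^\circ$, $W_{ij} = w_i^\circ$ for all $i, j$ (position-independent utilities, matching the unordered subset objective), $\eps_k = 0$ for all $k$, any $\delta \in (0,1]$, target $V = V^\circ$, and upper bounds $U_{n\ell} = U_\ell^\circ$ together with $U_{k\ell} = m$ for $k < n$ so that only the $k = n$ constraint binds.

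The verification that the reduction is value-preserving then splits into three routine steps. First, by \cref{def:noise_model} and $P_{i\ell} \in \{0,1\}$ we have $Z_{i\ell} = P_{i\ell}$ almost surely, so the $(\eps,\delta)$-constraint is equivalent to the deterministic inequality $\sum_i P_{i\ell} \sum_{j=1}^k R_{ij} \leq U_{k\ell}$ for every $k$ and $\ell$. Second, for $k < n$ this inequality is vacuous since its left-hand side is at most $k < n \leq m = U_{k\ell}$, and for $k = n$ it reduces exactly to Constraint~\eqref{eq:denoised_fair:fairness_constraint}. Third, every $R \in \cR$ places exactly $n = n^\circ$ items, so Constraint~\eqref{eq:denoised_fair:cardinality_constraint} is automatic, and $\inangle{R, W} = \sum_i w_i^\circ \cdot \mathds{1}[i\text{ is placed by }R]$ matches the objective of Program~\eqref{mc_prog} evaluated on the indicator of the selected set. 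Hence deciding whether the value of Program~\eqref{eq:exact_prob} is at least $V$ decides the same question for Program~\eqref{mc_prog}, and \cref{thm:np_hardness_of_exact_const} follows.

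The only real obstacle is ensuring that the $(\eps,\delta)$-constraint genuinely degenerates to a deterministic one; the integer choice of $P$ handles this in one shot, after which the argument is bookkeeping plus the invocation of Theorem 5.2 of \cite{MehrotraC21}. A minor subtlety is the appeal to $0/1$-valued $q^\circ$ in that reduction; if one prefers not to assume it, one can instead reduce directly from multi-dimensional knapsack (or any $0/1$ constraint-satisfaction problem of comparable form) and recover the same conclusion via essentially the same construction.
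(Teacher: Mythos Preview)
Your proposal has a genuine gap: the theorem is stated for the two-group equal-representation instance of \prog{eq:exact_prob}. The input is a \emph{vector} $p\in[0,1]^m$ (so $P_{i1}=p_i$, $P_{i2}=1-p_i$), and the upper bounds are fixed at $U_{k\ell}=k/2$; the matrix $U$ is \emph{not} part of the input, and the number of groups is two. Your construction, however, needs to choose $U_{k\ell}=U^\circ_\ell$ at $k=n$ and $U_{k\ell}=m$ for $k<n$, and implicitly to carry $p^\circ$ groups from Program~\eqref{mc_prog}. Neither freedom is available. Worse, this is not repairable by restriction: if you force two disjoint groups with $0/1$ probabilities, the resulting deterministic problem is solvable in polynomial time (for each admissible count $a$ of items from $G_1$, take the top $a$ utilities from $G_1$ and the top $n-a$ from $G_2$, then interleave), so no reduction from an \np-hard problem through that deterministic instance can succeed. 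In effect, your argument re-proves \cref{thm:hardness_results_main} (hardness of \prog{prog:noisy_fair_boxed} with arbitrary $U$), not \cref{thm:np_hardness_of_exact_const}.

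The paper's proof takes a completely different route and crucially exploits \emph{non}-degenerate probabilities. It reduces from the cardinality-constrained Product Partition Problem: after forcing a high-utility ranking to include $\ell$ items that are deterministically in $G_1$, the $(\eps,\delta)$-constraint at position $n$ fails only on the event that all of the remaining $\ell$ chosen items also land in $G_1$, which has probability $\prod_{i\in S'} P_{i1}$. With $P_{i1}\propto a_i^\ell$, the constraint $\prod_{i\in S'} P_{i1}\le \delta$ becomes $\prod_{i\in S'} a_i \le \sqrt{\prod_i a_i}$, exactly the product-partition condition. The multiplicative structure of the probabilistic constraint is what makes the two-group problem hard; collapsing the randomness, as you propose, discards precisely this structure.
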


        \noindent Recall that constraint~\eqref{eq:necc_and_suff_app} is necessary and sufficient to satisfy $(\eps,\delta)$-equal representation, and hence, the value of \eqref{prob:noisy_fair_ranking_2} is the maximum utility of a ranking subject to satisfying $(\eps,\delta)$-equal representation.
        \begin{align*}
          \max_{R\in \cR}& ~~ \inangle{R, W}\yesnum\customlabel{prob:noisy_fair_ranking_2}{\theequation}\\
          \st & ~\text{w.p. at least $1-\delta$ over draw of $G_1,G_2$,}\yesnum\label{eq:necc_and_suff_app}\\
          &\hspace{-4mm}
          \forall k\in [n],~\forall \ell\in [2],\quad \sum\nolimits_{i\in G_\ell}\sum_{j=1}^k R_{ij}\leq {\frac{k}2}\cdot (1+\eps_k).
        \end{align*}
        We will show that the decision version of  \eqref{prob:noisy_fair_ranking_2} is \np-hard:
        \begin{theorem}\label{thm:np_hard_noisy_fair}
          Given $L\geq 0$, $\delta\in [0,1]$, $\eps\in [0,1]^n$, $P\in [0,1]^{m\times p}$, and $W\in \R_{\geq 0}^{m\times n}$
          it is \np{}-hard to decide if the value of \eqref{prob:noisy_fair_ranking_2} is at least $L$.
        \end{theorem}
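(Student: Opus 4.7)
The plan is to establish \np{}-hardness via a polynomial-time reduction from the \np{}-hard decision version of the fair subset-selection problem \prog{mc_prog} of \cite{MehrotraC21} (their Theorem~5.2), mirroring the approach used to prove \cref{thm:hardness_results_main}. Given a \prog{mc_prog} instance with items $[m^\circ]$, utilities $w^\circ$, group-membership weights $q^\circ$, per-group upper bounds $U^\circ_\ell$, subset size $n^\circ$, and target utility $L^\circ$, I would construct an instance of \eqref{prob:noisy_fair_ranking_2} whose value is at least $L^\circ$ if and only if the \prog{mc_prog} instance is a yes-instance.

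First I would set $n = n^\circ + O(p^\circ)$ and $m = m^\circ + O(p^\circ)$, with ``payload'' items corresponding to $[m^\circ]$ carrying position-independent utilities $W_{ij} = w_i^\circ$ so that the utility of a ranking depends only on the set of top-$n$ items and not on their ordering. Next I would fix deterministic probabilities $P_{i\ell}\in\{0,1\}$ and $\delta=0$, so the probabilistic $(\eps,\delta)$-constraint collapses to the deterministic equal-representation constraint $\sum_{i\in G_\ell}\sum_{j\leq k} R_{ij}\leq \frac{k}{2}(1+\eps_k)$ for every $k\in[n]$ and $\ell\in\{1,2\}$. I would then introduce auxiliary ``gadget'' items whose deterministic group memberships are chosen so that, at carefully selected values of $k$, this constraint encodes the \prog{mc_prog} constraint $\sum_{i\in[m^\circ]} q^\circ_{i\ell} x_i \leq U^\circ_\ell$, with $\eps_k$ tuned to the ratio of $U^\circ_\ell$ to the total count at position $k$. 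This adapts the technique used in \cref{thm:hardness_results_main}'s reduction so that the $n$ positional slices provide enough degrees of freedom to encode all $p^\circ$ of \prog{mc_prog}'s constraints.

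The main obstacle will be faithfully encoding \prog{mc_prog}'s general constraint structure (potentially $p^\circ$ overlapping groups and arbitrary upper bounds) into the restrictive structure of \eqref{prob:noisy_fair_ranking_2}, which allows only two disjoint groups and equal-representation constraints. The $n$ positional constraints provide the necessary degrees of freedom, but matching \prog{mc_prog}'s constraints exactly requires careful combinatorial bookkeeping to interleave payload and gadget items so that the constraint at each chosen $k$ isolates a single \prog{mc_prog} constraint without interference from adjacent ones. Should this direct encoding prove too restrictive, the natural alternative is to reduce from \textsc{Partition} via non-deterministic probabilities $P_{i\ell}\in (0,1)$: here the hardness would be driven by the fact that the $(\eps,\delta)$-constraint applied to heterogeneous Bernoulli sums encodes subset-sum-like decisions through the tail behavior of the group counts, with the equality ``$\sum p_i\approx k/2$'' emerging from pairing the upper bounds on $G_1$ and $G_2$.
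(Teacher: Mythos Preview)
Your primary approach has a fundamental gap. By taking $\delta=0$ and $P_{i\ell}\in\{0,1\}$, you collapse \eqref{prob:noisy_fair_ranking_2} to the \emph{deterministic} fair-ranking problem with two disjoint groups and prefix upper bounds. In that problem the constraint at each $k$ simply counts items from $G_1$ and $G_2$; there is no way to encode the fractional coefficients $q^\circ_{i\ell}$ of \prog{mc_prog}, and your positional gadgets cannot manufacture them. More importantly, the reduction behind \cref{thm:hardness_results_main} works only because \prog{prog:noisy_fair_boxed}'s constraint is linear in $P_{i\ell}$ (it bounds an \emph{expectation}), so arbitrary real coefficients can be passed through $P$. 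The constraint of \eqref{prob:noisy_fair_ranking_2} is a probability statement, and once you make $P$ deterministic that expressive power disappears. The hardness of \eqref{prob:noisy_fair_ranking_2} genuinely lives in the randomness of the groups; you cannot sidestep it.

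The paper's proof exploits this randomness in a way your fallback does not anticipate. It reduces from the (cardinality-constrained) \emph{Product} Partition Problem, not from \textsc{Partition} or subset-sum. The construction forces any high-utility ranking to place $\ell$ ``certain'' items (with $P_{i1}=1$) and $\ell$ ``uncertain'' items $S'\subseteq[q]$ in the top $2\ell$ positions, and sets the upper bound so that the $(\eps,\delta)$-constraint is violated precisely when \emph{all} $\ell$ uncertain items land in $G_1$. The probability of that event is exactly $\prod_{i\in S'}P_{i1}$, a product; by choosing $P_{i1}\propto a_i^{\ell}$ and $\delta=a_{\max}^{-\ell^2}$, the feasibility condition $\prod_{i\in S'}P_{i1}\le\delta$ becomes $\prod_{i\in S'}a_i\le\sqrt{\prod_j a_j}$. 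A separate utility argument (with integrality of the $a_i$ and a large multiplier $M$) forces equality. So the mechanism is not ``tail behavior of heterogeneous Bernoulli sums'' as you conjecture---it is the exact multiplicative structure of the all-ones event, which is why Product Partition, not additive Partition, is the right source problem.
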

        \noindent The proof of \cref{thm:np_hard_noisy_fair} proceeds in two steps.
        In the first step, we reduce \eqref{prob:intermed} to \eqref{prob:noisy_fair_ranking_2}.
        In the second step, we prove that \eqref{prob:intermed} is \np-hard because the \np-complete product partition problem reduces to \eqref{prob:intermed}.
        Together, the two steps imply the hardness of \eqref{prob:noisy_fair_ranking_2}.
        The proof of the second step is inspired by the construction of \cite{product_knapsack} for the product knapsack problem, which is similar to \eqref{prob:intermed}.

        \paragraph{\textit{Step 1: Reduction from \eqref{prob:intermed} to \eqref{prob:noisy_fair_ranking_2}.}}
        \mbox{In this step, we will reduce the following problem to  \eqref{prob:noisy_fair_ranking_2}.}
        \begin{tcolorbox}[colback=white]
          {\em Input:} $L\geq 0$, $n\in [m]$, $\delta\in [0,1]$, $U\in \insquare{0,\frac{n}{2}}$ $v\in \R^m_{\geq 0}$, and $P\in [0,1]^{m\times p}$

          {\em Decision problem:}
          Is the value of \eqref{prob:intermed} at least $L$?
          \begin{align*}
            \max_{S\subseteq [m]\colon \abs{S}=n}& ~~~~\sum_{i\in S} v_i\yesnum\customlabel{prob:intermed}{\theequation}\\
            \st & ~~~\text{w.p. at least $1-\delta$ over draw of $G_1,G_2$,}\\
            &\hspace{-4mm}
            \quad ~~  \abs{S\cap G_1}\leq U+\frac{n}2
            \quad\text{and}\quad
            \abs{S\cap G_2}\leq U+\frac{n}2.
          \end{align*}
        \end{tcolorbox}
        \noindent {\bf Reduction.}
        Given an instance of \eqref{prob:intermed} we construct the following instance of \eqref{prob:noisy_fair_ranking_2}:
        \begin{align*}
          W &\coloneqq v1_n^\top,\yesnum\label{eq:cond_1}\\
          \eps_1=\eps_2=\dots=\eps_{n-1} &\coloneqq \frac{2n}{k} - 1,\yesnum\label{eq:cond_2}\\
          \eps_{n} &\coloneqq \frac{2U}{n} - 1,\yesnum\label{eq:cond_3}
        \end{align*}
        where $1_n\coloneqq (1,\dots,1)\in \R^n$.\footnote{To be precise, we consider $\eps_1=\eps_2=\dots=\eps_{n-1} \coloneqq \min\inbrace{1,\frac{2n}{k} - 1}$ and $\eps_{n} \coloneqq \min\inbrace{1, \frac{2U}{n} - 1}$.}
        The parameters $L$, $\delta$, and $P$ are the same as the instance of \eqref{prob:intermed}.

        The reduction from \eqref{prob:intermed} to \eqref{prob:noisy_fair_ranking_2} is as follows:
        First solve \eqref{prob:noisy_fair_ranking_2} to obtain a ranking $R$.
        Let $S$ be the set of items $R$ places in the top-$n$ positions.
        Output $S$.
        Clearly, this is a polynomial-time reduction.
        It remains to prove that it is sound and complete.

        In our construction, Condition \eqref{eq:cond_1} implies that the utility of a ranking only depends on the set of $n$ items it places in the top-$n$ positions, and hence, any two rankings that place the same set of items in the top-$n$ positions have the same utility.
        Condition \eqref{eq:cond_2} ensures that any ranking satisfies the constraints in the first $n-1$ positions with probability 1.
        This is because, for all $k\in [n-1]$, $\frac{k}2(1+\eps_k) = n > k$.
        Thus, a ranking $R$ is feasible for \eqref{prob:noisy_fair_ranking_2} if and only if it satisfies:
        With probability at least $1-\delta$ over draw of $G_1,G_2$,
        \begin{align*}
          \forall \ell\in [2],\quad \sum_{i\in G_\ell}\sum_{j=1}^k R_{ij}\leq \frac{n}2 \cdot (1+\eps_n)
          = U+\frac{n}2.
        \end{align*}

        \noindent {\bf Soundness and completeness.}
        Fix any $R\in \cR$.
        Let $S$ be the set of items $R$ places in the top-$n$ positions.
        It holds that
        \begin{align*}
          \inangle{R,W}
          \ \ &\Stackrel{\eqref{eq:cond_1}}{=}\ \  \sum_{i\in S} v_i.
        \end{align*}
        It remains to show that $R$ is feasible for \eqref{prob:noisy_fair_ranking_2} if and only if $S$ is feasible for \eqref{prob:intermed}.
        Due to conditions \eqref{eq:cond_2} and \eqref{eq:cond_3}, $R$ is feasible for \eqref{prob:noisy_fair_ranking_2} if and only if:
        With probability at least $1-\delta$ over draw of $G_1,G_2$,
        \begin{align*}
          \forall \ell\in [2],\quad \sum_{i\in G_\ell}\sum_{j=1}^k R_{ij}\leq U+\frac{n}2.
        \end{align*}
        Since by the definition of $S$, for all $T\subseteq[m]$, $\sum_{i\in T}\sum_{j=1}^n R_{ij}=\abs{S\cap T}$,
        it follows that with probability 1
        $$\sum_{i\in G_\ell }\sum_{j=1}^n R_{ij}=\abs{S\cap G_\ell}.$$ %
        Hence, $S$ is feasible for \eqref{prob:intermed} if and only if $R$ is feasible for \eqref{prob:noisy_fair_ranking_2}.
        Thus, the reduction is sound and complete.

        \newcommand{\ppp}{\textsc{PPP}}
        \newcommand{\cppp}{\textsc{CPPP}}

        \paragraph{\textit{Step 2: Reduction from Product Partition Problem To \eqref{prob:intermed}.}}
        We consider the following version of the product partition problem:
        \medskip
        \begin{tcolorbox}[colback=white,title={\em Cardinality constrained product partition problem (\cppp{})}]
          {\em Input:} $a_1,a_2,\dots,a_q\in \Z_{\geq 0}$ and $\ell\in \inbrace{0,1,\dots,q}$.

          {\em Decision problem:}
          Is there a set $S\subseteq [q]$ of size $\ell$ such that
          \begin{align*}
            \prod_{i\in S} a_i = \prod_{i\in[q] \setminus S} a_i?
          \end{align*}
        \end{tcolorbox}
        \medskip
        \noindent The usual product partition problem (\ppp{}) does not require $S$ to have size $\ell$ and is known to be \np-complete.
        \cppp{} is clearly in \np.
        To see that \cppp{} is \np-complete, one can reduce \ppp{} to \cppp{}:
        To see this, given an instance of \ppp{}, construct $q+1$ instances of \cppp{}, one for each value of $\ell\in \inbrace{0,1,\dots,q}$.
        Then, \ppp{} is a `Yes' instance if and only if at least one of the $q+1$ \cppp{} instances in a `Yes' instance.
        Thus, it follows that \cppp{} is also \np-complete.

        \paragraph{Assumptions on \cppp{} Instances without Loss of Generality.}
        The decision problem for \cppp{} is simple for instances with $\ell=0$, or with one or more of $a_1,\dots,a_q$ as 0.
        As all inputs are integral, without loss of generality, we assume that $\ell\geq 1$ and $a_1,\dots,a_q\geq 1$.
        Note that if in an \cppp{} $\sqrt{\prod_{i=1}^q a_i}$ is non-integral, then it is a `No' instance.
        This can be verified in polynomial time, and hence, without loss of generality, we assume that $\sqrt{\prod_{i=1}^q a_i}$ is integral.

        \paragraph{Reduction from \cppp{} to \eqref{prob:intermed}.}
        Given an instance of \cppp{},
        we construct an instance of \eqref{prob:intermed} with
        \begin{align*}
          n \coloneqq 2\ell,\quad
          m \coloneqq q+\ell,\quad
          U \coloneqq \ell-1,\quad \text{and}\quad
          \delta &\coloneqq \inparen{\frac{1}{a_{\max}}}^{\ell^2},
          \yesnum\label{eq:def_delta}
        \end{align*}
        where $a_{\max}\coloneqq \max_{i\in [q]}a_i$.
        Further, define constants
        \begin{align*}
          M\coloneqq (\ell+2)\cdot \sqrt{\prod_{i=1}^q a_i}
          \quad \text{and}\quad
          B\coloneqq q\ceil{M\log(a_{\max})}+1.
          \yesnum\label{eq:def_of_M}
        \end{align*}
        We choose $v$ so that the first $q$ items correspond to the $q$ numbers in the \cppp{} instance, and the next $\ell$ items have a  ``high'' value:
        \begin{align*}
          \forall i\in [q],\quad v_{i} &\coloneqq \ceil{M\log(a_{i})},\yesnum\label{eq:val_v_2}\\
          \forall i\in [\ell],\quad v_{i+q} &\coloneqq L.
          \yesnum\label{eq:val_v}
        \end{align*}
        Note that each of the last $\ell$ items has a value larger than the total value of the first $q$ items, i.e.,
        \begin{align*}
          \forall\ i\in [\ell],\quad v_{i+q} = B> \sum_{j\in [q]}v_j.
          \yesnum\label{eq:inequality_high_val_items}
        \end{align*}
        We choose $P$ so that for the first $q$ items $P_{i,1}\propto a_i^\ell$ and the next $\ell$ are in $G_1$ with probability 1:
        \begin{align*}
          \forall i\in [q],\quad &
          \quad P_{i,1} \coloneqq \inparen{\frac{a_i}{a_{\max}}}^\ell \cdot \frac{1}{\sqrt{\prod_{i=1}^q a_i}}
          \ \ \text{and} \quad \  \
          P_{i,2} = 1-P_{i,1},\yesnum\label{eq:prob_2}\\
          \forall i\in [\ell],\quad &P_{i+q,1} \coloneqq 1 \qquad\qquad\qquad\qquad\quad \ \
          \text{and}\ \ P_{i+q,2} = 1-P_{i+q,1}.\yesnum\label{eq:prob_1}
        \end{align*}
        Finally, let
        \begin{align*}
          L\coloneqq \ell B + \floor{\frac{M}2 \sum_{i=1}^q \log(a_{i})}.\yesnum\label{eq:def_of_A}
        \end{align*}
        The reduction from \cppp{} to \eqref{prob:intermed} is as follows:
        First solve the constructed instance of \eqref{prob:intermed} to get $S$.
        Then output $S\backslash Q$, where $$Q\coloneqq [\ell+q]\setminus [q]$$
        is the set of the last $\ell$ items.

        Let $C\in \Z$ be the bit complexity of the input for the given instance of \eqref{prob:intermed}.
        To show that the reduction is polynomial time, it suffices to show that $L$ and
        $\ceil{M\log(a_{1})},\dots,\ceil{M\log(a_{q})}$ can be computed in $\poly(C)$ time.
        Note that, $M\leq 2^{O(C)}$, and hence, to compute $\ceil{M\log(a_{i})}$ it suffices to compute $\log(a_{i})$ up to $O(C)$ bits,  which can be done in $\poly(C)$ time.
        Similarly, to compute $L$ it suffices to compute $\sum_{i=1}^q\log(a_{i})$ up to $O(C)$ bits,  which can be done in $\poly(C)$ time.
        Thus, the reduction is polynomial time.

        \medskip

        The choice of $L$ and $v$ ensures that the following fact holds.
        \begin{fact}\label{fact:struc_of_opt_set}
          If a set $S\subseteq[q]$ satisfies $\sum_{i\in S}v_i\geq L$ and $\abs{S}=n$, then $S\supseteq Q$.
        \end{fact}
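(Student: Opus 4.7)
The plan is to argue by contradiction: assume $S$ misses at least one element of $Q$ and then show its total value falls strictly below $L$, contradicting $\sum_{i\in S}v_i \geq L$. The key leverage comes from the fact that each of the ``high value'' items indexed by $Q$ has value $B$, and by \cref{eq:inequality_high_val_items} a single such $B$ already dominates the sum of all values in $[q]$. Hence dropping even one element of $Q$ is catastrophic.

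More concretely, I would argue as follows. Suppose for contradiction $S\not\supseteq Q$. Since $|S|=n=2\ell$ and $|Q|=\ell$, missing at least one element of $Q$ means $|S\cap Q|\leq \ell-1$, so $|S\cap [q]|\geq n-(\ell-1)=\ell+1$. Splitting the sum along $[q]$ and $Q$ (which partition $[m]$),
\begin{align*}
\sum_{i\in S}v_i \;=\; \sum_{i\in S\cap [q]} v_i \;+\; \sum_{i\in S\cap Q} v_i
\;\leq\; \sum_{i\in [q]} v_i \;+\; (\ell-1)\,B.
\end{align*}
Applying \cref{eq:inequality_high_val_items}, which states $B>\sum_{j\in [q]}v_j$, this bound becomes strictly less than $B + (\ell-1)B = \ell B$. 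Since $L=\ell B + \lfloor \tfrac{M}{2}\sum_i \log(a_i)\rfloor \geq \ell B$ by \cref{eq:def_of_A}, we get $\sum_{i\in S}v_i < L$, a contradiction.

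There is essentially no obstacle here: the fact is a clean pigeonhole-style calculation engineered by the choice of the parameter $B$ in \cref{eq:inequality_high_val_items}. The only subtlety is reading the statement consistently with the rest of the construction: the hypothesis ``$S\subseteq[q]$'' should be read as $S\subseteq[m]=[q+\ell]$ (otherwise $S\supseteq Q$ is impossible, since $Q$ is disjoint from $[q]$), and the value of each of the last $\ell$ items is $B$, as used in \cref{eq:inequality_high_val_items,eq:def_of_A}. Once this is understood, the proof is a two-line bound.
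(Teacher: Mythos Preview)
Your proof is correct and follows essentially the same contradiction argument as the paper: split $\sum_{i\in S} v_i$ over $S\cap Q$ and $S\setminus Q$, bound the latter by $\sum_{i\in[q]}v_i<B$ via \cref{eq:inequality_high_val_items}, bound the former by $(\ell-1)B$, and conclude the total is strictly below $\ell B\leq L$. You also correctly flag the typos in the paper (the hypothesis should read $S\subseteq[m]$, and the value of the last $\ell$ items is $B$, not $L$, in \cref{eq:val_v}), which the paper's own proof implicitly corrects for.
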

        \begin{proof}
          Suppose toward a contradiction that satisfies $\sum_{i\in S}v_i\geq L$ and $\abs{S}=n$ but $S$ does not contain $Q$.
          Since $S=n=2\ell$
          Then,
          \begin{align*}
            \sum_{i\in S}v_i
            \quad
            &=\quad \sum_{i\in S\cap Q}v_i  +  \sum_{i\in S\setminus{} Q}v_i  \\
            &\Stackrel{}{\leq}\quad \abs{S\cap Q}\cdot \max_{i\in Q} v_i + \sum_{i\in [q]\setminus{} Q}v_i\tag{Using $S\subseteq [q]$ and $v_i\geq 0$}\\
            &\Stackrel{\eqref{eq:val_v},\ \eqref{eq:inequality_high_val_items}}{<}\quad \abs{S\cap Q}\cdot B + B\\
            &<\quad \abs{Q}\cdot B\tag{Using that $\abs{S\cap Q} \leq \abs{Q}-1$ and $B>0$}\\
            &\Stackrel{}{\leq }\quad L.
            \tag{Using \eqref{eq:def_of_A}, $\abs{Q}=\ell$, and $L\geq \ell B$}
          \end{align*}
        \end{proof}

        \noindent {\bf Soundness.}
        Suppose $S$ is feasible for \eqref{prob:intermed} and satisfies $\sum_{i\in S}v_i\geq L$.
        Due to \eqref{eq:prob_1}, with probability 1, $G_1\supseteq Q$.
        Hence, $G_2\cap Q=\emptyset$.
        Thus, %
        \begin{align*}
          \text{with probability 1},\quad
          \abs{S\cap G_2}
          &= \abs{\inparen{S\setminus Q}\cap G_2}
          \leq \abs{S\setminus Q}.
        \end{align*}
        Since $\sum_{i\in S}v_i\geq L$ and $\abs{S}=n$ (as $S$ is feasible for \eqref{prob:intermed}), \cref{fact:struc_of_opt_set} implies that %
        $S\supseteq  Q$, hence $\abs{S\setminus Q}=\abs{S}-\ell$.
        Combining this with the above equation, we get that %
        \begin{align*}
          \text{with probability 1},\quad
          \abs{S\cap G_2}
          \leq \abs{S}-\ell = \ell. \tag{Using that $\abs{S}=n=2\ell$}
        \end{align*}
        Since $U\geq 0$,
        \begin{align*}
          \text{with probability 1},\quad \abs{S\cap G_2} \leq U+\ell.\yesnum\label{eq:one_const_sat}
        \end{align*}
        $S$ is feasible for \eqref{prob:intermed} if and only if:
        \begin{align*}
          &\Pr_{G_1,G_2}\insquare{ \abs{S\cap G_1} \leq U+\ell\ \ \text{and}\ \  \abs{S\cap G_2} \leq U+\ell   }\geq 1-\delta\\
          &\qquad \Stackrel{\eqref{eq:one_const_sat}}{\iff}
          \Pr_{G_1,G_2}\insquare{ \abs{S\cap G_1} \leq U+\ell}\geq 1-\delta\\
          \quad &\qquad \Stackrel{}{\iff}\quad
          \Pr_{G_1,G_2}\insquare{ \abs{\inparen{S\setminus{}  Q}\cap G_1} \leq U+\ell}\geq 1-\delta \tag{Using that with probability 1, $S,G_1\supseteq Q$}\\
          &\qquad \Stackrel{}{\iff}\quad
          \Pr_{G_1,G_2}\insquare{ \abs{S'\cap G_1} \leq U}\geq 1-\delta\\
          &\qquad \Stackrel{}{\iff}\quad
          \Pr_{G_1,G_2}\insquare{ \abs{S'\cap G_1} > U}\leq \delta\\
          &\qquad \Stackrel{}{\iff}\quad
          \Pr_{G_1,G_2}\insquare{ \abs{S'\cap G_1} = n}\leq \delta
          \tag{Using that $U=n-1$ and $\abs{S'}=\ell$}\\
          &\qquad \Stackrel{}{\iff}\quad
          \prod_{i\in S'}{P_{i1}}\leq \delta\\
          &\qquad \Stackrel{\eqref{eq:prob_1},\eqref{eq:prob_2},\eqref{eq:def_delta}}{\iff}\quad
          a_{\max}^{-\ell\cdot \abs{S'}}\cdot \inparen{\prod_{i\in [q]} a_i}^{-\abs{S'}/2}\cdot \prod_{i\in S'}{a_{i}^\ell}\leq \inparen{\frac{1}{a_{\max}^{\ell}}}^\ell\\
          &\qquad \Stackrel{}{\iff}\quad
          \prod_{i\in S'}{a_{i}}\leq \sqrt{\prod_{i\in [q]} a_i}.
          \tagnum{Using that $\ell>0$, $a_1,\dots,a_q>0$, and $\abs{S'}=\ell$}
          \customlabel{eq:equation_11}{\theequation}
        \end{align*}
        Since $S$ is feasible for \eqref{prob:intermed}, it holds that
        \begin{align*}
          \prod_{i\in S'}{a_{i}}\leq \sqrt{\prod_{i\in [q]} a_i}.
        \end{align*}
        To show that $S'$ is feasible for \cppp{}, it remains to show that the above equation holds with equality.

        Suppose toward a contradiction that $\prod_{i\in S'}{a_{i}} < \sqrt{\prod_{i\in [q]} a_i}.$
        Then, because $\sqrt{\prod_{i\in [q]} a_i}$ and $a_1,\dots,a_q$ are integral $\prod_{i\in S'}{a_{i}} \leq \sqrt{\prod_{i\in [q]} a_i} - 1.$
        Because $M\geq 0$, taking the logarithm we get
        \begin{align*}
          M\sum_{i\in S'}{\log a_{i}} \leq M\log\inparen{\sqrt{\prod_{i\in [q]} a_i} - 1}.\yesnum\label{eq:first_ineq_reduction}
        \end{align*}
        To upper bound the RHS, we will use the following fact:
        \begin{fact}\label{fact:log_ineq}
          For all $x\geq 1$, $\log{x}-\log\inparen{x-1}\geq \frac1x$.
        \end{fact}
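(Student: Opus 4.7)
The plan is to prove the inequality $\log x - \log(x-1)\geq \tfrac{1}{x}$ (valid for $x>1$, which is the regime in which it will be applied in \cref{eq:first_ineq_reduction}) by a single-line integral comparison, with a backup algebraic route if a self-contained proof is preferred.

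The main approach I would take is to write the left-hand side as
\begin{align*}
  \log x - \log(x-1) \;=\; \int_{x-1}^{x} \frac{1}{t}\, dt,
\end{align*}
and then use monotonicity of $1/t$ on the interval $[x-1,x]$: since $t\leq x$ throughout this interval, the integrand satisfies $\tfrac{1}{t}\geq \tfrac{1}{x}$, so the integral is at least $\tfrac{1}{x}\cdot (x-(x-1))=\tfrac{1}{x}$. That is literally the whole argument; there is no obstacle, since the fact is a one-line consequence of the fundamental theorem of calculus together with monotonicity of $1/t$.

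As an alternative proof (useful if we want to avoid invoking the integral representation), I would apply the standard inequality $\log(1+y)\leq y$ for all $y>-1$ to $y=-1/x$, which gives $\log\!\big(\tfrac{x-1}{x}\big)\leq -\tfrac{1}{x}$, and then rearrange to $\log x-\log(x-1)\geq \tfrac{1}{x}$. This route depends on $\log(1+y)\leq y$, itself provable in one line by noting that $f(y)=y-\log(1+y)$ satisfies $f(0)=0$ and $f'(y)=\tfrac{y}{1+y}$, which is nonnegative for $y\geq 0$ and nonpositive for $-1<y\leq 0$, so $f\geq 0$ on $(-1,\infty)$.

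I expect no genuine difficulty here; the only thing worth being careful about is the domain of validity. The statement as written requires $x>1$ for $\log(x-1)$ to be defined (or $x\geq 1$ interpreting the $x=1$ case as an equality in the limit), and this matches exactly how the fact is invoked in the surrounding proof, where it is chained with \cref{eq:first_ineq_reduction} using strictly positive integers $a_i$. Hence either the integral or the $\log(1+y)\leq y$ route yields the bound immediately.
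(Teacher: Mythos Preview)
Your proposal is correct; both the integral-comparison argument and the $\log(1+y)\leq y$ route establish the inequality cleanly for $x>1$. The paper itself does not prove this fact at all---it simply states it and immediately applies it with $x=\sqrt{\prod_{i\in[q]}a_i}$---so there is no paper proof to compare against, and either of your arguments would serve as a complete justification. Your remark about the domain (the inequality is used only for integer $x\geq 2$, so $x>1$ is the relevant regime) is also apt.
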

        \noindent Using \cref{fact:log_ineq} with $x=\sqrt{\prod_{i\in [q]} a_i}$ (as $a_1,\dots,a_q\geq 1$), we get
        \begin{align*}
          \log\inparen{\sqrt{\prod_{i\in [q]} a_i}} - \log\inparen{\sqrt{\prod_{i\in [q]} a_i} - 1}
          \geq \frac{1}{\sqrt{\prod_{i\in [q]} a_i}}.
        \end{align*}
        Hence, by \eqref{eq:def_of_M}
        \begin{align*}
          M = (\ell+2) \cdot \sqrt{\prod_{i\in [q]} a_i}
          \geq \frac{\ell+2}{\log\inparen{\sqrt{\prod_{i\in [q]} a_i}} - \log\inparen{\sqrt{\prod_{i\in [q]} a_i} - 1}}.
        \end{align*}
        On rearranging and using \eqref{eq:first_ineq_reduction}, we get
        \begin{align*}
          M\sum_{i\in S'}{\log a_{i}}\leq M\log\inparen{\sqrt{\prod_{i\in [q]} a_i} - 1} \leq M\log\inparen{\sqrt{\prod_{i\in [q]} a_i}} -\ell-2.
        \end{align*}
        Since for all $i\in S'$, $v_i\leq M\log\inparen{a_i}+1$, it follows that
        \begin{align*}
          \sum_{i\in S'}v_i
          \leq \frac{M}{2}\log\inparen{{\prod_{i\in [q]} a_i}} -2
          &< \floor{\frac{M}{2}\log\inparen{{\prod_{i\in [q]} a_i}}}.\yesnum\label{eq:contradiction_reduction}
        \end{align*}
        Thus,
        \begin{align*}
          \sum_{i\in S} v_i
          &=  \sum_{i\in S\cap  Q} v_i + \sum_{i\in S\setminus  Q} v_i\\
          &=  \ell B + \sum_{i\in S'} v_i
          \tag{Using that $S\supseteq  Q$ and $S'\coloneqq S\setminus Q$}\\
          &\Stackrel{\eqref{eq:contradiction_reduction}}{<}
          \ell B + \floor{M\log\inparen{\sqrt{\prod_{i\in [q]} a_i}}}\\
          &=L.
        \end{align*}
        This is a contradiction to $\sum_{i\in S} v_i\geq L$.

        \paragraph{Completeness.}
        It suffices to show that if  $S'$ is feasible for the given instance of \cppp{},
        then $S\coloneqq S'\cup  Q$ is feasible for \eqref{prob:intermed} and satisfies $\sum_{i\in S} v_i\geq A$.
        Due to \eqref{eq:prob_1}, with probability 1, $G_1\supseteq  Q$.
        Hence, $G_2\cap  Q=\emptyset$.
        Thus, %
        \begin{align*}
          \text{with probability 1},\quad
          \abs{S\cap G_2}
          &= \abs{\inparen{S\setminus Q}\cap G_2}
          \leq \abs{S\setminus Q}
          = \abs{S'}
          =\ell,
        \end{align*}
        where the last equality holds as $S'$ is feasible for the given instance of \cppp{}.
        This implies that \eqref{eq:one_const_sat} holds.
        Hence, by following the same arguments, \eqref{eq:equation_11} also holds.
        Thus, $S\coloneqq S'\cup  Q$ is feasible for \eqref{prob:intermed}
        It remains to show that $\sum_{i\in S} v_i\geq L$.
        \begin{align*}
          \sum_{i\in S} v_i\ \ \
          &= \ \  \ \sum_{i\in  Q} v_i+\sum_{i\in S'} v_i\tag{Using that $S\coloneqq S'\cup  Q$}\\
          &\Stackrel{\eqref{eq:val_v}, \eqref{eq:val_v_2}}{\geq} \ \ \  \ell B +\sum_{i\in S'} M \log{a_i}\\
          \qquad &=  \ \ \  \ell B +\frac{M}{2} \log\inparen{\prod_{i\in [q]}a_i}
          \tag{Using that $\prod_{i\in S'} a_i = \prod_{i\in[q]}a_i$}\\
          &\Stackrel{\eqref{eq:def_of_A}}{\geq}\ \ \  A.
        \end{align*}

\section{Extension of \cref{thm:ub} to Position-Weighted Constraints}\label{sec:extn_of_ub}

    In this section, we extend \cref{thm:ub} to position-weighted version of fairness constraints.
    In particular, given position-discounts
    $v_1\geq v_2\geq \dots\geq v_n$
    and a matrix $U\in \Z_{+}^{n\times p}$ the position-weighted fairness constraint requires a ranking $R$ to satisfy:
    $$\forall k\in [n],\ell\in [p],\quad {\sum\nolimits_{i\in G_\ell} \sum\nolimits_{j\in [k]}} {v_j R_{ij}} \leq  {U_{k\ell}}$$
    for all $k$ and $\ell$.
    For these constraints, we consider the following analogue of $(\eps,\delta)$-constraints:
    A ranking $R$ is said to satisfy $(\eps,\delta,v)$-constraint if with probability at least $1-\delta$ over the draw of $G_1,\dots,G_p$
    \begin{align*}
      \forall k\in [n]~\forall \ell\in [p],~~ \sum\nolimits_{i\in G_\ell}\sum\nolimits_{j=1}^k v_j R_{ij}\leq U_{k\ell}  (1+\eps_k).
      \yesnum\label{eq:eps_delta_const_gen:gen}
    \end{align*}

    \noindent For these position-dependent constraints, our framework largely remains the same and is stated in \prog{prog:noisy_fair_boxed:gen}.
    Compared to \prog{prog:noisy_fair_boxed}, the main difference is in the left-hand side of \prog{eq:def_gamma:gen}.
    We can prove the guarantees on the fairness and accuracy of the optimal solution of \prog{prog:noisy_fair_boxed:gen}, under the additional assumption that, for a constant $\psi>0$, $U$ satisfies that
    \begin{align*}
      \forall\ell\in[p],\forall k\in[n],\quad
      U_{k\ell} \geq \psi k.
      \yesnum\label{eq:uppckk}
    \end{align*}
    The parameter $\psi$ shows up in \cref{eq:def_gamma:gen}.

    \smallskip

    \begin{tcolorbox}[colback=white,title={Our Fair-Ranking Framework for Position-Dependent Constraints},left=3pt,right=3pt,top=2pt,bottom=0pt]
      {\em Input:} Matrices $P\in  {[0,1]}^{m\times p}$, $W\in {\R}_{\geq 0}^{m\times n}$, $U\in  {\R}^{n\times p}$

      {\em Parameters:} A constant $c>1$, a failure probability $\delta\in (0,1]$, and for each $k\in [n]$, a parameter
      \begin{align*}
        \gamma_k\coloneqq \frac{1}{\psi}\cdot \log\inparen{\frac{2np}\delta}\cdot  \max_{\ell\in [p]} \sqrt{\frac{1}{U_{k\ell }}}.
        \yesnum\label{eq:def_gamma:gen}
      \end{align*}
      \hrule

      \vspace{2mm}

      {\em Program:}

      \vspace{-8mm}

      \begin{align*}
        &\hspace{-5mm}\max\nolimits_{R\in \cR} \inangle{R,W},
        \yesnum\label{prog:noisy_fair_boxed:gen}\\
        \st &\hspace{-2mm} ~~~~ \forall {\ell \in [p]}~~\forall {k\in [n]},\quad \sum\nolimits_{i\in [m], j\in [k]} v_j P_{i\ell} R_{ij} \leq U_{k\ell} \inparen{1+\frac{2\sqrt{c}-1}{2\sqrt{c}}\cdot \gamma_k}.
        \yesnum\label{eq:const_of_noisy_fair_boxed:gen}
      \end{align*}
      \end{tcolorbox}

      \noindent We prove the following guarantees on the fairness and accuracy of the optimal solution of  \prog{prog:noisy_fair_boxed:gen}.

      \begin{theorem}\label{thm:ub_exnt_wt}
        Let $\gamma\in \R^n$ be as defined in \cref{eq:def_gamma:gen}.
        If the matrix $U\in \Z_{+}^{n\times p}$ satisfies that for all $\ell\in[p]$ and $k\in [n]$, $$U_{k\ell} \geq \psi k,$$ then is an optimization program \prog{prog:noisy_fair_boxed:gen},
        parameterized by a constant $c$ and failure probability $\delta$,
        such that for any
        \begin{align*}
            c>1\quad \text{and} \quad \delta\in \left(0,\frac{1}{2}\right]
        \end{align*}
        its optimal solution satisfies $(c\gamma,\delta,v)$-constraint and
        has a utility at least as large as the utility of any ranking satisfying $\inparen{(c-\sqrt{c})\gamma,\delta,v}$-constraint.
      \end{theorem}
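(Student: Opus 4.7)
The plan is to mirror the proof of \cref{thm:ub} as adapted in \cref{sec:imp_depen_delta:thm:ub}, modifying the concentration bounds to handle position-weighted sums. I will establish two analogs of \cref{prop:xNF_is_fair:main_body,prop:xOPT_delta_is_feasible:main_body}: (i) any ranking $R$ feasible for \prog{prog:noisy_fair_boxed:gen} satisfies the $(c\gamma,\delta,v)$-constraint, and (ii) any ranking $R'$ satisfying the $((c-\sqrt{c})\gamma,\delta,v)$-constraint is feasible for \prog{prog:noisy_fair_boxed:gen}. The theorem then follows by the same chain of implications as before: the optimal solution $R^\star$ is feasible, so (i) gives its fairness guarantee, and any $R'$ in the comparison class is feasible for the program by (ii), so $R^\star$'s utility dominates.

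The main new technical ingredient is a weighted analog of the Hoeffding-type bounds \cref{lem:lowerbound_main_2,lem:upperbound_main_2}. Define
\begin{align*}
Z^v_{\#}(R,\ell,k) \coloneqq \sum\nolimits_{i\in [m]}\sum\nolimits_{j\in [k]} v_j Z_{i\ell} R_{ij}, \qquad P^v_{\#}(R,\ell,k) \coloneqq \Ex\insquare{Z^v_{\#}(R,\ell,k)}.
\end{align*}
Because $R$ is a ranking, each position $j\in [k]$ holds exactly one item and these items are distinct, so $Z^v_{\#}$ is a sum of $k$ independent random variables with the $j$-th term bounded in $[0,v_j]$. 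Hoeffding's inequality then yields, for any $t\geq 0$,
\begin{align*}
\Pr\insquare{Z^v_{\#}(R,\ell,k) \geq P^v_{\#}(R,\ell,k) + t} \leq \exp\inparen{-\frac{2t^2}{\sum_{j=1}^k v_j^2}},
\end{align*}
and symmetrically for the lower tail. Assuming (without loss of generality, as in NDCG-type discount schemes) $v_1\leq 1$, one has $\sum_{j=1}^k v_j^2 \leq k$, and the hypothesis $U_{k\ell}\geq \psi k$ then lets us replace $k$ by $U_{k\ell}/\psi$ in the denominator; this is what permits the bound to be controlled via the choice of $\gamma_k$ in \cref{eq:def_gamma:gen}.

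To prove (i), fix $k,\ell$ and set $\phi \coloneqq 1 - \frac{1}{2\sqrt{c}}$ together with $\zeta$ such that $U_{k\ell}(1+\phi\gamma_k)(1+\zeta) = U_{k\ell}(1+c\gamma_k)$. For any $R$ feasible for \prog{prog:noisy_fair_boxed:gen}, Constraint~\eqref{eq:const_of_noisy_fair_boxed:gen} implies $P^v_{\#}(R,\ell,k)\leq U_{k\ell}(1+\phi\gamma_k)$, and the weighted Hoeffding bound, combined with $U_{k\ell}\geq \psi k$, gives an exponent proportional to $-\psi(c-\phi)^2 \gamma_k^2 U_{k\ell}/(1+\phi\gamma_k)$. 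Using \cref{fact:quadratic_roots:main_body} together with the choice of $\gamma_k$ in \cref{eq:def_gamma:gen}, this is at most $-\log(2np/\delta)$; a union bound over $k,\ell$ delivers the $(c\gamma,\delta,v)$-constraint. Proposition (ii) is proved by contradiction, mirroring \cref{prop:xOPT_delta_is_feasible:main_body}: if $R'$ is infeasible, then $P^v_{\#}(R',\ell,k)>U_{k\ell}(1+\phi\gamma_k)$ for some $k,\ell$; with $b\coloneqq 1-\frac{1}{\sqrt{c}}$ and $\zeta'$ chosen so $U_{k\ell}(1+\phi\gamma_k)(1-\zeta')=U_{k\ell}(1+b\gamma_k)$, the lower-tail Hoeffding bound shows that $Z^v_{\#}(R',\ell,k) \leq U_{k\ell}(1+b\gamma_k)\leq U_{k\ell}(1+(c-\sqrt{c})\gamma_k)$ with probability strictly less than $\delta$, contradicting the $((c-\sqrt{c})\gamma,\delta,v)$-constraint on $R'$.

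The main obstacle is accounting for the weights cleanly: the Hoeffding denominator $\sum_{j=1}^k v_j^2$ grows with $k$ rather than with the expected count $P^v_{\#}$, so the hypothesis $U_{k\ell}\geq \psi k$ is essential for converting concentration into a multiplicative bound relative to $U_{k\ell}$. A secondary subtlety is the implicit normalization $v_1\leq 1$, which is natural for NDCG-type discounts and can otherwise be absorbed by rescaling $v$ and $U$ together. Beyond these, the rest is algebraic bookkeeping identical in spirit to \cref{sec:imp_depen_delta:thm:ub}.
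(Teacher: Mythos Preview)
Your proposal is correct and follows essentially the paper's approach: redefine $Z_\#$ as the position-weighted sum, apply a Hoeffding-type bound (the paper's \cref{lem:conc_bound:gen}, whose proof indeed implicitly needs $v_1\leq 1$ so that each summand lies in $[0,1]$), and rerun the two-proposition argument exactly as in \cref{sec:imp_depen_delta:thm:ub}. One small correction to your sketch: in the Hoeffding regime the exponent comes out as $-2\psi(c-\phi)^2\gamma_k^2 U_{k\ell}$ with no $(1+\phi\gamma_k)$ denominator, so \cref{fact:quadratic_roots:main_body} is not needed here---that fact was only used to handle the multiplicative Chernoff bound in the original proof of \cref{thm:ub}.
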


      \noindent The proof of \cref{thm:ub_exnt_wt} is analogous to the proof of \cref{thm:ub}.
      Here, we highlight the differences.

      \paragraph{Notation.}
      Recall that for each item $i\in [m]$ and group $\ell\in [p]$,
      let $Z_{i\ell}\in \zo$ be indicator random variable that
      $$Z_i\coloneqq \mathds{I}[G_\ell\ni i].$$

      \noindent The first change is in the definition of $Z_{\#}(R,\ell,k)$. %
      In particular, we need to define
      $$Z_{\#}(R,\ell,k) = \sum\nolimits_{i\in G_\ell}\sum\nolimits_{j=1}^k v_j R_{ij}.$$
      For the new definition of $Z_{\#}$, we have following concentration result.
      \begin{lemma}\label{lem:conc_bound:gen}
        For any position $k\in [n]$, group $\ell\in[p]$, parameters $\eps\geq 0$ and $L,U\in \R$, and ranking $R\in \cR$,
        where $R$ is possibly a random variable independent of $\inbrace{Z_{i\ell}}_{i,\ell}$,
        if
        \begin{align*}
          P_{\#}(R,\ell,k) \leq U
          \quad\text{or}\quad
          P_{\#}(R,\ell,k) \geq L,
        \end{align*}
        then the following equations hold respectively
        \begin{align*}
          \Pr\insquare{Z_{\#}(R,\ell,k) < \inparen{1+\eps} U} &\geq 1-{e^{-\frac{2U^2\eps^2}{k}}},\\
          \Pr\insquare{Z_{\#}(R,\ell,k) > \inparen{1-\eps} L} &\geq 1-{e^{-\frac{2L^2\eps^2}{k}}}.
        \end{align*}
      \end{lemma}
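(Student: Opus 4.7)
The plan is to mirror the proofs of \cref{lem:lowerbound_main_2,lem:upperbound_main_2}, with the only substantive change being that the summands are now weighted by the position discounts $v_j$. Accordingly, I would replace the unweighted concentration inequality for sums of independent $[0,1]$-valued random variables by a Hoeffding-type inequality for independent $[0,v_j]$-valued random variables.

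First, I would rewrite $Z_{\#}(R,\ell,k)$ in a position-indexed form. Since $R\in\cR$, each column $j\in[k]$ of $R$ has exactly one nonzero entry; let $i^\star_j\in[m]$ denote the item placed at position $j$ by $R$. Then
\begin{align*}
  Z_{\#}(R,\ell,k) \ = \ \sum_{j=1}^{k} v_j \, Z_{i^\star_j,\ell}, \qquad
  P_{\#}(R,\ell,k) \ = \ \sum_{j=1}^{k} v_j \, P_{i^\star_j,\ell}.
\end{align*}
For any fixed realization of $R$, the items $i^\star_1,\dots,i^\star_k$ are distinct and, by the independence assumption of \cref{def:noise_model}, the indicators $Z_{i^\star_1,\ell},\dots,Z_{i^\star_k,\ell}$ are independent. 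Since $R$ is itself independent of $\{Z_{i\ell}\}_{i,\ell}$, the same conclusion survives upon integrating over $R$: conditionally on $R$, the summands are independent random variables with $v_j Z_{i^\star_j,\ell}\in[0,v_j]$ and $v_j\le 1$.

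Second, I would apply Hoeffding's inequality (conditional on $R$) to the independent bounded summands $v_j Z_{i^\star_j,\ell}\in[0,v_j]$. Since $v_j\le 1$, we have $\sum_{j=1}^{k} v_j^2 \le k$, so Hoeffding gives, for any $t\ge 0$,
\begin{align*}
  \Pr\!\left[\,Z_{\#}(R,\ell,k) \ge P_{\#}(R,\ell,k)+t \,\big|\, R\right]
  \ \le \ \exp\!\left(-\frac{2t^2}{\sum_{j=1}^{k}v_j^2}\right)
  \ \le \ \exp\!\left(-\frac{2t^2}{k}\right),
\end{align*}
and the symmetric inequality for the lower tail. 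Integrating over $R$ removes the conditioning, because the right-hand side does not depend on $R$.

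Third, I would set $t\coloneqq U(1+\eps)-P_{\#}(R,\ell,k)$ for the upper tail and $t\coloneqq P_{\#}(R,\ell,k)-L(1-\eps)$ for the lower tail, and then optimize over the values of $P_{\#}(R,\ell,k)$ allowed by the hypothesis, exactly as in the proofs of \cref{lem:upperbound_main_2,lem:lowerbound_main_2}. Namely, for the upper tail we use that the map $x\mapsto (U(1+\eps)-x)^2$ is minimized over $x\in[0,U]$ at $x=U$, yielding $t\ge U\eps$; for the lower tail we use that $x\mapsto (x-L(1-\eps))^2$ is minimized over $x\in[L,\infty)$ at $x=L$, yielding $t\ge L\eps$. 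Plugging these into the Hoeffding bound gives $\exp(-2U^2\eps^2/k)$ and $\exp(-2L^2\eps^2/k)$ respectively, which is precisely the claimed conclusion.

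The main obstacle is the bookkeeping for the weighted case: the sum of squared ranges $\sum_{j=1}^{k} v_j^2$ must be controlled by $k$, which requires the standard normalization $v_j\le 1$ on the position discounts (satisfied by e.g.\ NDCG-style weights once normalized). Everything else is an essentially notational modification of the unweighted argument already given in \cref{sec:conc_ineq}.
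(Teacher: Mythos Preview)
Your proposal is correct and takes essentially the same approach as the paper, which simply states that the proof of \cref{lem:conc_bound:gen} is identical to the proofs of \cref{lem:lowerbound_main_2,lem:upperbound_main_2} with the new (weighted) definition of $Z_{\#}$. You are slightly more explicit than the paper in invoking Hoeffding with heterogeneous ranges $[0,v_j]$ and then bounding $\sum_{j\le k} v_j^2\le k$ via the normalization $v_j\le 1$; the paper's argument implicitly treats each summand as a $[0,1]$-valued variable and applies the same Hoeffding-type bound directly, but the two are equivalent.
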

      \noindent The proof of \cref{lem:conc_bound:gen} is identical to the proofs of \cref{lem:lowerbound_main_2,lem:upperbound_main_2}; the only change is the new definition of $Z_{\#}$.
      To prove \cref{thm:ub_exnt_wt}, it suffices to prove analogues of \cref{prop:xNF_is_fair:main_body,prop:xOPT_delta_is_feasible:main_body} for the new definition of $Z_{\#}$.
      Their proofs change as follows:

      \paragraph{Proof of \cref{prop:xNF_is_fair:main_body}}
      The parameters in \cref{eq:proof1:params_val:main_body} remain the same.
      Hence, following the same argument, \cref{eq:proof1:ub:main_body} holds.
      Now, we can prove \cref{eq:proof1:proved:main_body} as follows:
      \begin{align*}
        \Pr\insquare{Z_{\#}(R,\ell, k) \geq U_{\ell k}(1+\phi\gamma_k)}
        \quad\ \
        &\Stackrel{}{=}\quad\ \  \Pr\insquare{Z_{\#}(R,\ell,k) \geq U'
        (1+\zeta)}
        \tag{Using that $U'(1+\zeta) = U_{k\ell} (1+\phi\gamma_k)$}\\
        &\leq\quad\ \  \exp\inparen{-\frac{2\inparen{U^\prime}^2\zeta^2}{k}}
        \tag{Using \cref{lem:conc_bound:gen}}\\
        &=\quad\ \  \exp\inparen{-\frac{2(1-\phi)^2U_{\ell k}^2 \gamma_{k}^2}{k}}
        \tag{Using \cref{eq:proof1:params_val:main_body}}\\
        &\Stackrel{}{\leq}\quad\ \  \exp\inparen{-2\psi(1-\phi)^2U_{\ell k} \gamma_{k}^2}
        \tag{Using that $U_{k\ell}\geq \psi k$}\\
        &\Stackrel{}{\leq}\quad\ \ \frac{\delta}{2np}.
        \tagnum{Using \cref{eq:def_gamma:gen}}
        \customlabel{eq:delta_bound_on_upperbound:gen}{\theequation}
      \end{align*}
      \cref{prop:xNF_is_fair:main_body} follows by replacing \cref{eq:proof1:proved:main_body} by \Eqref{eq:delta_bound_on_upperbound:gen} in the rest of its proof.

      \paragraph{Proof of \cref{prop:xOPT_delta_is_feasible:main_body}}
      The parameters in \cref{eq:proof2:params_val2:main_body} remain the same.
      Now, we can prove $$\Pr\insquare{{Z_{\#}(R',k,\ell)\leq U_{k\ell}}}< 1 - \delta$$ as follows:
      \begin{align*}
        \Pr\insquare{Z_{\#}(R',k,\ell) \leq U_{k\ell}}\ \
        &=\ \Pr\insquare{Z_{\#}(R',k,\ell) \leq L^\prime\cdot (1-\zeta)}
        \tag{Using that $L'(1-\zeta) = U_{k\ell}(1+b\gamma_k)$}\\
        &\leq\ \exp\inparen{-\frac{2\inparen{L^\prime}^2\zeta^2}{k}}
        \tag{Using \cref{lem:conc_bound:gen}}\\
        &=\ \exp\inparen{-\frac{2(\phi-b)^2\gamma_k^2U_{k\ell}^2}{k}}
        \tag{Using \cref{eq:proof2:params_val2:main_body}}\\
        &\leq\ \exp\inparen{-2\psi(\phi-b)^2\gamma_k^2U_{k\ell}}
        \tag{Using that $U_{k\ell}\geq \psi k$}\\
        &\Stackrel{}{<}\ \frac{\delta}{2np}
        \tagnum{Using \cref{eq:def_gamma:gen} and \cref{eq:proof2:params_val2:main_body}}\\
        &\Stackrel{}{<}\ 1-\delta.
        \tagnum{Using that $\delta<\frac12$ and $n\geq 1$}
        \customlabel{eq:proof2:proved:alt:gen}{\theequation}
      \end{align*}
      The rest of the proof is identical.

\section{Limitations and Conclusion}  \label{sec:lim_conc}

Recent studies find that errors in socially-salient attributes can adversely affect fairness and utility of existing fair-ranking algorithms~\cite{GhoshDW21}.
We consider a model of random and independent errors in socially-salient attributes and present a framework that can output rankings with high fairness and utility in this model.
This framework works {for} a general class of fairness criteria, which involve multiple overlapping groups and upper bounds on the number of items that appear in the first $k$ positions from each group.
We also show near-tightness of the framework's fairness guarantee.
Empirically, on both synthetic and real-world datasets, we observe that, compared to baselines, our framework can achieve higher fairness-values and a {similar or better fairness-utility trade-off for standard metrics.}

\medskip

Compared to existing fair-ranking frameworks, our framework does not need accurate socially-salient attributes, but assumes that errors in attributes are random and independent.
When these assumptions do not hold, our framework may not satisfy its guarantees and a careful assessment of this on application-specific data would be important to avoid any (unintended) negative social impact.

\medskip

Our work only addresses one aspect of how bias may show up in rankings, and more generally, on the web.
{For instance, while we consider a large class of fairness constraints, it does not capture some important notions such as the qualitative representation of different groups \cite{KayMM15,Noble2018}.}
It is important to take an holistic approach to mitigate bias and incorporate our work as a part of such a broader effort.
Finally, our work adds to the line of works that develop fair decision-making algorithms robust to inaccuracies in data~\cite{LamyZ19,awasthi2020equalized,MozannarOS20,prob_fair_clustering,wang2020robust,wang2021label,MehrotraC21,celis2021fairclassification}.

\paragraph{Acknowledgments.}\  \
  This research was supported in part by  NSF Awards CCF-2112665 and IIS-2045951, and an AWS MLRA Award.

\addtocontents{toc}{\protect\setcounter{tocdepth}{2}}

\bibliography{bib-v1.bib}
\bibliographystyle{plain}

\newpage
\appendix

\setlength{\belowdisplayskip}{10pt plus 2.0pt minus 5pt}
\setlength{\abovedisplayskip}{10pt plus 2.0pt minus 5pt}
\setlength{\topsep}{4pt plus 1pt minus 2 pt}

\section{Further Discussion on Applicability of the Noise Model}\label{sec:dis_noise_model}
    The noise in \cref{def:noise_model}, arises in real-world settings where local differential privacy is ensured e.g., using the randomized response mechanism.

    \smallskip

    \begin{remark}[{\textbf{Model’s Assumptions Hold If Attributes Are Perturbed by Randomized Response}}]\label{rem:discussion_of_noise_model}
      The randomized response mechanism flips each item's protected attribute to an incorrect value with some (public) probability $0<\eta<\frac{1}{2}$, independent of all other items.
      Here, the independence assumption holds (by design) and $P$’s entries can be deduced from $\eta$.
      To see the latter concretely, consider two protected groups $G_1$ and $G_2$ ($p=2$), and their noisy versions $N_1$ and $N_2$ corresponding to the ``flipped'' attributes.
      For any item $i \in N_1$,
      \begin{align*}
          \text{$P_{i1} = (1-\eta) \cdot \frac{\abs{G_1}}{\abs{N_1}}$ \quad and\quad $P_{i2} = 1 - P_{i1}$.}
      \end{align*}
      For items in $N_2$, replace $P_{i1}$, $P_{i2}$, $G_1$, and $N_1$ with $P_{i2}$, $P_{i1}$, $G_2$, and $N_2$.
      When there are more than two groups ($p>2$), then the randomized response mechanism publicly specifies the probability $\eta_{a,b}$ with which it flips protected attribute value $\ell=a$ to another value $\ell=b$ (for any $a,b\in [p]$). As in the binary case above, P’s entries can be deduced from parameters $\inbrace{\eta_{a,b}\colon a,b \in [p]}$.
    \end{remark}

    Further, in other real-world settings such as image search and online recruiting, the entries of $P$ can be estimated using the confidence scores of classifiers or using auxiliary attributes. In more detail:
    \begin{itemize}[itemsep=0pt,leftmargin=\leftmarginINTERNAL]
        \item If the protected attribute is skin tone, then a classifier $C$ can be used to predict if image $i$ contains a person with a dark skin tone. If $C$ has a calibrated confidence score $0\leq c(i)\leq 1$ in this prediction, then $P_{i, {\rm dark skin-tone}} = c(i).$
        See \cref{fig:simulation_image} in \cref{sec:empirical_results} for results from a simulation that estimates $P$ in this fashion.
        \item If the protected attribute is race and individuals are uniformly drawn from the population, then for an individual $i$ with surname $S$ and zip-code $Z$, $P_{i, L}=f(Z, S),$ where $f(Z, S)$ is the fraction of individuals with surname $S$ in zip-code $Z$ who have the $L$-th race; which can be estimated using census data \cite{elliott2009UsingCencusSurnameList} (see \cref{fig:simulation_intersectional} in \cref{sec:empirical_results}).
    \end{itemize}

\paragraph{Discussion on the Noise Model with Disjoint Groups vs. Overlapping Groups.}
        For each item $i$ and group $G_\ell$ ($\ell\in [p]$), the noise model specifies the marginal probability that $i$ belongs to $G_\ell$: $$P_{i\ell}\coloneqq \Pr[G_\ell \ni i].$$
        For any $i$, the model allows for any joint probability distribution over the events $$(G_1\ni i), (G_2\ni i), \dots, (G_p\ni i)$$ that is consistent with the above marginal probabilities.
        This allows the model to capture the setting where all groups are disjoint -- by requiring the events $$(G_1\ni i), \dots, (G_p\ni i)$$ to be mutually exclusive.
        It also allows the model to capture the cases where all or only some of the groups can overlap.
        For instance, the case where $G_1$ can overlap with $G_2$ but both $G_1$ and $G_2$ are disjoint from $G_3$ can be captured by requiring the events $(G_3 \ni i)$ to be mutually exclusive of the events $(G_1 \ni i)$ and $(G_2 \ni i)$.
        Importantly, we do not need additional information to capture these settings--it suffices to know the marginal probabilities specified by $P$.

\section{Existing Fair-Ranking Algorithms with Rounding Is Insufficient}\label{sec:rounding_methods}

Since existing fair-ranking algorithms require access to protected attributes, one way to use them under the above model is to imputed groups $\hG_1,\dots,\hG_p$ using the specified probabilities.
Then run these algorithms w.r.t. the imputed groups.
To see an illustration, consider two groups $G_1$ and $G_2$.
A natural imputation strategy is to use the Bayes optimal classifier, which assigns item $i$ to $\hG_1$ if and only if $P_{i1}>0.5$ and has the lowest expected imputation error.
This may be reasonable when the imputation error is negligible.
However, on exploring this strategy with non-negligible imputation error, we find that the output rankings can violate equal representation significantly (see \cref{example:mlr}).
To gain some intuition consider an extreme case where all items in some set $S$, of size $n$, have $P_{i1}=0.51$.
The Bayes classifier assigns all items in $S$ to $\hG_1$, i.e., $$\sabs{S\cap \hG_1}=\abs{S}.$$
However, with high probability, $$\abs{S\cap G_1}\approx 0.51 \abs{S}.$$
Since $\abs{S\cap G_1}$ and $\sabs{S\cap \hG_1}$ are far, a ranking that selects $n$ items from $S$ and satisfies the constraints for  $\hG_1$ and $\hG_2$ but violate constraints with respect to the true groups.
\cref{example:mlr} gives an example where this occurs.

\medskip

Another imputation strategy, is independent rounding: it assigns each item $i$ to $\hG_1$ with probability $P_{i1}$ and otherwise to $\hG_1$.
This addresses the issue with Bayes imputation, because, it has property that for any set $T$ of size $n$, $\abs{T\cap G_1}$ are $\sabs{T\cap \hG_1}$ close with probability $1-e^{\Theta(n)}$.
However, when $m\gg n$, there are $$\binom{m}{n}\gg e^{n}$$ sets of size $n$, and hence, with high probability, there exists a set $S$ of size $n$ for which $\sabs{S\cap \wh{G}_1}$ and $\sabs{S\cap G_1}$ are arbitrarily far.
In this case also, existing fair-ranking algorithms can output rankings which violate equal representation significantly.
\cref{example:ir} gives an example where this occurs.

\begin{proposition}[\textbf{Imputing Protected Groups Using the Bayes Optimal Classifier Is Not Sufficient}]\label{example:mlr}
  Let $R$ be any optimal solution to \eqref{eq:equal_rep_with_known_groups} with protected groups imputed using the Bayes optimal classifier for given $p$.
  There exists a matrix $P\in [0,1]^{m\times 2}$ such that $R$ does not satisfy the $(\eps,\delta)$-equal representation constraint
  \begin{align*}
    \text{for any\quad $\delta<\frac12$\quad and\quad  $\eps$\quad $\st$\quad  $\eps_k<\frac1{20}$\quad for some \quad $k\geq 2$.}
    \end{align*}
  \end{proposition}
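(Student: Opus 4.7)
The plan is to exhibit a concrete instance parametrized by a large even integer $k \geq 2$ on which every Bayes-imputed optimal ranking $R$ violates true equal representation at position $k$ with probability at least $1/2$; this immediately defeats the $(\eps,\delta)$-equal representation constraint whenever $\delta < 1/2$ and $\eps_k < 1/20$. Concretely, I would take $n = k$ and $m = 2k$, partition $[m]$ into two sets $A$ and $B$ of size $k$, set utilities $W_{ij} = 2$ for $i \in A$ and $W_{ij} = 1$ for $i \in B$, and set
\begin{align*}
P_{i1} = 0.51,\ P_{i2} = 0.49\ \ \text{for } i \in A, \qquad P_{i1} = 0,\ P_{i2} = 1\ \ \text{for } i \in B.
\end{align*}
The Bayes optimal classifier then imputes $\hG_1 = A$ and $\hG_2 = B$.

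Next I would pin down the shape of every optimum of \eqref{eq:equal_rep_with_known_groups} under these imputed groups. At any even prefix length $k' = 2j \leq k$, the imputed constraint caps both $|\hG_1 \cap T_{k'}|$ and $|\hG_2 \cap T_{k'}|$ at $j$, where $T_{k'}$ denotes the item-set in the top $k'$ positions. Since $|T_{k'}| = 2j$ and $\hG_1 \cup \hG_2 = [m]$, both caps must bind, so exactly $j$ items of $A$ and $j$ items of $B$ appear in the first $k'$ positions. Because all items of $A$ share a common utility and common marginals (and similarly for $B$), every optimum $R$ attains this composition, and in particular the top $k$ positions of $R$ contain exactly $k/2$ items of $A$ and $k/2$ of $B$ regardless of tie-breaking.

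Then I would analyze the true-group count at position $k$. By \cref{def:noise_model}, each $A$-item lies independently in $G_2$ with probability $0.49$ while every $B$-item lies in $G_2$ with probability $1$, so the number of $G_2$-items in the top $k$ positions equals $\tfrac{k}{2} + X$ with $X \sim \mathrm{Bin}(k/2, 0.49)$ and expectation $0.245\, k$. The equal-representation constraint at position $k$ on $G_2$ with relaxation $\eps_k < 1/20$ demands $\tfrac{k}{2} + X \leq (k/2)(1 + \eps_k) < (k/2)(21/20) = 0.525\, k$, i.e.\ $X < 0.025\, k$. A standard Chernoff bound gives $\Pr[X < 0.025\, k] \leq \exp(-\Omega(k))$, which drops below $1/2$ once $k$ exceeds a small absolute constant; hence the violation at position $k$ occurs with probability strictly greater than $1/2 > \delta$ for every $\delta < 1/2$, and $R$ fails the $(\eps, \delta)$-equal representation constraint.

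The only delicate step is certifying that every optimum places exactly $k/2$ items of $A$ and $k/2$ of $B$ in the top $k$ positions, which is handled by the prefix-counting argument above together with the interchangeability of items within each set; the remaining Chernoff calculation is routine.
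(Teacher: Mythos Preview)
Your approach is correct and essentially the same as the paper's: construct items whose marginal $P_{i1}$ sits just above $1/2$ so the Bayes classifier misassigns them en masse, pin down the composition of any feasible (hence optimal) ranking via the equal-representation prefix constraints, and then use concentration to show a violation at position $k$ with probability exceeding $1/2$. Your two-type construction is a slight simplification of the paper's three-type one (the paper adds a zero-utility ``decoy'' type that becomes unnecessary once feasibility alone forces the top-$k$ composition, so your distinct utilities for $A$ and $B$ are likewise not needed), and you use a Chernoff bound where the paper uses Chebyshev.
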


\begin{proposition}\label{example:ir}
Let $R$ be a random variable denoting the optimal solution to the fair-ranking problem (\prog{eq:equal_rep_with_known_groups}) for protected groups imputed using independent rounding with given $P\in [0,1]^{m\times 2}$.
For every $\beta>0$, there exists sufficiently large $n$ and $m$ and a matrix $P\in [0,1]^{m\times 2}$, such that, with probability at least $1-\beta$
$R$ does not satisfy the $(\eps,\delta)$-equal representation constraint
\begin{align*}
  \text{for any\quad $\delta<1-\beta$\quad and\quad  $\eps\in (0,1)^n$.}
\end{align*}

\end{proposition}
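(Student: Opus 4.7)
The plan is to pick $P$ nearly deterministic ($P_{i,1}$ close to $1$ for every $i$) and use uniform utilities so that every $\widehat G$-feasible ranking is optimal. Then, because the independent-rounding imputation draws $\widehat G$ \emph{independently} of the true $G$, the $G$-labels of the top-two items of any such ranking are i.i.d.\ Bernoulli with parameter near $1$, so with $G$-probability close to $1$ they coincide, yielding an immediate factor-$2$ violation of equal representation at position $k=2$.

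Concretely, given $\beta>0$ I would set $\gamma \coloneqq \beta/4$, $m \coloneqq \lceil 4n/\beta\rceil$, $P_{i,1}\coloneqq 1-\gamma$ and $P_{i,2}\coloneqq \gamma$ for every $i\in[m]$, and $W_{ij}\coloneqq 1$ for every $(i,j)$. Under independent rounding $|\widehat G_2|$ is $\operatorname{Bin}(m,\gamma)$ with mean $m\gamma\geq n$, so a standard Chernoff bound gives $|\widehat G_1|,|\widehat G_2|\geq n/2$ on an event $E$ with $\Pr_{\widehat G}[E]\geq 1-\beta$. On $E$ there is a ranking feasible for \prog{eq:equal_rep_with_known_groups} with respect to $\widehat G$; since $W$ is uniform, every such ranking has utility $n$ and is therefore optimal, so the algorithm's output $R$ is one of them.

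The heart of the argument is the $G$-analysis. Because $R$ depends only on $\widehat G$ and $\widehat G$ is independent of $G$, conditional on the top-two items $i_1,i_2$ of $R$ the events $\{i_1\in G_1\}$ and $\{i_2\in G_1\}$ are independent Bernoulli$(1-\gamma)$ random variables, so
\begin{align*}
    \Pr_G\bigl[\,i_1,i_2 \text{ lie in the same true group}\,\bigr]
    \;=\;(1-\gamma)^2+\gamma^2\;\geq\;1-2\gamma\;=\;1-\beta/2.
\end{align*}
Whenever this event occurs $\sum_{i\in G_\ell}\sum_{j\leq 2}R_{ij}=2$ for some $\ell\in\{1,2\}$, while the $(\eps,\delta)$-cap at $(k,\ell)=(2,\ell)$ equals $\tfrac{2}{2}(1+\eps_2)=1+\eps_2<2$ for every $\eps_2\in(0,1)$. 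Thus, on $E$, the $G$-probability of a violation at $k=2$ is at least $1-\beta/2$, which strictly exceeds every $\delta<1-\beta$. Combined with $\Pr[E]\geq 1-\beta$ this gives the desired conclusion.

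The main obstacle is the strength of the quantifier ``for any $\eps\in(0,1)^n$'': as $\eps_k\uparrow 1$ the allowed cap $\tfrac{k}{2}(1+\eps_k)$ tends to $k$, so only a factor-$2$ violation (top-$k$ entirely in one true group) survives in the limit. With the naive choice $P_{i,1}=1/2$ (as used for the Bayes-imputation counterexample) this event has $G$-probability only $1/2$ even at $k=2$, which is not enough for small $\beta$. Pushing $P_{i,1}\to 1$ fixes this--two fixed items then share a true group with $G$-probability $1-O(\gamma)$--while taking $m=\Theta(n/\gamma)$ keeps $\widehat G_2$ large enough to make the $\widehat G$-feasibility step go through, so both the outer $\widehat G$-event and the inner $G$-event hold with probability at least $1-\beta/2$.
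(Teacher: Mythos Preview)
Your proof is correct for the statement as written, but it proceeds by a genuinely different route from the paper's. You choose $P_{i1}=1-\gamma$ uniformly and argue at the top-two positions: any two items land in the same true group with $G$-probability at least $(1-\gamma)^2+\gamma^2\ge 1-\beta/2$, which already forces a factor-$2$ violation at $k=2$ for every $\eps_2\in(0,1)$. This is essentially the infeasibility argument of \cref{lem:eps_depends_on_k} with the marginal pushed from $1/2$ toward $1$ so that the failure probability exceeds $1-\beta$ rather than just $1/2$; the event $E$ is needed only to make sure the imputed fair-ranking problem is feasible so that $R$ is well-defined.

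The paper's construction is more elaborate and aims at a stronger qualitative point. It introduces three item types: Type~A items with $P_{i1}=\phi$ small and utility $1$, and Types~B and C with deterministic groups and utility $0$. A ranking that alternates Type~B and Type~C items satisfies equal representation with probability~$1$, so in the paper's instance there \emph{does} exist a ranking meeting the $(\eps,\delta)$-constraint. The failure is specific to the algorithm: because only Type~A items carry utility, and because (with high $\hat G$-probability) independent rounding places at least $n$ Type~A items in each imputed group, the utility-maximizing ranking $R$ selects only Type~A items, whose true group is overwhelmingly $G_2$. The violation is then established at position $k=n$ via a Chebyshev bound. Your argument is shorter and avoids the utility machinery, but the cost is that in your instance \emph{no} ranking satisfies the $(\eps,\delta)$-constraint, so the construction does not isolate a failure of the independent-rounding strategy per se; the paper's instance does.
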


\medskip

\subsection{Proof of \cref{example:mlr}}
\begin{proof}[Proof of \cref{example:mlr}]
    Pick any even $n\in \N$.
    Let $m\coloneqq \frac{3n}2$.
    Let $\beta>0$ be a small constant that we will fix later.
    We will divide the items into the following three types:
    \begin{itemize}
      \item Type A: For each $1\leq i\leq \frac{n}2$ and $1\leq j\leq n$,
      \begin{align*}
        \text{$P_{i1}\coloneqq 0 = 1-P_{i2}$ and $W_{ij}\coloneqq 1$.}
      \end{align*}
      \item Type B: For each $\frac{n}2+1\leq i\leq n$ and $1\leq j\leq n$,
      \begin{align*}
        \text{$P_{i1}\coloneqq \frac12+\beta= 1-P_{i2}$ and $W_{ij}\coloneqq 1$.}
      \end{align*}
      \item Type C: For each $n+1\leq i\leq \frac{3n}2$ and $1\leq j\leq n$,
      \begin{align*}
        \text{$P_{i1}\coloneqq 1 = 1-P_{i2}$ and $W_{ij}\coloneqq 0$.}
        \end{align*}
      \end{itemize}
      Let $\wh{G}_1$ and $\wh{G}_2$ be the groups imputed using maximum likelihood rounding.
      By construction,
      $\wh{G}_1$ contains all items of Types A and B and no items of Type C, whereas $\wh{G}_2$ contains all items of Type C and no items of Types A and B.

      Let $R$ be an optimal solution of \prog{eq:equal_rep_with_known_groups} with parameters $G_1=\wh{G}_1$ and $G_2=\wh{G}_2$.
      Since $W_{ij}\leq 1$ for all $i\in [m], j\in[n]$, $$\inangle{R,W}\leq n.$$
      Because $R$ satisfies the equal representation constraints for two disjoint groups,
      for any even $k\in [n]$, $R$ places exactly $\frac{k}2$ items of Type A and $\frac{k}2$ items of Type B in the top $k$ positions.
      From $\wh{G}_1$, $R$ only places items of Type A:
      If $R$ picks no items of Type C, then $\inangle{R,W}=n$, whereas, if $R$ picks one or more items of Type C, then $\inangle{R,W}\leq n-1$, which is a contradiction since there is a ranking with utility $n$ that satisfies equal representation constraints (e.g., a ranking which places items of Type A and B in alternate positions).

      Since all items of Type A are (always) in $\wh{G}_2$, $R$ places at least $\frac{k}2$ items from $\wh{G}_2$ in the first $k$ positions.
      We will show that with probability larger than $\frac12$, at least $\frac{k}{20}$ of the $\frac{k}2$ items of Type B are in $\wh{G}_2$.
      Thus, with probability larger than $\frac12$,
      $R$ places more than $\frac{k}{2}\cdot \frac{11}{10}$ items from $\wh{G}_2$ in the top-$k$ positions, and hence, $R$ does not satisfy the $(\eps,\delta)$-equal representation constraint for any $\delta<\frac12$ and  $\eps\in \inparen{0,\frac1{10}}^n$.

      It remains to prove our claim.
      Select any $k \in \inbrace{2,4,\dots,n}$.
      Let $i_1,i_2,\dots,i_{k/2}\in [m]$ be the $n$ items of Type B that $R$ places in the first $k$ positions.
      Let $Z_{i_j}\in \zo$ be the indicator random variable that $i_j\in \wh{G_2}$.
      Thus, $Z_{i_1},\dots,Z_{i_{k/2}}$ are independent random variables, such that, for $j\in [k]$, $\Pr[Z_{i_j}]=1-P_{i_j}=\frac12-\beta.$
      It follows that $\Ex[\sum_{j=1}^{k/2}Z_{i_j}]=\frac{k}{2}\inparen{\frac12-\beta}$ and ${\rm Var}[\sum_{j=1}^{k/2} Z_{i_j}]=\frac{k}{2}\inparen{\frac14-\beta^2}.$
      Thus, using the Chebyshev's inequality on $\sum_{j=1}^{k/2} Z_{i_j}$,
      \begin{align*}
        \Pr\insquare{\abs{ \sum_{j=1}^{k/2} Z_{i_j} -\frac{k}{4}\inparen{1-2\beta} } > \frac{k}{8}\inparen{1-4\beta^2}\cdot \sqrt{2+\beta} } \leq \frac{1}{2+\beta}.
      \end{align*}
      Thus,
      \begin{align*}
        \Pr\insquare{\sum_{j=1}^{k/2} Z_{i_j} < \frac{k}{4}\inparen{1-2\beta} - \frac{k}{8}\inparen{1-4\beta^2}\cdot \sqrt{2+\beta} } \leq \frac{1}{2+\beta}.
      \end{align*}
      Since $\frac{k}{4}\inparen{1-2\beta} - \frac{k}{8}\inparen{1-4\beta^2}\cdot \sqrt{2+\beta}=k\inparen{\frac14-\frac{\sqrt{2}}{8}}+k\cdot O(\beta)$, for a sufficiently small $\beta>0$,
      $$\frac{k}{4}\inparen{1-2\beta} - \frac{k}{8}\inparen{1-4\beta^2}\cdot \sqrt{2+\beta}>\frac{k}{20}.$$
      Hence,
      \begin{align*}
        \Pr\insquare{\sum_{j=1}^{k/2} Z_{i_j} < \frac{k}{20}} \leq \frac{1}{2+\beta} \quad\Stackrel{(\beta>0)}{<}\quad \frac12.
        \yesnum\label{eq:chebyshev_bound}
      \end{align*}

    \end{proof}

\subsection{Proof of \cref{example:ir}}
\begin{proof}[Proof of \cref{example:ir}]
  Let $\phi>0$ be a small constant that we will fix later.
  We will divide the items into the following two types:
  \begin{itemize}[leftmargin=\leftmarginINTERNAL]
    \item Type A: For each item $i$ of Type A
    \begin{align*}
      \text{$P_{i1}\coloneqq \phi$, $P_{i2}\coloneqq 1-\phi$ and $W_{ij}\coloneqq 1$ for all $j\in[n]$.}
    \end{align*}
    \item Type B: For each item $i$ of Type B
    \begin{align*}
      \text{$P_{i1}\coloneqq 1$, $P_{i2}\coloneqq 0$ and $W_{ij}\coloneqq 0$ for all $j\in[n]$.}
    \end{align*}
    \item Type B: For each item $i$ of Type C
    \begin{align*}
      \text{$P_{i1}\coloneqq 0$, $P_{i2}\coloneqq 1$ and $W_{ij}\coloneqq 0$ for all $j\in[n]$.}
    \end{align*}
  \end{itemize}

  \noindent Let there be $m_A\coloneqq O\inparen{\log\inparen{\frac{n}{\beta} }\cdot \frac{n}{\log\inparen{\frac{1}{1-\phi}}}}$ items of Type A, $m_B\coloneqq n$ items of Type B, and $m_C\coloneqq n$ items of Type C.

  Note that a ranking which ranks items of Type B and Type C alternately, satisfies the equal representation constraints with probability 1.
  So in this instance, there exists a ranking which satisfies $(\delta,\eps)$-equal representation.
  However, we will show that $R$ does not satisfy $(\delta,\eps)$-equal representation with probability at least $1-\beta$.

  Let $\hG_1$ and $\hG_2$ be the groups imputed by independent rounding.
  Let $\evE$ be the event that $\hG_1$ contains at least $n$ items of Type A and $\evF$ be the event that  $\hG_2$ contains at least $n$ items of Type A.
  Both $\evE$ and $\evF$ occur with probability at most $O(\beta)$.
  To see this, divide the items of Type A into $n$ groups of equal size.
  From each group, at least one item is selected in $\hG_1$ and $\hG_2$ with probabilities at least $1-(1-\phi)^{\frac{m_A}{n}}$ and $1-(\phi)^{\frac{m_A}{n}}$ respectively.
  Taking a union bound over all groups and substituting $m_A$, we get
  \begin{align*}
    \Pr[\evE]\geq 1-\beta \quad \text{and}\Pr[\evF]\geq 1-\beta.
  \end{align*}
  Since only items of Type A have a nonzero contribution to the utility of a ranking and because there are at least $n$ items of Type A in each imputed group, it follows that $R$ only selects items of Type A.
  Now, the claim follows because, for small $\phi$, most items of Type A belong to $G_1$.

  Suppose $\evE$ and $\evF$ happen and, hence, $R$ only selects items of Type A.
  Let $Z_j$ be the indicator random variable that the item in the $j$-th position of $R$ is in $G_1$.
  We have that $\Pr[Z_j]=\phi$.
  Therefore, ${\rm Var}[\sum_{j=1}^n Z_j] = n\phi(1-\phi)$.
  Thus, using the Chebyshev's inequality we have
  \begin{align*}
    \Pr\insquare{\abs{\sum_{j=1}^n Z_j - n\phi} \geq \frac{n\eps_n}{4}}
    \leq \frac{4n\phi(1-\phi)}{n^2\eps_n^2}.
  \end{align*}
  Hence, for $\phi=\Theta(\eps_n^2\beta)$, we have that
  \begin{align*}
    \Pr\insquare{\sum_{j=1}^n Z_j\leq \frac{n\eps_n}{2}} \geq 1-\beta.
  \end{align*}
  The result follows since whenever $\sum_{j=1}^n Z_j\leq \frac{n\eps}{2}$, $R$ violates the equal representation constraint at the $n$-th position by a multiplicative factor larger than $1+\eps_n$.

\end{proof}

\section{Implementation Details and Additional Empirical Results}\label{sec:empirical_results_extended}
    In this section, we present the implementation details of our simulations (\cref{sec:implement,sec:preprocess}), give additional plots for the simulation in \cref{sec:empirical_results} (\cref{sec:add_plot}), and additional simulations that use weighted-selection risk as the fairness metric or vary the amount of noise in the data (\cref{sec:selection_lift,sec:vary_noise})

    \paragraph{Code.}  The code for all simulations is available at the following link \url{https://github.com/AnayMehrotra/FairRankingWithNoisyAttributes}.

    \subsection{{Implementation Details}}\label{sec:implement}

    In this section, we give implementation details of our algorithm and baselines.
    \begin{itemize}[itemsep=0pt,leftmargin=\leftmarginINTERNAL]
        \item {\ouralgo{}:}
        We implement \ouralgo{} in Python 3 and use the Gurobi optimization library to solve the linear program in Step 1 of \cref{algo}.
        We state complete pesudocode of \ouralgo{}'s implementation as \cref{algo:full_algo}.
        \item \sj{}: This is \cite{fairExposureAshudeep}'s algorithm.
        \sj{} {(1) solves a linear program whose objective encodes the utility of the ranking and whose constraints capture the fairness constraints, and (2) decomposes the solution as a convex combination of the rankings, and uses this convex combination to generate rankings (see \cite[Section 3.4]{fairExposureAshudeep}).
        \begin{itemize}
            \item More precisely, \cite{fairExposureAshudeep}'s approach works for any linear constraint on the ranking (see the last equation in \cite[Section 3.3]{fairExposureAshudeep}).
            For instance, as noted in \cite[Section 3.3]{fairExposureAshudeep},  their approach can satisfy multiple constraints of the form: Given any vectors $f\in \R^m$, $g\in \R^n$, and $h\in \R$, require the ranking $R\in \zo^{m\times n}$ to satisfy
            $f^\top R g = h$.
            By introducing a class variable $s$ with constraint $s\geq 0$, their approach extends to constraints of the form $$f^\top R g \leq h.$$
            These are sufficient to encode the constraint in Definition 2.2:
            For any $k$ and $\ell$, define
            \begin{align*}
                \forall i\in [m],\quad f_i &=\mathds{I}[i\in G_\ell],\\
                \forall j\in [n],\quad g_j &=\mathds{I}[j\leq k],\\
                h &=U_{k\ell}.
            \end{align*}
            The constraint $f^\top R g\leq h$ with the above values is equivalent to the upper bound specified by $U_{k\ell}$ in \cref{def:noise_model}.
            Repeating this construction for each $k$ and $\ell$, we get a set of constraints that capture the fairness constraints specified by $U$.
        \end{itemize}
        \cite{fairExposureAshudeep} do not provide an implementation of \sj{} and we implement \sj{} in Python3:
        We (1) construct an optimization program as defined above,
        (2)} use the Gurobi optimization library to solve the linear program constructed by \cite{fairExposureAshudeep}, and (3) use the code available at \url{https://github.com/jfinkels/birkhoff} to compute the Birkhoff-von Neumann decomposition of the solution (\cite{fairExposureAshudeep} also use the same code to compute the decomposition, see \cite[Section 3.4]{fairExposureAshudeep}).
        \item \csv{}:
        This is the greedy algorithm from \cite[Theorem 3.3]{celis2018ranking}.
        \cite{celis2018ranking} do not provide an implementation of \csv{}, we implement their algorithm in Python3 with NumPy.
        \item \detgreedy{}:
        This is the Det-Greedy algorithm of \cite{linkedin_ranking_paper}.
        \cite{linkedin_ranking_paper} do not provide an implementation of \detgreedy{}, we implement \detgreedy{} in Python3 with NumPy.
        \item \mc{} :
        This first uses the algorithm of \cite{MehrotraC21} to compute a subset $S$ and then selects a ranking of these items that maximize the utility (in the simulations this amounts to sorting items by $w_i$).
        We used the implementation of \cite{MehrotraC21}'s algorithm available at \url{https://github.com/AnayMehrotra/Noisy-Fair-Subset-Selection} and use Python3's in-built sorting function to generate the ranking.
        \cite{MehrotraC21}'s algorithm takes $P$ and parameters $U$ specifying upper bound constraints as input.
        \item \uncons{}:
        This is the baseline that outputs the ranking with the maximum utility.
        In the simulation, this amounts to sorting all items in decreasing order of $w_i$ and outputting the ranking with the first $n$ items (in that order).
        We implement \uncons{} in Python3 with NumPy.

    \end{itemize}

    \paragraph{Computational Resources Used.} All simulations were run on a \texttt{t3.xlarge} instance with 4 vCPUs and 16Gb RAM, on Amazon's Elastic Compute Cloud (EC2).

    \begin{algorithm}[h!] %
        \caption{Pseudo-code of \cref{algo}'s implementation} \label{algo:full_algo}
        \begin{algorithmic}[1]
            \Require {Matrices $P\in  {[0,1]}^{m\times p}$, $W\in {\R}_{\geq 0}^{m\times n}$, $U\in  {\R}^{n\times p}$}
            \Ensure A ranking $R\in \cR$
            \item[] \white{.} \hspace{-12.75mm} {\bf Parameters:} {Constant $c>1$, failure probability $\delta\in (0,1]$, and for each $k\in [n]$, relaxation \white{.} parameter $\gamma_k>0$}\vspace{2mm}
            \State Compute a solution $R_F$ to the standard linear programming relaxation of \prog{prog:noisy_fair_boxed}
            \item[] \Comment{{In the implementation, we use the Gurobi optimization library in Python 3 to compute $R_F$}}
            \vspace{2mm}
            \State Compute rankings $R_1,R_2,\dots,R_\Delta\in \cR$ and coefficients $\alpha_1\geq \alpha_2\geq \dots\geq \alpha_\Delta\in [0,1]$ such that
            $$R_F = \sum_{i=1}^{\Delta} \alpha_iR_i.$$
            \item [] \hspace{-10mm} \Comment{{In the implementation, we use the code from \url{https://github.com/jfinkels/birkhoff} to compute this decomposition.
            This code implements an algorithm to compute the Birkhoff-von Neumann decomposition.
            The value of $\Delta$ does not need to be specified: It is the number of rankings output by the algorithm to compute the Birkhoff-von Neumann decomposition.}
            }\vspace{2mm}
            \State Construct matchings $M_1,\dots,M_\Delta$ corresponding to each ranking $R_1,\dots,R_\Delta$ such that, for each $t\in [\Delta]$, $M_t$ has an edge between item $i$ and position $j$ if $i$ appears in position $j$ in $R_t$
            \vspace{2mm}
            \State Initialize $N_1=M_i$
            \For{$t=1,2,\dot,\Delta-1$}
              \State $N_{t+1}=$ {\bf Merge}($\alpha_{t+1}, M_{t+1}, \sum_{\ell=1}^t\alpha_\ell, N_t$)
            \EndFor
            \vspace{2mm}
            \State Construct a ranking $R$ corresponding to $N_\Delta$:
            Item $i$ appears at position $j$ in $R$, if and only if, $i$ and $j$ are matched in $N_{\Delta}$
            \State \Return $R$
            \vspace{3mm}
        \end{algorithmic}
    \end{algorithm}

    \begin{algorithm}[h] %
        \caption{Merge procedure (used in \cref{algo:full_algo})} \label{algo:merge}
        \begin{algorithmic}[1]
            \Require {Numbers $0<\alpha,\beta\leq 1$ and matchings $M$  and $N$}
            \Ensure A matching $K$
            \item[] \white{.} \hspace{-11mm} {\bf Parameters:} {A constant $t$ \Comment{We set to $t\coloneqq 100$ in the implementation}}\vspace{2mm}
            \While{$M\neq N$}
              \State $P=$ {\bf getPaths($M, N, t$)}
              \State $P'=$ {\bf getPaths($M, N, t$)}
              \State Set $\rho\coloneqq \frac{t-1}{\abs{P}}$ and $\sigma\coloneqq \frac{t}{\abs{P'}}$, and $p=\frac{\beta\sigma}{a\rho+\beta\sigma}$
              \vspace{2mm}
              \State Draw variables $v,u$ u.a.r. from $[0,1]$
              \If{$u\leq \frac{\beta\sigma}{\alpha\rho+\beta\sigma}$}
                \State Draw $i$ u.a.r. from $[\abs{P}]$ and set $M=M\Delta P_i$
              \Else
                \State Draw $i$ u.a.r. from $[\abs{P'}]$ and set $N=N\Delta P_i'$
              \EndIf
              \vspace{2mm}
            \EndWhile{}
            \State \Return $K\coloneqq M$
            \vspace{3mm}
        \end{algorithmic}
    \end{algorithm}

    \begin{algorithm}[h] %
        \caption{\textbf{getPaths} procedure (used in \cref{algo:merge})} \label{algo:get_path}
        \begin{algorithmic}[1]
            \Require {Matchings $M$ and $N$ and a parameter $t\geq 1$}
            \Ensure A set of paths $P$
              \State Set $P=\emptyset$
              \If{$\abs{M\Delta N} \leq 2t$}
                \State Construct $t$ paths $p_1,\dots,p_t$, where $p_i=M\Delta N$ for each $i$
                \State Let $N\backslash M\coloneqq \inbrace{v_1, \dots, v_n}$
                \State For each $i\in [t]$, remove $v_i$ from $p_i$
                \State Set $P\coloneqq\inbrace{p_1,\dots,p_t}$
              \ElsIf{$M\Delta N$ is a path}
                \State Let the path formed by $M\Delta N$ be $\inparen{v_1, \dots, v_n}$
                \For{$j=1,2,\dots,t+1$}
                  \State If $v_1\in N$ set $\ell=1$ else set $\ell=0$
                  \State Set $D\coloneqq \inbrace{v_{\ell+2tk}\colon k\in \N, \ \ell+2tk\leq \abs{M\Delta N}}$
                  \State Set $P=P\cup \inbrace{(M\Delta N)\backslash D}$
                \EndFor
              \Else
                \Comment{Here, $M\Delta N$ is a cycle}
                \State Let the cycle formed by $M\Delta N$ be $\inparen{v_1, \dots, v_n}$
                \For{$i=1,2,\dots,\abs{M\Delta N}$}
                  \State If $v_i\in M$: {\bf continue}
                  \State $S\coloneqq \inbrace{v_{(i+j)\%\abs{M\Delta N}}\colon j=0,1,\dots,2t-1}$
                  \State Set $P=P\cup S$
                \EndFor
              \EndIf
            \State \Return $K\coloneqq M$
            \vspace{3mm}
        \end{algorithmic}
    \end{algorithm}

    \subsubsection{Pre-processing Details of the Simulation with Image Data}\label{sec:prepros_image_data}\label{sec:preprocess}
    In this section, we present details of the preprocessing using while estimating $\hP$ in the simulation with the Occupations dataset presented in \cref{sec:empirical_results}.

    \paragraph{Estimating $\hP$.}
    We begin by removing all images with gender label NA; this leaves 5,825 images (out of 9600).
    On the remaining images, we use an off-the-shelf face-detector \cite{cnn_model}  to extract the faces of the people from the images and remove all images where the face-detector did not detect a face; this leaves 4,494 the images.
    We use a CNN-based gender classifier \cite{imdb_wiki_code} on the detected faces to predict the apparent gender of the depicted individuals.
    For each image $i$, the classifier outputs a gender (coded as male and female) and an uncalibrated confidence score $c_i\in [0,1]$.
    We take the set of uncalibrated confidence scores $\inbrace{c_i\in [0,1]}_{i}$ and calibrate them by first binning them, then computing the distribution of gender labels (provided in the dataset) for each bin.
    For each image $i$, we set ${\hP_{i1}}$ (respectively ${\hP_{i2}}$) equal to the fraction of images in the same bin as $i$ whose gender label is female (respectively male).
    We perform this calibration once and on all occupations and, then, use it for a subset of occupations.

    \renewcommand{\algorithmicrequire}{\textbf{Input:}}
    \renewcommand{\algorithmicensure}{\textbf{Output:}}

    \subsection{Further Discussion of Simulations in \cref{sec:empirical_results}}\label{sec:add_plot}

    \paragraph{Illustrating the fairness vs. utility trade-off.}
      In our empirical results, we use fairness metrics such as weighted risk-difference (\cref{sec:empirical_results}) and weighted selection-lift (\cref{sec:selection_lift}) to measure the algorithms' {\em achieved} fairness.
      We do not use the parameter $\phi$ to measure fairness because the output of algorithms may have lower fairness than specified by $\phi$.
      \cref{fig:simulation_image,fig:fig1bskjkj,fig:fig6} plot utility vs. weighted risk-difference and \cref{7b,8b,9b} plot utility vs. weighted selection-lift (\selift{}) for the simulations in \cref{sec:empirical_results}.
      They show that \ouralgo{} better or similar (up to standard errors) achieved fairness vs utility trade-off compared to baselines.
      For example, in \cref{8b}, to achieve \selift{}$=0.55$ use \cref{8a} to choose $\phi=1.19$ for \ouralgo{} and $\phi=1.15$ for \csv{} or \sj{}.
      For these values of $\phi$, \ouralgo{} has 2\% higher utility than \csv{} and \sj{}.

    \newcommand{\clean}{{\bf Clean-Fair}}

    \paragraph{Comparison to Baseline Which Has Access to \textit{Accurate} Protected Attributes.}
      Let \clean{} be the algorithm that, given utilities and accurate protected attributes, outputs the ranking with the maximum utility subject to satisfying equal representation constraint.
      Note that \clean{} can only be run in the ideal scenario where one has access to accurate protected attributes.
      We repeated the simulations in \cref{sec:empirical_results} and, for each of them, also measured the utility and fairness of \clean{}.
      We observe that the rankings output by \clean{} have a \rd{} close to 1 ($>$0.99), this is expected because \clean{} has access to the clean protected attributes.
      We observe that the ranking output by \ouralgo{} (for any parameter $0\leq \phi\leq 1$, specifying the fairness constraints for \ouralgo{}) has a utility that is at most 2\%, 10\%, and 4\% smaller than that the ranking output by \clean{}.

    \paragraph{Plots with a Small Number of Iterations.}
        \cref{fig:small_iter_syn,fig:small_iter_image,fig:small_iter_name} present results from simulations in \cref{sec:empirical_results} with 25, 50, and 100 iterations; compared to 500 or 1000 iterations in \cref{fig:different_fdr,fig:simulation_image,fig:simulation_intersectional}.
        We observe that:
        \begin{itemize}[itemsep=0pt,leftmargin=\leftmarginINTERNAL]
            \item the error bars for both utility and fairness (w.r.t. \rd{}) are a larger (up to 0.025 compared to at most 0.0125 with 500/1000 iterations). %
            \item the mean utilities and fairness (w.r.t. \rd{}) of all algorithms at all values of $\phi$ are additively within 0.05 of their corresponding values in \cref{fig:different_fdr,fig:simulation_image,fig:simulation_intersectional}. %
        \end{itemize}
        Moreover, the relative order of the algorithms with respect to both their fairness (w.r.t. \rd{}) and utility is the same as in  \cref{fig:different_fdr,fig:simulation_image,fig:simulation_intersectional} for all $\phi$.

    \paragraph{Plots with Different Values of $n$.}\label{sec:vary_n}
        \cref{fig:vary_n_syn,fig:vary_n_image,fig:vary_n_name} plot the $\rd{}$ and utilities (NDCG) with $n\in \inbrace{10, 30, 50}$ in the simulations from \cref{sec:empirical_results}; compared to $n=25$ in \cref{fig:different_fdr,fig:simulation_image,fig:simulation_intersectional}. %
            We observe that the best \rd{} attained by \ouralgo{} increases with $n$:
                Increasing $n$ from 10 to 30, increases $\rd$ from  $0.76$ to 0.85 with the synthetic data, from 0.75 to 0.84 with the real-world image data, and from 0.61 to 0.71 with the real-world name data (see \cref{fig:vary_n_syn,fig:vary_n_image,fig:vary_n_name}).
            Further, in all simulations, \ouralgo{}'s maximum \rd{} is 2\% to 8\% higher than that of the baselines (see \cref{fig:vary_n_syn,fig:vary_n_image,fig:vary_n_name}).
                One exception is the simulation with real-world image data and $n=10$.
                In this simulation, \ouralgo{}'s best \rd{} is equal to \detgreedy{}'s best \rd{}.
                Both of them have $>6\%$ higher best $\rd$ than any other algorithm. (See \cref{fig:vary_n_image}.)

    \renewcommand{\folder}{./figs/vary-n}
    \begin{figure}[t!]
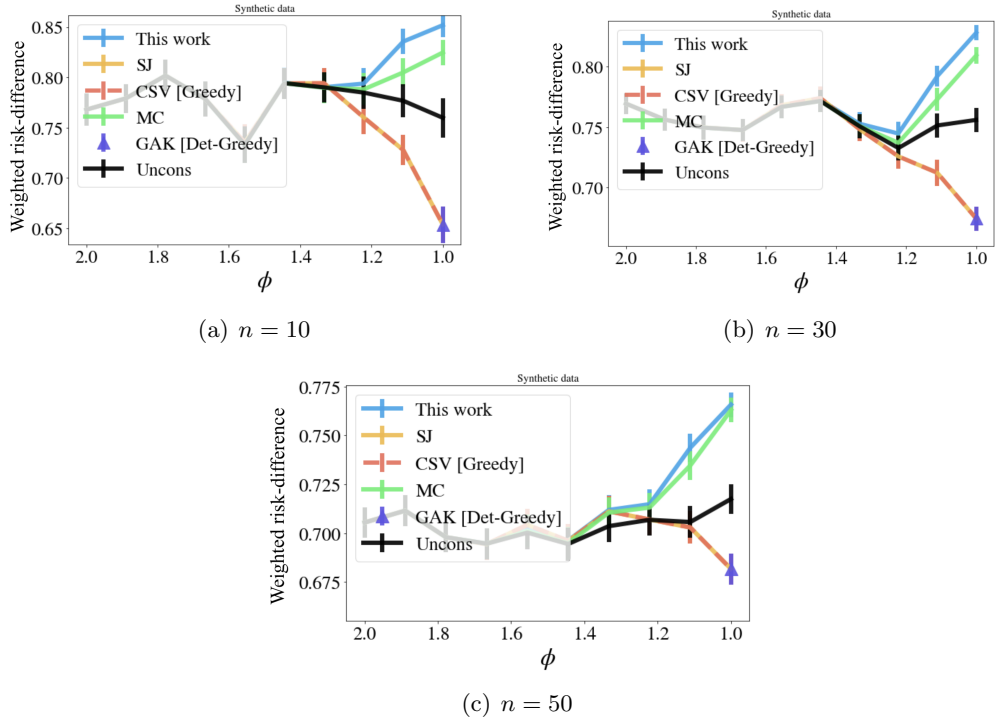

      \centering
      \subfigure[$n=10$]{
        \includegraphics[width=0.40\linewidth, trim={0cm 0cm 0cm 0cm},clip]{\folder/syn-1.png}
      }
      \subfigure[$n=30$]{
        \includegraphics[width=0.40\linewidth, trim={0cm 0cm 0cm 0cm},clip]{\folder/syn-2.png}
      }
      \subfigure[$n=50$]{
        \includegraphics[width=0.40\linewidth, trim={0cm 0cm 0cm 0cm},clip]{\folder/syn-3.png}
      }
      \caption{
      Simulation on synthetic data with different values of $n$.
      The details appear in \cref{sec:vary_n}.
      }
      \label{fig:vary_n_syn}
    \end{figure}

    \begin{figure}[h!]
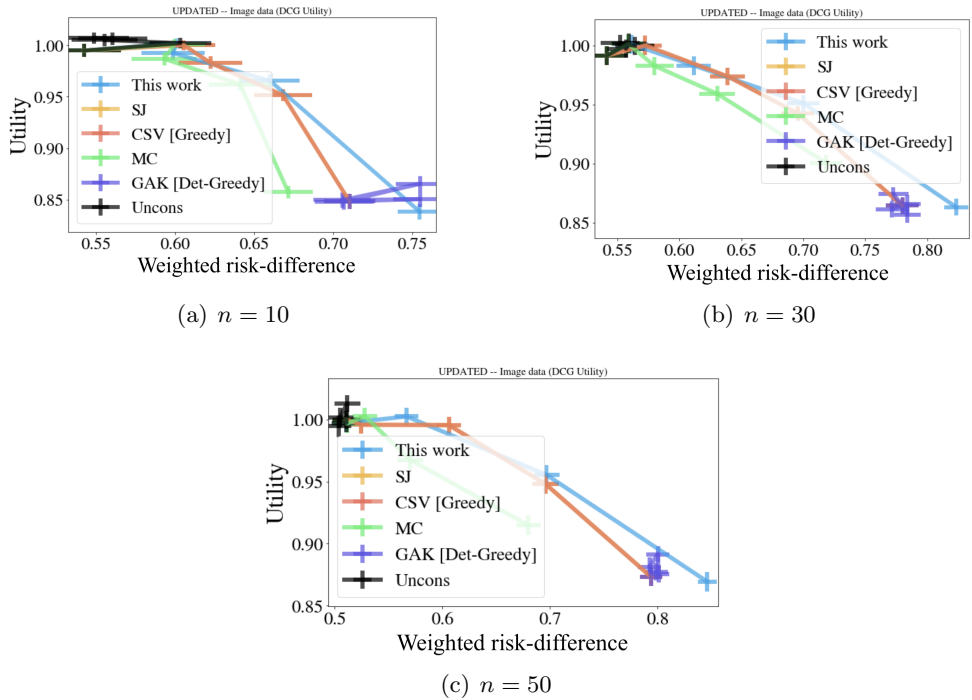

      \centering
      \subfigure[$n=10$]{
        \includegraphics[width=0.40\linewidth, trim={0cm 0cm 0cm 0cm},clip]{\folder/Image-1.png}
      }
      \subfigure[$n=30$]{
        \includegraphics[width=0.40\linewidth, trim={0cm 0cm 0cm 0cm},clip]{\folder/Image-2.png}
      }
      \subfigure[$n=50$]{
        \includegraphics[width=0.40\linewidth, trim={0cm 0cm 0cm 0cm},clip]{\folder/Image-3.png}
      }
      \caption{
      Simulation on image data with different values of $n$.
      The details appear in \cref{sec:vary_n}.
      }
      \label{fig:vary_n_image}
    \end{figure}

    \begin{figure}[h!]
      \centering
      \subfigure[$n=10$]{
        \begin{tikzpicture}
          \node (image) at (0,-0.17) {\includegraphics[width=0.40\linewidth, trim={0cm 0cm 0cm 0cm},clip]{\folder/name-1.png}};
          \draw[draw=white, fill=white] (-2.15,1.875) rectangle ++(5,0.55);
        \end{tikzpicture}
      }
      \subfigure[$n=30$]{
        \begin{tikzpicture}
          \node (image) at (0,-0.17) {\includegraphics[width=0.40\linewidth, trim={0cm 0cm 0cm 0cm},clip]{\folder/name-2.png}};
          \draw[draw=white, fill=white] (-2.15,1.875) rectangle ++(5,0.55);
        \end{tikzpicture}
      }
      \subfigure[$n=50$]{
        \begin{tikzpicture}
          \node (image) at (0,-0.17) {\includegraphics[width=0.40\linewidth, trim={0cm 0cm 0cm 0cm},clip]{\folder/name-3.png}};
          \draw[draw=white, fill=white] (-2.15,1.875) rectangle ++(5,0.55);
        \end{tikzpicture}
      }
      \caption{
      Simulation on real-world name data with different values of $n$.
      The details appear in \cref{sec:vary_n}.
      }
        \label{fig:vary_n_name}
    \end{figure}

    \paragraph{Empirical Results with Real-World Name Data and Overlapping Groups.}\label{sec:overlapping_grps}
        We present a variant of the simulation in \cref{fig:simulation_intersectional} that considers four overlapping groups: The sets of all women players, all male players, all non-White players, and all White players.
        In contrast, the simulation in \cref{fig:simulation_intersectional} uses four disjoint groups: The sets of non-White non-men players, White non-men players, non-White men players, and White men players.

        \noindent {\em Setup.}
        The same setup as the simulation in \cref{fig:simulation_intersectional}.
        The only difference is in estimating $\hP$:
        For each $i$, we estimate $\hP$ as:
        \begin{align*}
            \hP_{i, \text{women}} &= p_{\text{women}}(i),
            \quad & \hP_{i, \text{men}} &= 1-p_{\text{women}}(i),\\
            \hP_{i, \text{non-white}} &= p_{\text{non-white}}(i),
            \quad & \hP_{i, \text{white}} &= 1-p_{\text{non-white}}(i).
        \end{align*}
        Where $p_{\text{women}}(i)$ and $p_{\text{non-white}}(i)$ are values output by Genderize API and EthniColr Library that estimate the probability that player $i$ is labeled as a women and non-white respectively.
        (\csv{} and \detgreedy{} require protected groups to be disjoint and, hence, are not applicable to this simulation.)

        \smallskip
        \noindent {\em Observations.}
        \cref{fig:overlapping} plots RD and utilities (NDCG) averaged over 200 iterations.
        The results are similar to the corresponding simulation on the same data with disjoint groups.
        In particular, compared to other baselines, \ouralgo{} achieves the highest \rd{}. The maximum \rd{} of \ouralgo{} in this simulation is 0.64 compared to 0.67 in \cref{fig:simulation_intersectional}.
        \sj{} achieves the next highest \rd{} followed by \mc{} as in \cref{fig:simulation_intersectional}.

      \renewcommand{\folder}{./figs/overlapping-grps}
      \begin{figure}[t!]
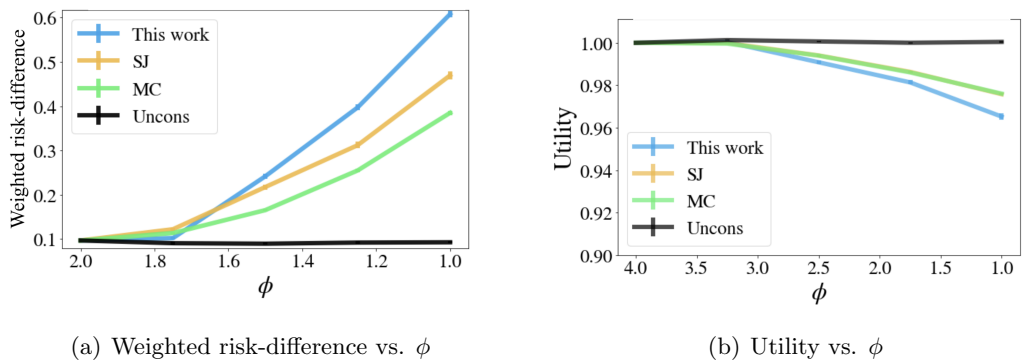
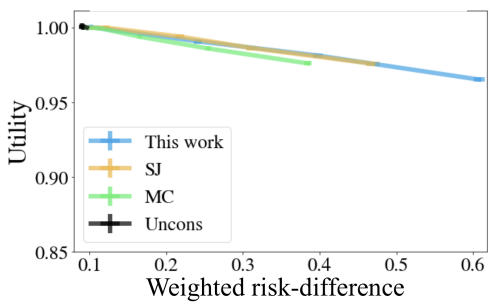

      \centering
      \subfigure[Weighted risk-difference vs. $\phi$]{
        \begin{tikzpicture}
          \node (image) at (0,-0.17) {\includegraphics[width=0.40\linewidth, trim={0cm 0cm 0cm 0cm},clip]{\folder/file1.png}};
          \draw[draw=white, fill=white] (-2.15,1.675) rectangle ++(5,0.25);
        \end{tikzpicture}
      }
      \subfigure[Utility vs. $\phi$]{
        \begin{tikzpicture}
          \node (image) at (0,-0.17) {\includegraphics[width=0.40\linewidth, trim={0cm 0cm 0cm 0cm},clip]{\folder/file2.png}};
          \draw[draw=white, fill=white] (-2.15,1.6) rectangle ++(5,0.25);
        \end{tikzpicture}
      }
      \subfigure[Weighted risk-difference vs. $\phi$]{
        \begin{tikzpicture}
          \node (image) at (0,-0.17) {\includegraphics[width=0.40\linewidth, trim={0cm 0cm 0cm 0cm},clip]{\folder/file3.png}};
          \draw[draw=white, fill=white] (-2.15,1.65) rectangle ++(5,0.25);
        \end{tikzpicture}
      }
      \caption{
      Simulation with the real-world name data and overlapping groups.
      The details appear in \cref{sec:overlapping_grps}.
      }
      \label{fig:overlapping}
    \end{figure}

    \renewcommand{\folder}{./figs/syn-rd}
    \begin{figure}[h!]
      \centering
      \begin{tikzpicture}[scale=1.5]
        \tikzmath{\s = 0.9;}
        \tikzmath{\mvx = 0;}
        \tikzmath{\mvy = 0;}
        \node (image) at (0,-0.17*\s) {\includegraphics[width=0.5\linewidth, trim={0cm 0cm 1.4cm 0.9cm},clip]{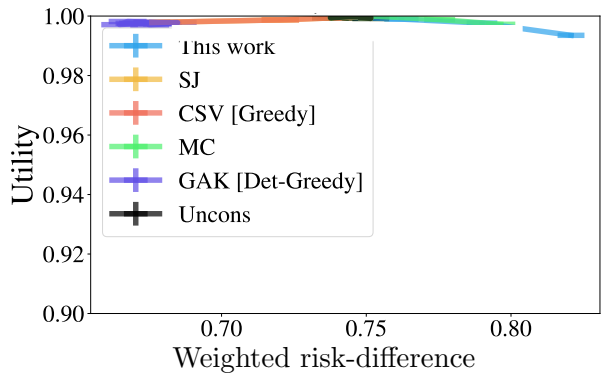}};
        \draw[draw=white, fill=white] (-2*\s+0.5, 3.3*5/8*\s-0.66*\s-0.17*\s+0.35-0.4) rectangle ++(4*\s-0.5,0.15*\s);
        \draw[draw=white, fill=white] (+\mvx-2*\s+0.35,1.2325*\s+\mvy+0.2) rectangle ++(4*\s,0.15*\s+0.14);
        \node[rotate=0, fill=white] at (+\mvx+0.05*\s, -3.3*5/8*\s+0.05*\s-0.255+\mvy-0.2+0.5)  {\white{||||}Weighted risk-difference\white{||||}};
      \end{tikzpicture}
      \caption{
          {\em {Synthetic Data: Nonuniform Error Rate.}}
          This simulation considers synthetic data where imputed socially-salient attributes have a higher false-discovery rate for one group compared to the other.
          We vary the fairness constraint from $\phi$ from $2$ (less fair) to $1$ (more fair) and observe the weighted risk-difference (weighted risk-difference) of different algorithms.
          The $y$-axis plots utility and $x$-axis shows weighted risk-difference ({\em Note that the values decrease toward the right}).
          Error-bars denote the error of the mean.
      }
      \label{fig:fig1bskjkj}
    \end{figure}

    \renewcommand{\folder}{./figs/image-rd}
    \begin{figure}[h!]
      \centering
      \begin{tikzpicture}[scale=1.5]
        \tikzmath{\s = 0.9;}
        \tikzmath{\mvx = 0;}
        \tikzmath{\mvy = 0;}
        \node (image) at (0,-0.17*\s) {\includegraphics[width=0.5\linewidth, trim={0cm 0cm 1.4cm 0.9cm},clip]{\folder-mc.pdf}};
        \draw[draw=white, fill=white] (-2*\s+0.5, 3.3*5/8*\s-0.66*\s-0.17*\s+0.35-0.38) rectangle ++(4*\s-0.5,0.15*\s);
        \draw[draw=white, fill=white] (-2*\s+0.35,1.2325*\s+0.2) rectangle ++(4*\s,0.15*\s+0.14);
        \node[rotate=0, fill=white] at (0.2+0.05*\s, 0.15-3.3*5/8*\s-0.1*\s+0.15*\s-0.16)  {\white{|||}$\phi$ \white{|||}};
      \end{tikzpicture}
      \caption{
          {\em Real-World Image Data.}
          This simulation considers images-search results which are known to overrepresent the stereotypical gender~\cite{KayMM15}.
          Given relevant {\em non-gender labeled} images and their utilities, our goal is to generate a high-utility gender-balanced ranking.
          We estimate $P$ using an off-the-shelf ML-classifier and vary $\phi$ from $p=2$ (less fair) to $1$ (more fair).
          In the first subfigure, the $y$-axis plots weighted risk-difference and $x$-axis shows $\phi$. ({\em Note that the values decrease toward the right}.)
          Error bars show the error of the mean.
      }
      \label{fig:fig2b}
    \end{figure}

    \renewcommand{\folder}{./figs/chess-rd}
    \begin{figure}[h!]
      \centering
      \begin{tikzpicture}[scale=1.5]
        \tikzmath{\s = 0.9;}
        \node (image) at (0,-0.17*\s) {\includegraphics[width=0.5\linewidth, trim={0cm 0cm 1.5cm 0.6cm},clip]{\folder-um.pdf}};
        \node[rotate=0, fill=white] at (0.05*\s+0.30, -3.3*5/8*\s+0.05*\s-0.05)  {\white{||||}Weighted risk-difference\white{||||}};
        \draw[draw=white, fill=white] (-2*\s+0.35,1.2325*\s+0.15) rectangle ++(4*\s,0.15*\s+0.14);
      \end{tikzpicture}
      \caption{
      {\em {Real-World Name Data: Intersectional Attributes.}}
      This simulation considers two socially-salient attributes, gender and race, and our goal is to ensure equal representation across the four {\em intersectional} socially-salient groups (non-White non-men, White non-men, non-White men, and White men).
      We estimate $P$ from the full names using public APIs and libraries.
      We vary $\phi$ from $p=4$ (less fair) to $1$ (more fair) and observe weighted risk-difference of all algorithms.
      The $y$-axis plots utility and $x$-axis shows weighted risk-difference. %
      ({\em Note that the values decrease toward the right}.)
      Error bars represent the error of the mean.
      }
      \label{fig:fig6}
    \end{figure}

    \renewcommand{\folder}{./figs/small-iters}
    \begin{figure}[h!]
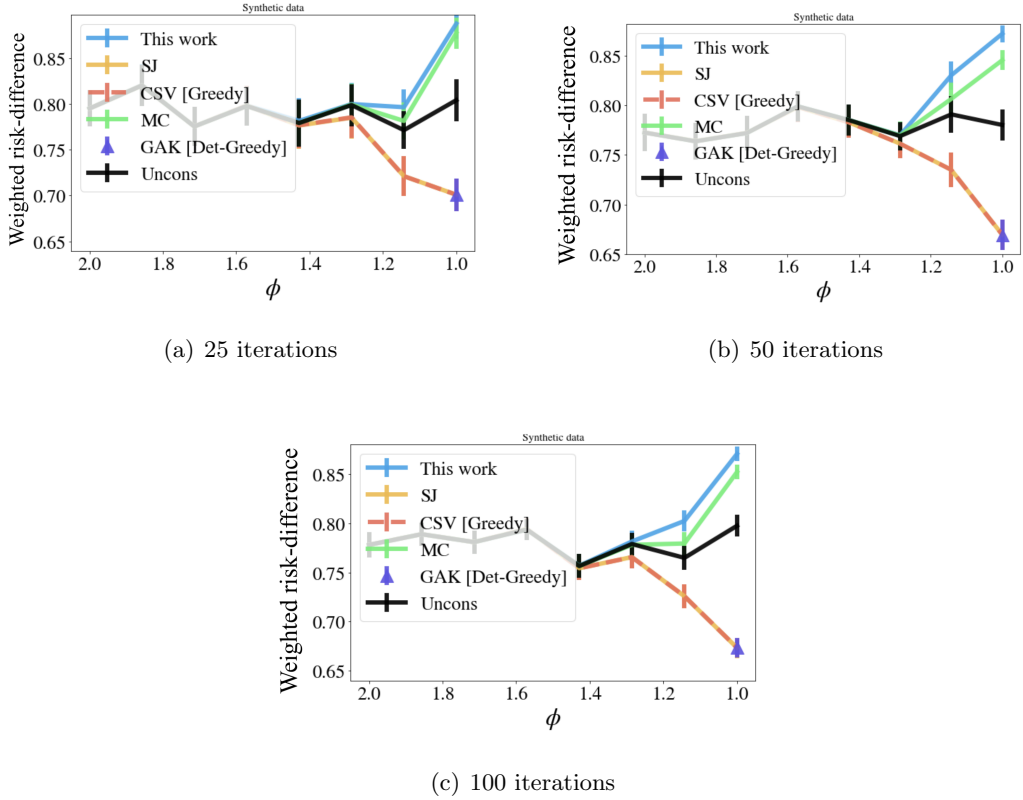

      \centering
      \subfigure[25 iterations]{
        \begin{tikzpicture}
          \node (image) at (0,-0.17) {{\includegraphics[width=0.40\linewidth, trim={0cm 0cm 0cm 0cm},clip]{\folder/ITER=25-syn.png}}};
          \draw[draw=white, fill=white] (-2.15,1.875) rectangle ++(5,0.55);
        \end{tikzpicture}
      }
      \subfigure[50 iterations]{
        \begin{tikzpicture}
          \node (image) at (0,-0.17) {\includegraphics[width=0.40\linewidth, trim={0cm 0cm 0cm 0cm},clip]{\folder/ITER=50-syn.png}};
          \draw[draw=white, fill=white] (-2.15,1.875) rectangle ++(5,0.55);
        \end{tikzpicture}
      }
      \subfigure[100 iterations]{
        \begin{tikzpicture}
          \node (image) at (0,-0.17) {\includegraphics[width=0.40\linewidth, trim={0cm 0cm 0cm 0cm},clip]{\folder/ITER=100-syn.png}};
          \draw[draw=white, fill=white] (-2.15,1.875) rectangle ++(5,0.55);
        \end{tikzpicture}
      }
      \caption{
      Simulations on synthetic data from \cref{sec:empirical_results} with 25, 50, and 100 iterations.
      The details appear in \cref{sec:add_plot}.
      }
      \label{fig:small_iter_syn}
    \end{figure}

    \begin{figure}[h!]
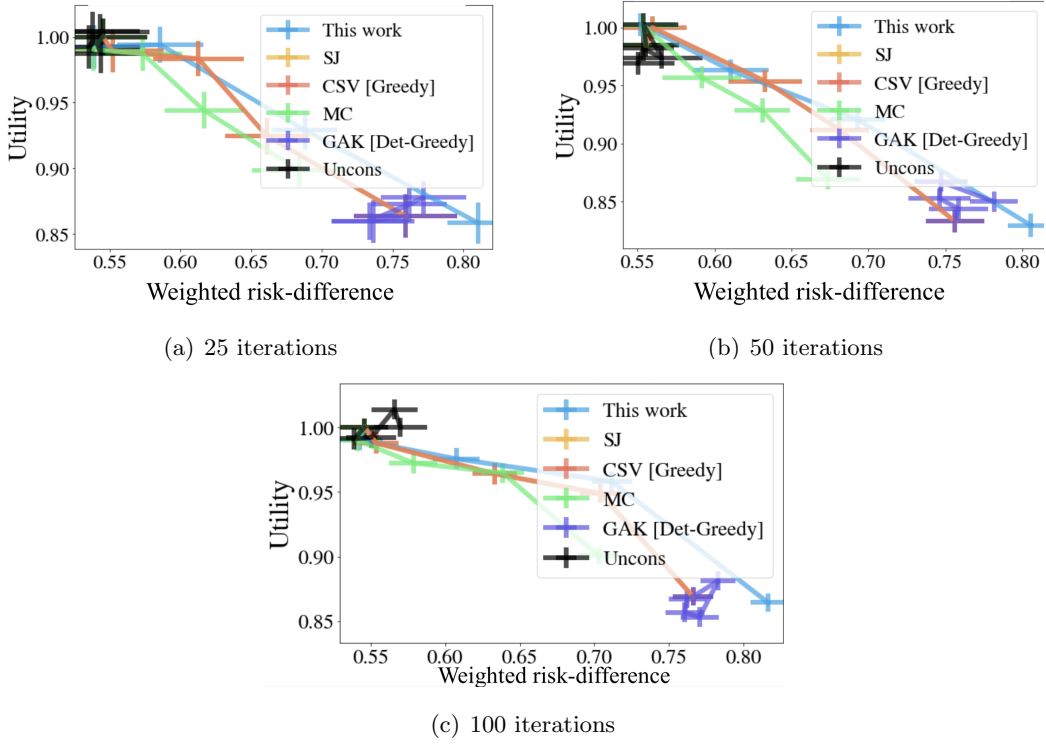

      \centering
      \vspace{-15.0mm}
      \subfigure[25 iterations]{
        \begin{tikzpicture}
          \node (image) at (0,-0.17) {\includegraphics[width=0.40\linewidth, trim={0.5cm 0cm 2.5cm 0.5cm},clip]{\folder/iter25-image.png}};
          \draw[draw=white, fill=white] (-2.15,1.67) rectangle ++(5,0.3);
        \end{tikzpicture}
      }
      \subfigure[50 iterations]{
        \begin{tikzpicture}
          \node (image) at (0,-0.17) {\includegraphics[width=0.40\linewidth, trim={0.5cm 0cm 2.2cm 0.5cm},clip]{\folder/iter50-image.png}};
          \draw[draw=white, fill=white] (-2.15,1.75) rectangle ++(5,0.3);
        \end{tikzpicture}
      }
      \par \vspace{-5mm}
      \subfigure[100 iterations]{
        \begin{tikzpicture}
          \node (image) at (0,-0.17) {\includegraphics[width=0.42\linewidth, trim={0.5cm 0cm 2.2cm 0cm},clip]{\folder/iter100-image.png}};
          \draw[draw=white, fill=white] (-2.15,1.72) rectangle ++(5,0.3);
        \end{tikzpicture}
      }
      \caption{
      Simulations on image data from \cref{sec:empirical_results} with 25, 50, and 100 iterations.
      The details appear in \cref{sec:add_plot}.
      }
        \label{fig:small_iter_image}
      \vspace{-2mm}
    \end{figure}

    \begin{figure}[h!]
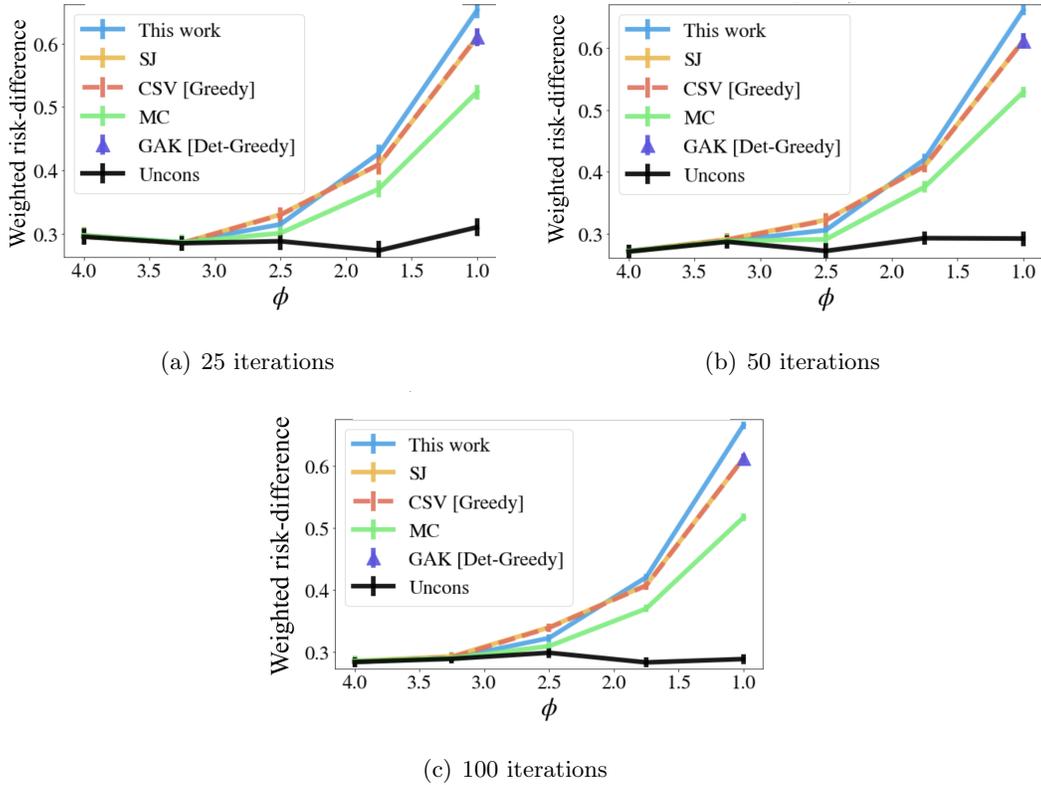

      \centering
      \vspace{-2mm}
      \subfigure[25 iterations]{
        \begin{tikzpicture};
          \node (image) at (0,-0.17) {\includegraphics[width=0.40\linewidth, trim={0cm 0cm 1.85cm 0cm},clip]{\folder/ITER=25-name.png}};
          \draw[draw=white, fill=white] (-2.15,1.7) rectangle ++(5,0.3);
        \end{tikzpicture}
        \vspace{-5mm}
      }
      \subfigure[50 iterations]{
        \begin{tikzpicture};
          \node (image) at (0,-0.17) {\includegraphics[width=0.40\linewidth, trim={0cm 0cm 1.2cm 0cm},clip]{\folder/ITER=50-name.png}};
          \draw[draw=white, fill=white] (-2.15,1.85) rectangle ++(5,0.3);
        \end{tikzpicture}
        \vspace{-5mm}
      }
      \par \vspace{-3mm}
      \subfigure[100 iterations]{
          \begin{tikzpicture}
            \node (image) at (0,-0.17) {\includegraphics[width=0.40\linewidth, trim={0cm 0cm 1.3cm 0cm},clip]{\folder/ITER=100-name.png}};
            \draw[draw=white, fill=white] (-2.15,1.72) rectangle ++(5,0.3);
          \end{tikzpicture}
          \vspace{-5mm}
      }
      \vspace{-5mm}
      \caption{
        Simulations on real-world name data from \cref{sec:empirical_results} with 25, 50, and 100 iterations.
        The details appear in \cref{sec:add_plot}.
      }
      \vspace{-5mm}
      \label{fig:small_iter_name}
    \end{figure}

    \subsection{Additional Empirical Results}\label{sec:selection_lift}
    \subsubsection{Empirical Results with Weighted Selection-Lift}

    In this section, we present empirical results with the weighted selection-lift fairness metric (\cref{fig:fig7,fig:fig8,fig:fig9}).
    Weighted selection-lift is a position-weighted version of the standard selection-difference metric.
    Like weighted risk-difference, it also measures the extent to which a ranking violates equal representation.
    The weighted selection-lift of a ranking $R$ is:
    \begin{align*}
      { \frac1Z\sum_{\scriptsize k=5,10,\dots} \frac{1}{\log{k}}
      \min\limits_{\ell,q\in [p]}\abs{
      \frac{\sum_{\substack{i\in G_\ell,\ j\in [k]\white{,}}} \hspace{-1mm}  R_{ij}}
      {\sum_{\substack{i\in G_q,\  j\in [k]\white{,}}} \hspace{-1mm}  R_{ij} }
      },}
    \end{align*}
    Where $G$ denotes the ground-truth protected groups and $Z$ is a constant so that $\rd$ has range $[0,1]$.
    Here, a value of $1$ is most fair and $0$ is least fair.

    \renewcommand{\folder}{./figs/syn-sl}
    \begin{figure}[h!]
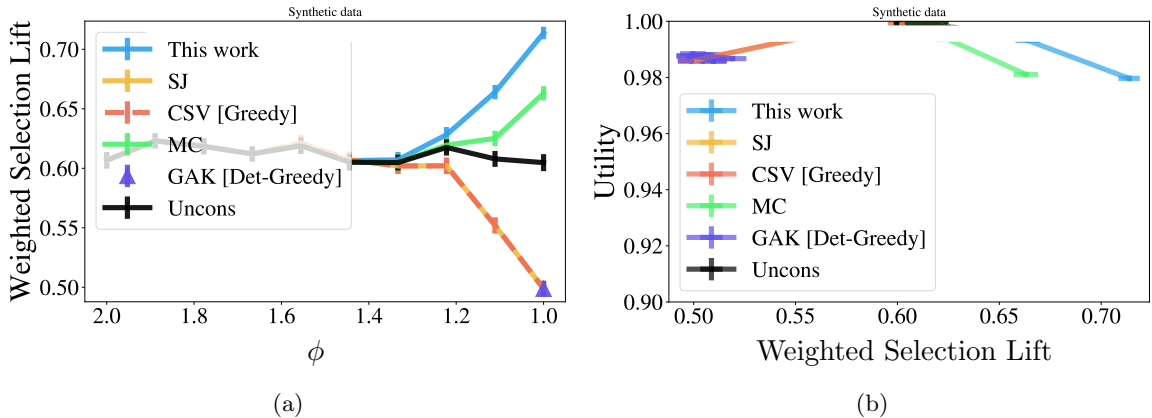

      \centering
      \subfigure[\label{7a}]{
      {\begin{tikzpicture}[scale=1.5]
        \tikzmath{\s = 0.9;}
        \node (image) at (0,-0.17*\s) {\includegraphics[width=0.46\linewidth, trim={0cm 0cm 2cm 0.9cm},clip]{\folder-mc.pdf}};
        \draw[draw=white, fill=white] (-2*\s+0.5, 3.3*5/8*\s-0.66*\s-0.17*\s+0.35-0.44) rectangle ++(4*\s-0.5,0.15*\s);
        \node[rotate=0, fill=white] at (0.2+0.05*\s, 0.15-3.3*5/8*\s-0.1*\s+0.15*\s-0.08)  {\white{|||}$\phi$ \white{|||}};
      \end{tikzpicture}}
      }
      \hspace{-6mm}
      \subfigure[\label{7b}]{
      {\begin{tikzpicture}[scale=1.5]
        \tikzmath{\s = 0.9;}
        \node (image) at (0,-0.17*\s) {\includegraphics[width=0.46\linewidth, trim={0cm 0cm 2cm 0.7cm},clip]{\folder-um.pdf}};
        \draw[draw=white, fill=white] (-2*\s+0.5, 3.3*5/8*\s-0.66*\s-0.17*\s+0.35-0.38-0.08) rectangle ++(4*\s-0.6,0.15*\s);
        \node[rotate=0, fill=white] at (0.2+0.05*\s, 0.15-3.3*5/8*\s-0.1*\s+0.15*\s-0.1)  {\white{|||}Weighted Selection Lift\white{|||}};
      \end{tikzpicture}}
      }
      \caption{
      {\em {Synthetic Data (Weighted Selection Lift): Nonuniform Error Rate.}}
      This simulation considers synthetic data where imputed socially-salient attributes have a higher false-discovery rate for one group compared to the other.
      We vary the fairness constraint from $\phi$ from $2$ (less fair) to $1$ (more fair) and observe the weighted risk-difference (weighted risk-difference) of different algorithms.
      In the first sub-figure, the $y$-axis plots weighted selection-lift and $x$-axis shows $\phi$.
      In the second sub-figure, the $y$-axis plots utility and $x$-axis shows weighted selection-lift.
      Error bars represent the error of the mean.
      }
      \label{fig:fig7}
    \end{figure}

    \renewcommand{\folder}{./figs/image-sl}
    \begin{figure}[h!]
      \centering
      \subfigure[\label{8a}]{
      {\begin{tikzpicture}[scale=1.5]
        \tikzmath{\s = 0.9;}
        \node (image) at (0,-0.17*\s) {\includegraphics[width=0.46\linewidth, trim={0cm 0cm 2cm 0.9cm},clip]{\folder-mc.pdf}};
        \draw[draw=white, fill=white] (-2*\s+0.5, 3.3*5/8*\s-0.66*\s-0.17*\s+0.35-0.24) rectangle ++(4*\s-0.5,0.15*\s);
        \node[rotate=0, fill=white] at (0.2+0.05*\s, 0.15-3.3*5/8*\s-0.1*\s+0.15*\s-0.08)  {\white{|||}$\phi$ \white{|||}};
      \end{tikzpicture}}
      }
      \hspace{-6mm}
      \subfigure[\label{8b}]{
      {\begin{tikzpicture}[scale=1.5]
        \tikzmath{\s = 0.9;}
        \node (image) at (0,-0.17*\s) {\includegraphics[width=0.46\linewidth, trim={0cm 0cm 2cm 0.7cm},clip]{\folder-um.pdf}};
        \draw[draw=white, fill=white] (-2*\s+0.5, 3.3*5/8*\s-0.66*\s-0.17*\s+0.1) rectangle ++(4*\s-0.6,0.15*\s);
        \node[rotate=0, fill=white] at (0.2+0.05*\s, 0.15-3.3*5/8*\s-0.1*\s+0.15*\s-0.1)  {\white{|||}Weighted Selection Lift\white{|||}};
      \end{tikzpicture}}
      }
      \caption{
      {\em Real-World Image Data.}
      This simulation considers images-search results which are known to overrepresent the stereotypical gender~\cite{KayMM15}.
      Given relevant {\em non-gender labeled} images and their utilities, our goal is to generate a high-utility gender-balanced ranking.
      We estimate $P$ using an off-the-shelf ML-classifier and vary $\phi$ from $p=2$ (less fair) to $1$ (more fair).
      In the first sub-figure, the $y$-axis plots weighted selection-lift and $x$-axis shows $\phi$.
      In the second sub-figure, the $y$-axis plots utility and $x$-axis shows weighted selection-lift.
      Error bars represent the error of the mean.
      }
      \label{fig:fig8}
    \end{figure}

    \renewcommand{\folder}{./figs/chess-sl}
    \begin{figure}[h!]
      \centering
      \subfigure[\label{9a}]{
      {\begin{tikzpicture}[scale=1.5]
        \tikzmath{\s = 0.9;}
        \tikzmath{\mvx = 0;}
        \tikzmath{\mvy = 0;}
        \node (image) at (0,-0.17*\s) {\includegraphics[width=0.46\linewidth, trim={0cm 0cm 2cm 0.9cm},clip]{\folder-mc.pdf}};
        \draw[draw=white, fill=white] (-2*\s+0.5, 3.3*5/8*\s-0.66*\s-0.17*\s+0.35-0.38+0.14) rectangle ++(4*\s-0.4,0.15*\s);
        \node[rotate=0, fill=white] at (0.2+0.05*\s, 0.15-3.3*5/8*\s-0.1*\s+0.15*\s-0.05)  {\white{|||}$\phi$ \white{|||}};
      \end{tikzpicture}}
      }
      \hspace{-6mm}
      \subfigure[\label{9b}]{
      {\begin{tikzpicture}[scale=1.5]
        \tikzmath{\s = 0.9;}
        \tikzmath{\mvx = 0;}
        \tikzmath{\mvy = 0;}
        \node (image) at (0,-0.17*\s) {\includegraphics[width=0.46\linewidth, trim={0cm 0cm 2cm 0.7cm},clip]{\folder-um.pdf}};
        \draw[draw=white, fill=white] (-2*\s+0.5, 3.3*5/8*\s-0.66*\s-0.17*\s+0.35-0.38+0.12) rectangle ++(4*\s-0.4,0.15*\s);
        \node[rotate=0, fill=white] at (0.2+0.05*\s, 0.15-3.3*5/8*\s-0.1*\s+0.15*\s-0.1)  {\white{|||}Weighted Selection Lift \white{|||}};
      \end{tikzpicture}}
      }
      \caption{
      {\em Real-World Name Data: Intersectional Attributes.}
        This simulation considers two socially-salient attributes, gender and race, and our goal is to ensure equal representation across the four {\em intersectional} socially-salient groups (non-White non-men, White non-men, non-White men, and White men).
        We estimate $P$ from the full names using public APIs and libraries.
        We vary $\phi$ from $p=4$ (less fair) to $1$ (more fair) and observe RD of all algorithms.
        In the first sub-figure, the $y$-axis plots weighted selection-lift and $x$-axis shows $\phi$.
        In the second sub-figure, the $y$-axis plots utility and $x$-axis shows weighted selection-lift.
        Error bars represent the error of the mean.
      }
      \label{fig:fig9}
    \end{figure}

    \subsubsection{Empirical Results with Varying Amount of Noise}\label{sec:vary_noise}
        In this section, we present a simulation which uses the randomized response mechanism to generate noisy protected attributes and compares the performance of algorithms at varying noise levels.

        \paragraph{\bf Data.}
            We use the Occupation images data \cite{celis2020cscw}.
            We refer the reader to \cref{sec:empirical_results} for a discussion of the data.

        \paragraph{\bf Setup.}
            We fix equal representation constraints ($\phi=1$) and consider the same protected groups as the simulation with the same data in \cref{sec:empirical_results}.
            We vary the noise level $0\leq \eta\leq \frac{1}{2}$.
            For each $\eta$, we construct noisy attributes by mislabeling true protected attribute with probability $\eta$.
            Here, $P$ is specified by $\eta$ as explained in \cref{rem:discussion_of_noise_model}.
            Specifically, if $N_1$ and $N_2$ be the noisy versions of true protected groups $G_1$ and $G_2$
            (corresponding to the ``flipped'' protected attributes), then we set:
            For each item $i \in N_1$,
            \begin{align*}
              \text{$\hP_{i1} = (1-\eta) \cdot \frac{\abs{G_1}}{\abs{N_1}}$ \quad and\quad $\hP_{i2} = 1 - \hP_{i1}$.}
          \end{align*}
          For items in $N_2$, replace $\hP_{i1}$, $\hP_{i2}$, $G_1$, and $N_1$ with $\hP_{i2}$, $\hP_{i1}$, $G_2$, and $N_2$.
          We do not have access to $G_1$ (and, hence, $\abs{G_1}$), and in the above expression we estimate $\abs{G_1}$ by $$\alpha_1\coloneqq \frac{(1-\eta) }{1-2\eta}\cdot \inparen{(1-\eta)\abs{N_1}-\eta\abs{N_2}}.$$
          This is because $\alpha_1$ can be shown to be concentrated around $\abs{G_1}$.

          Like the simulations in \cref{sec:empirical_results}, \csv{}, \detgreedy{}, and \sj{} are given the noisy attributes (as they require) and \ouralgo{} and \mc{} are given $\hP$ (computed above).

        \paragraph{\bf Observations.}
            See \cref{fig:vary_noise} for RD and utilities (NDCG) averaged over 100 iterations.
            We observe that for each $\eta\geq 0.1$, \ouralgo{}'s RD is $>$6.8\% better than any baseline (\cref{fig:vary_noise1}) and its utility is $<$3\% smaller than the baseline (\csv{}) with best RD (\cref{fig:vary_noise2}).
            At $\eta=0$, \ouralgo{} 3.3\% lower \rd{} than \csv{}, \detgreedy{}, and \sj{} and the same utility as them.

            Note that in \cref{fig:vary_noise1,fig:vary_noise2} the plots of \csv{}, \detgreedy{}, and \sj{} overlap.
            This is consistent with the other simulations where \csv{}, \detgreedy{}, and \sj{} have the same \rd{} and utility at $\phi=1$.

    \renewcommand{\folder}{./figs/vary-noise/}
    \begin{figure}[t!]
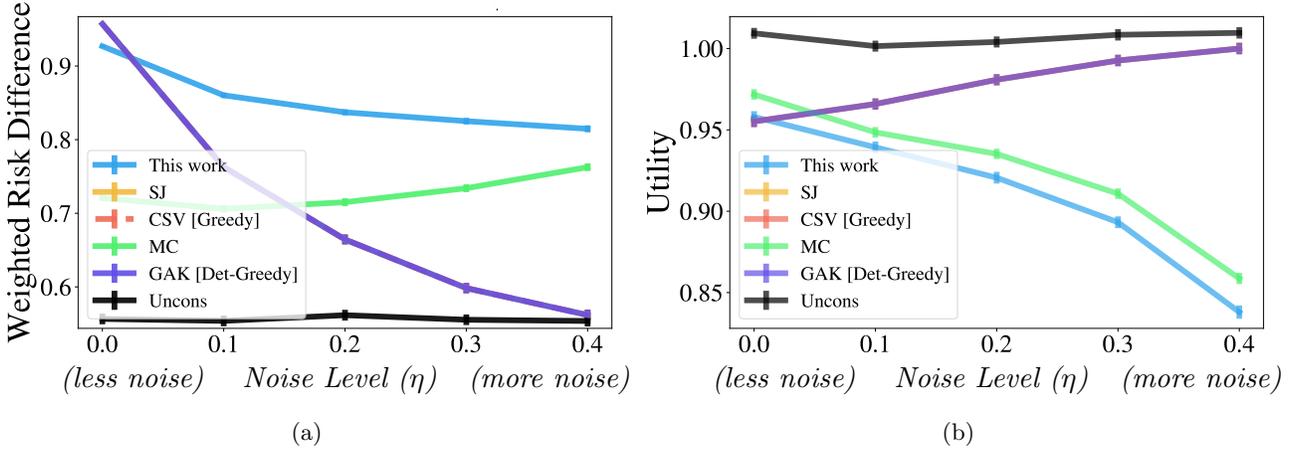

      \centering
      \hspace{-12mm}\subfigure[\label{fig:vary_noise1}]{
          \begin{tikzpicture}
            \tikzmath{\s = 1.1;}
            \tikzmath{\mvx = 0;}
            \tikzmath{\mvy = 1.9;}
            \node (image) at (0,-0.17*\s+\mvy) {{\includegraphics[width=0.50\linewidth, trim={0cm 0cm 2.5cm 0.9cm},clip]{\folder/mn.pdf}}};
            \draw[draw=white, fill=white] (+\mvx+-2*\s+0.35, 3.3*5/8*\s-0.66*\s-0.17*\s+0.45+\mvy+0.08-0.1+0.3) rectangle ++(4*\s,0.15*\s+0.2);
            \node[rotate=0,fill=white] at (0.5,0.15-3.3*5/8*\s-0.1*\s+0.15*\s-0.075-0.1+\mvy-0.13-0.4) {\textit{Noise Level ($\eta$)}};
            \node[rotate=0,fill=white] at (+\mvx+4*5/8*\s-0.15*\s+0.2+0.5,0.15-3.3*5/8*\s-0.1*\s+0.15*\s-0.075-0.1+\mvy-0.13-0.4) {\textit{(more noise)}};
            \node[rotate=0,fill=white] at (+\mvx+-3.75*5/8*\s+0.2*\s+0.6-0.5, 0.15*\s-3.3*5/8*\s-0.1*\s+0.15*\s-0.075-0.2+\mvy-0.05-0.4) {\textit{(less noise)}};
          \end{tikzpicture}
      }
      \hspace{-6mm}
      \subfigure[\label{fig:vary_noise2}]{
          \begin{tikzpicture}
            \tikzmath{\s = 1.1;}
            \tikzmath{\mvx = 0;}
            \tikzmath{\mvy = 1.9;}
            \node (image) at (0,-0.17*\s+\mvy) {{\includegraphics[width=0.50\linewidth, trim={0cm 0cm 2.5cm 0.9cm},clip]{\folder/un.pdf}}};
            \draw[draw=white, fill=white] (+\mvx+-2*\s+0.35, 3.3*5/8*\s-0.66*\s-0.17*\s+0.45+\mvy+0.08-0.1+0.3) rectangle ++(4*\s,0.15*\s+0.2);
            \node[rotate=0,fill=white] at (0.5,0.15-3.3*5/8*\s-0.1*\s+0.15*\s-0.075-0.1+\mvy-0.13-0.4) {\textit{Noise Level ($\eta$)}};
            \node[rotate=0,fill=white] at (+\mvx+4*5/8*\s-0.15*\s+0.2+0.5,0.15-3.3*5/8*\s-0.1*\s+0.15*\s-0.075-0.1+\mvy-0.13-0.4) {\textit{(more noise)}};
            \node[rotate=0,fill=white] at (+\mvx+-3.75*5/8*\s+0.2*\s+0.6-0.5, 0.15*\s-3.3*5/8*\s-0.1*\s+0.15*\s-0.075-0.2+\mvy-0.05-0.4) {\textit{(less noise)}};
          \end{tikzpicture}
      }
      \hspace{-12mm}
      \protect\rule{0ex}{5ex}
      \caption{
      \protect\rule{0ex}{5ex}
      {\em Simulation varying the amount of noise.}
      In this simulation, we use the  Occupation's images data \cite{celis2020cscw} and generate noisy protected attributes using the randomized response mechanism, with parameter $\eta$.
        We vary the amount of noise added from $\eta=0$ (no noise) to $\eta=0.4$ (large noise) and compare the performance of different algorithms.
        The $y$-axis plots RD and $x$-axis plots $\eta$.
        We present the key observations in the paragraph above the figure.
        Error-bars denote the error of the mean.
      }
      \label{fig:vary_noise}
    \end{figure}

    \subsubsection{Empirical Results with Proportional Representation Constraints}\label{sec:prop_repr}
        In this section, we present variants of the simulations in \cref{fig:different_fdr,fig:simulation_image,fig:simulation_intersectional} that use proportional representation fairness constraints.
        To measure the deviation of a ranking from proportional representation, we consider an  adaptation of weighted risk-difference metric, \prd{}.
        \prd{} of a ranking $R$ is
        \begin{align*}
           { 1-\frac1Z\sum_{ k=5,10,\dots} \frac{1}{\log{k}}
          \max\nolimits_{\ell,q\in [p]}\abs{{ \frac{n}{\abs{G_\ell}}\cdot \sum_{\substack{i\in G_\ell, j\in [k]\white{,}}}   R_{ij}  - \frac{n}{\abs{G_q}}\cdot\sum_{\substack{i\in G_q, j\in [k]\white{,}}}   R_{ij}}}.}
          \yesnum
        \end{align*}
        Where $G$ denotes the ground-truth protected groups and $Z$ is a constant so that $\rd$ has range $[0,1]$.
        Here, \prd{}$=1$ is most fair and \prd{}$=0$ is least fair.

        \paragraph{Setup.} The setup of the simulations is identical to the simulations in \cref{fig:different_fdr,fig:simulation_image,fig:simulation_intersectional} except that, given $\phi\geq 1$, the upper bounds are set to $U_{k\ell} \coloneqq \phi\cdot \smash{\frac{\abs{G_\ell}}{n}}\cdot k$ for each $k\in [n]$ and $\ell\in [p]$.

        \paragraph{Observations.}
        \cref{fig:prop_repr_syn} presents the values of \prd{} averaged over 50 iterations.
        We observe that,  relative to the baselines, \ouralgo{}'s performance is similar to \cref{fig:different_fdr,fig:simulation_image,fig:simulation_intersectional}.
        In particular, in all simulations, \ouralgo{} achieves a higher value of the fairness metric than any baselines, as in \cref{fig:different_fdr,fig:simulation_image,fig:simulation_intersectional}.
        Further, in the simulation with the real-world image data, \ouralgo{} has a better fairness-utility trade-off than all baselines, as in \cref{fig:simulation_image}.
        One difference is that, with the synthetic data, the value of the fairness metric achieved by \ouralgo{} can be non-monotonous in $\phi$, whereas it is increasing in $\phi$ in \cref{fig:different_fdr} (see \cref{fig:prop_repr_syn}).

    \renewcommand{\folder}{./figs/prop-repr}
    \begin{figure}[h!]
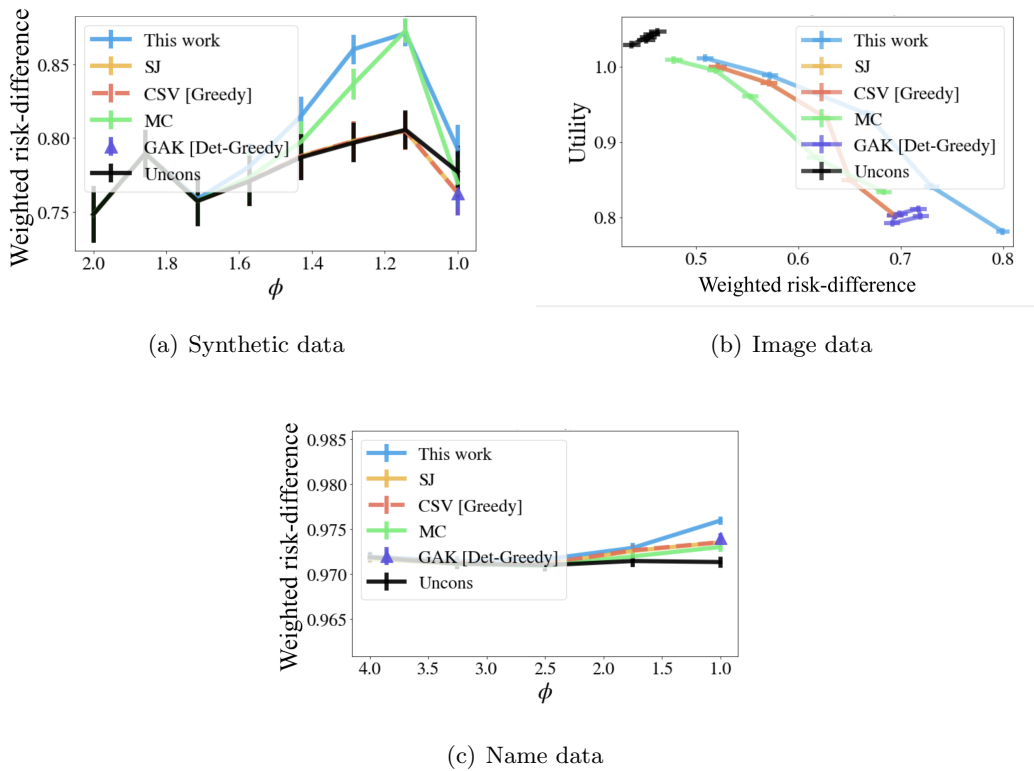

      \centering
      \subfigure[Synthetic data]{
        \begin{tikzpicture}[scale=1.5]
          \tikzmath{\s = 0.9;}
          \node[rotate=0] (image) at (0,-0.17*\s) {\includegraphics[width=0.39\linewidth, trim={0cm 0cm 0cm 0cm},clip]{\folder/prop-repr-syn.png}};
          \draw[draw=white, fill=white] (-1.6,1.05) rectangle ++(3,0.2);
        \end{tikzpicture}
      }
      \subfigure[Image data]{
         \begin{tikzpicture};
           \node (image) at (0,-0.17) {\includegraphics[width=0.41\linewidth, trim={0cm 0cm 0cm 0cm},clip]{\folder/prop-repr-image.png}};
           \draw[draw=white, fill=white] (-2.15,1.6) rectangle ++(5,0.55);
         \end{tikzpicture}
      }
      \subfigure[Name data]{
        \begin{tikzpicture};
          \node (image) at (0,-0.17) {\includegraphics[width=0.42\linewidth, trim={0cm 0cm 0cm 0cm},clip]{\folder/prop-repr-names.png}};
          \draw[draw=white, fill=white] (-2.15,1.65) rectangle ++(5,0.55);
        \end{tikzpicture}
      }
      \caption{
      Simulations with proportional representation constraints and variant of \rd{} for proportional representation constraints.
      The details appear in \cref{sec:prop_repr}.
      }
         \label{fig:prop_repr_syn}
    \end{figure}

    \subsubsection{Empirical Results with Varying False-Discovery Rates}\label{sec:vary_fdr}
        In this section, we present a variant of the simulation in \cref{fig:different_fdr}.
        This simulation varies the difference in false-discovery rates (FDRs) of the attributes inferred from $\hP$ for the groups.

        \renewcommand{\folder}{./figs/vary-FDR}
        \begin{figure}[h!]
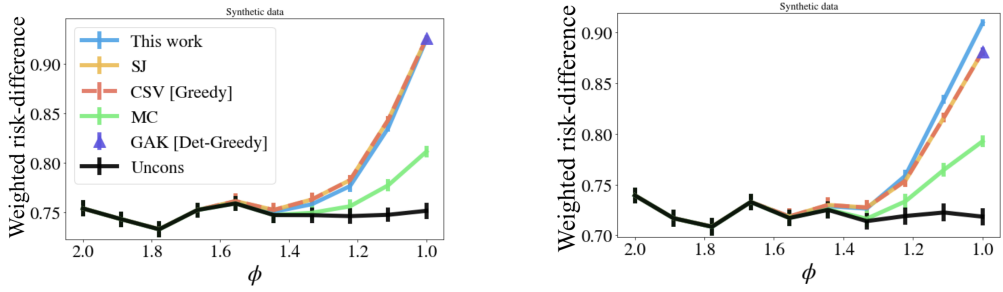
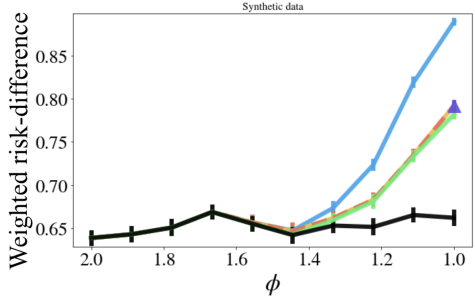

          \centering
          \vspace{-5mm}
          \subfigure[Minority group's FDR is 5\% smaller than the majority's FDR]{
             \begin{tikzpicture};
               \node (image) at (0,-0.17) {\includegraphics[width=0.4\linewidth, trim={0cm 0cm 0cm 0cm},clip]{\folder/fdr-diff-05.png}};
               \draw[draw=white, fill=white] (-2.15,1.8) rectangle ++(5,0.55);
             \end{tikzpicture}
          }
          \subfigure[Minority group's FDR is 10\% smaller than the majority's FDR]{
             \begin{tikzpicture};
               \node (image) at (0,-0.17) {\includegraphics[width=0.4\linewidth, trim={0cm 0cm 0cm 0cm},clip]{\folder/fdr-diff-10.png}};
               \draw[draw=white, fill=white] (-2.15,1.85) rectangle ++(5,0.55);
             \end{tikzpicture}
          }
          \par \vspace{-4mm}
          \subfigure[Minority group's FDR is 20\% smaller than the majority's FDR]{
             \begin{tikzpicture};
               \node (image) at (0,-0.17) {\includegraphics[width=0.4\linewidth, trim={0cm 0cm 0cm 0cm},clip]{\folder/fdr-diff-20.png}};
               \draw[draw=white, fill=white] (-2.15,1.95) rectangle ++(5,0.55);
             \end{tikzpicture}
             \vspace{-4mm}
          }
          \vspace{-2mm}
          \caption{
            Simulation on synthetic data where the minority group's FDR is $\Delta=5\%,10\%,20\%$ smaller than the majority's FDR.
            The details appear in \cref{sec:vary_fdr}.
          }
          \label{fig:vary_FDR}
        \end{figure}

        \paragraph{Synthetic Data.}
            We generate utilities $w_1,w_2,\dots,w_m$ by drawing $w_i$ is independently from the uniform distribution over $[0,1]$ for each $1\leq i\leq m$.
            Fix $\mu_1\coloneqq 1-\frac{1}{20}$, $\mu_2\coloneqq \frac{1}{2}-\frac{1}{20}$, $\sigma_1\coloneqq \frac{1}{50}$, and $\sigma_2\coloneqq \frac{1}{10}$.
            Given a parameter $0\leq \tau\leq 1$, controlling the FDRs of the two groups, we construct $P$ as follows:
            For each $i$, with probability 0.6, $P_{i1}$ is drawn from $$\mathcal{N}\inparen{(1-\tau)\cdot \mu_1+\tau, (1-\tau)\cdot \sigma_1}$$ and otherwise $P_{i1}$ is drawn $$\mathcal{N}\inparen{(1-\tau)\cdot \mu_2+\tau\cdot 0, (1-\tau)\cdot \sigma_2}.$$
            We set $P_{i2}\coloneqq 1-P_{i1}$ for each $i$.
            This ensures that, with high probability,
            \begin{align*}
                \abs{G_1}=0.6 n\pm o_n(1)
                \quad\text{and}\quad
                \abs{G_2}=0.4 n\pm o_n(1)
            \end{align*}
            Let $\text{FDR}_1(\tau)$ and $\text{FDR}_2(\tau)$ be the false-positive rates of attributes inferred from $P$ on groups $G_1$ and $G_2$ for a given $\tau$.
            Let $$\Delta(\tau)\coloneqq \text{FDR}_2(\tau)-\text{FDR}_2(\tau).$$
            We have $\Delta(0)=0.4$, $\Delta$ decreases with $\tau$, and $\Delta(1)=0$.

        \paragraph{Setup.}
            The setup is identical the simulation in \cref{fig:different_fdr} except that we use the above synthetic data.
            We consider three values $\tau_1,\tau_2,$ and $\tau_3$ of $\tau$ such that $\Delta(\tau_1)=20\%$, $\Delta(\tau_2)=20\%$, and $\Delta(\tau_3)=20\%$.
            (For comparison, the FDRs of the two groups differ by 30\% for the simulation in \cref{fig:different_fdr}.)

        \paragraph{Observations.}
            See \cref{fig:vary_FDR} for $\rd$ averaged over 50 iterations.
            We observe that the difference between the best $\rd{}$ of \ouralgo{} those of $\sj$ and \csv{} decreases with $\Delta$: At $\Delta=20\%, 10\%,5\%$, \ouralgo{}'s \rd{} is 12\%, 4\%, and 0\% higher than $\sj$'s and \csv{}'s \rd{} respectively.

    \subsubsection{Empirical Results with a Large Number of Groups}\label{sec:many_grps}
        In this section, we present simulations on synthetic datasets with 4, 6, 8, and 10 protected groups.

        For simplicity, all groups have equal sizes.
        In particular, we construct variants of the synthetic dataset in \cref{sec:empirical_results} so that the false-discovery rates of the attributes inferred from the matrix $\hP$ on the groups are spread at equal intervals in the interval $[10\%, 40\%]$.
        For instance, for $p=4$, the FDRs of the four groups are 10\%, 20\%, 30\%, and 40\% respectively.

        \paragraph{Synthetic Data.}
            We generate utilities $w_1,w_2,\dots,w_m$ by drawing $w_i$ is independently from the uniform distribution over $[0,1]$ for each $1\leq i\leq m$.
            Fix $\mu_1\coloneqq 1-\frac{1}{20}$, $\mu_2\coloneqq \frac{1}{2}+\frac{1}{20}$, $\sigma_1\coloneqq \frac{1}{50}$, and $\sigma_2\coloneqq \frac{1}{10}$.
            For each group $G_\ell$, there is a parameter $0\leq \tau_\ell\leq 1$, that controls the corresponding FDR.
            We construct $P$ as follows:
            For each $\ell$ and $i\in G_\ell$,
            \begin{itemize}
                \item $P_{i1}$ is iid from $\mathcal{N}\inparen{(1-\tau)\cdot \mu_1+\tau\cdot \mu_2, (1-\tau)\cdot \sigma_1 + \tau\cdot \sigma_2}.$
                \item $P_{iz}\coloneqq 1-P_{i1}$ where $z$ is drawn uniformly at random from $[p]\backslash \inbrace{\ell}$
                \item $P_{ij}=0$ for each $j\in [p]\backslash\inbrace{\ell,z}$.
            \end{itemize}
            Let $\Delta(\tau)$ be the FDR of $G_\ell$ when $\tau_\ell=\tau$.
            (By construction, this function is independent of $\ell$.)

        \paragraph{Setup.}
            The setup is identical the simulation in \cref{fig:different_fdr} except that we use the above synthetic data to generate $w$ and $P$.
            We vary $p\in \inbrace{4,6,8,10}$.
            For each $p$, we fix $\tau_\ell$ such that, for each $\ell\in [p]$, $\Delta(\tau_\ell)\coloneqq 10\%+\frac{\ell-1}{p-1}\cdot 30\%.$

       \paragraph{Observation.}
            \cref{fig:many_grps} plots $\rd$ averaged over 50 iterations.
            We observe that \ouralgo{} has a better or similar (within 1\%) utility and \rd{} compared to the best performing baseline at all values of $\phi.$

    \renewcommand{\folder}{./figs/multiple-groups}
    \begin{figure}[h!]
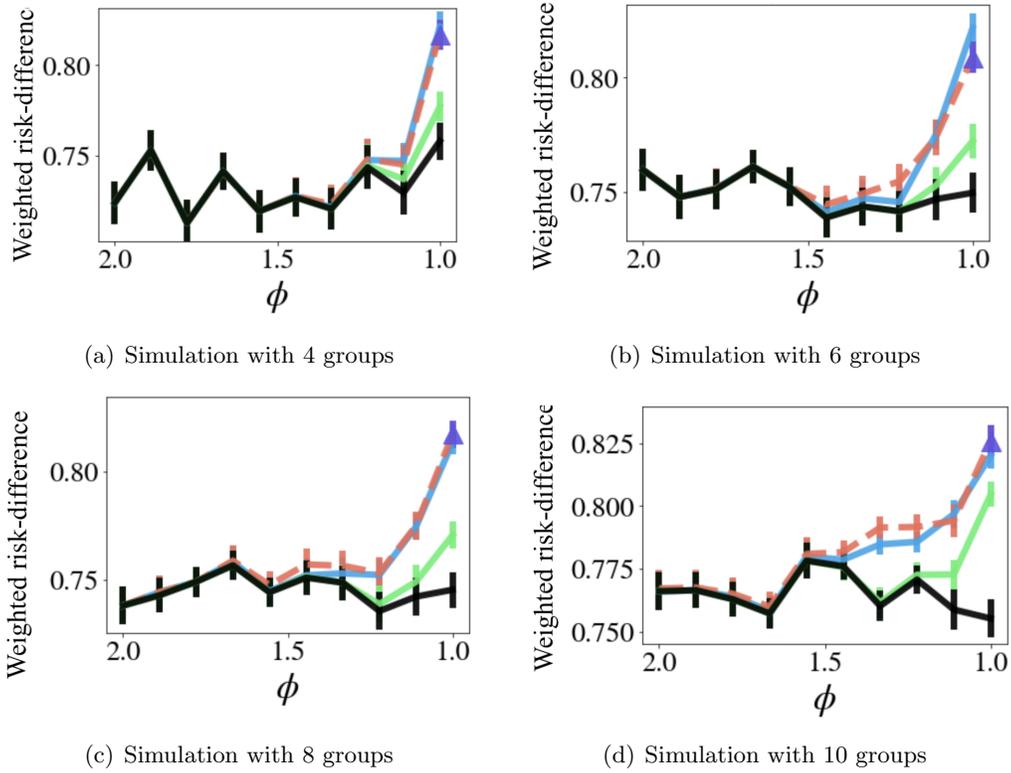

      \centering
      \subfigure[Simulation with 4 groups]{
          \includegraphics[width=0.4\linewidth, trim={0cm 0cm 1cm 0.65cm},clip]{\folder/4grps-50iter.png}
      }
      \subfigure[Simulation with 6 groups]{
          \includegraphics[width=0.4\linewidth, trim={0cm 0cm 1cm 0.9cm},clip]{\folder/6grps-50iter.png}
      }\par
      \subfigure[Simulation with 8 groups]{
          \includegraphics[width=0.4\linewidth, trim={0cm 0cm 1cm 0.75cm},clip]{\folder/8grps-50iter.png}
      }
      \subfigure[Simulation with 10 groups]{
        \includegraphics[width=0.4\linewidth, trim={0cm 0cm 1cm 0.75cm},clip]{\folder/10grps-50iter.png}
      }
      \caption{
      Simulation on synthetic data with four, six, eight, and ten groups.
      The details appear in \cref{sec:many_grps}.
      }
      \label{fig:many_grps}
    \end{figure}

\end{document}